\crefname{thm}{Thm.}{Thm.}
\crefname{cor}{Cor.}{Cor.}
\crefname{lem}{Lem.}{Lem.}
\crefname{prop}{Prop.}{Prop.}
\crefname{defn}{Def.}{Def.}
\newcommand{\gr}[1]{\textcolor{OliveGreen!85!black}{#1}}
\newcommand{\rr}[1]{\textcolor{RedOrange!85!black}{#1}}
\newcommand{\sigmoid}[1]{\operatorname{\sigma}\left(#1\right)}
\newcommand{\affine}[1]{\operatorname{hom}\left(#1\right)}
\newcommand{\convex}[1]{\operatorname{conv}\left(#1\right)}
\newcommand{\real}[1]{\operatorname{Re}\left(#1\right)}
\newcommand{\R}{\mathbb{R}}
\newcommand\T{{\mathpalette\raiseT\top}}
\newcommand\raiseT[2]{\raisebox{-.8ex}{$#1#2$}}
\newcommand{\vv}[1]{\mathbf{#1}}
\newcommand{\snorm}[1]{\left\lVert#1\right\rVert_2}
\newcommand{\argmax}{\,\arg\!\max\,}
\newcommand{\sign}[1]{\operatorname{sign}{\left(#1\right)}}
\newcommand{\rank}[1]{\operatorname{rank}{\left(#1\right)}}
\newcommand{\signrank}[1]{\operatorname{sign-rank}{\left(#1\right)}}
\newcommand{\act}[1]{\operatorname{act}{\left(#1\right)}}
\newcommand{\alt}[1]{\operatorname{alt}{\left(#1\right)}}
\newcommand{\ys}[1]{{{\scriptstyle#1}}}
\newcommand{\Act}[2]{{A}_{#1, #2}}
\newcommand{\Alt}[2]{{V}_{#1, #2}}
\newcommand{\grass}[2]{{\mathsf{Gr}}_{#1, #2}}
\newcommand{\pgrass}[2]{{\mathsf{Gr}}^{+}_{#1, #2}}
\newcommand{\dft}[2]{\bm{W}^{\mathsf{DFT}}_{#1, #2}}
\newcommand{\cyclic}[2]{\bm{\mathcal{C}}_{#1, #2}}
\newcommand{\argmaxable}[1]{\mathcal{A}\left(#1\right)}
\newcommand{\yy}{label assignment }
\newcommand{\tsv}[1]{
\scalebox{.7}{\begin{tikzpicture}[mycolour, scale=2, baseline, anchor=base]\sv{#1}{0}{0}{#1}{\large};\end{tikzpicture}}
}
\theoremstyle{plain}
\newtheorem{theorem}{Theorem}
\newtheorem{definition}{Definition}
\newtheorem*{lemma*}{Lemma}
\newtheorem{lemma}{Lemma}
\newcommand{\shortparagraph}[1]{\textbf{#1}}
\newcommand{\shortsubsection}[1]{\subsection{#1}}
\newcommand{\shortsection}[1]{\section{#1}}
\renewcommand{\shortparagraph}[1]{\paragraph{#1}}
\renewcommand{\shortsubsection}[1]{\subsection{#1}}
\renewcommand{\shortsection}[1]{\section{#1}}
\title{Taming the Sigmoid Bottleneck:\\ Provably Argmaxable Sparse Multi-Label Classification}
\author {
    Andreas Grivas,
    Antonio Vergari\textsuperscript{$\dagger$},
    Adam Lopez\textsuperscript{$\dagger$}
}
\begin{document}

\maketitle
\insert\footins{\noindent\footnotesize $\,\dagger$ Co-supervision.}

\begin{abstract}
    Sigmoid output layers are widely used in multi-label classification (MLC) tasks, in which multiple labels can be assigned to any input. In many practical MLC tasks, the number of possible labels is in the thousands, often exceeding the number of input features and resulting in a \emph{low-rank} output layer. In multi-class classification, it is known that such a low-rank output layer is a bottleneck that can result in \emph{unargmaxable} classes: classes which cannot be predicted for any input. 
    In this paper, we show that for MLC tasks, the analogous \emph{sigmoid bottleneck} results in exponentially many unargmaxable label combinations. We explain how to detect these unargmaxable outputs and demonstrate their presence in three widely used MLC datasets. We then show that they can be prevented in practice by introducing a Discrete Fourier Transform (DFT) output layer, which guarantees that all sparse label combinations with up to $k$ active labels are argmaxable. Our DFT layer trains faster and is more parameter efficient, matching the F1@k score of a sigmoid layer while using up to $50\%$ fewer trainable parameters. Our code is publicly available at \url{https://github.com/andreasgrv/sigmoid-bottleneck}.
\end{abstract}

\tikzset{
    define colours/.code n args={1}{
    \def\colorPallete{{"228,26,28", "", "77,175,74", "152,78,163", "255,127,0", "247,129,191"}}
    \definecolor{clr1}{RGB}{228,26,28};
    \definecolor{clr2}{RGB}{55,126,184};
    \definecolor{clr3}{RGB}{77,175,74};
    \definecolor{clr4}{RGB}{152,78,163};
    \definecolor{clr5}{RGB}{255,127,0};
    \definecolor{clr6}{RGB}{247,129,191}
  },
  mycolour/.style={
         define colours={1}
  } 
}

\newcommand{\sv}[5]{
    \StrLen{#1}[\svlen];
    \def \sp {.22};
    \def \xx {#2 - \svlen*.14 + .02};
    \def \yy {#3};
    \foreach \i in {1,...,\svlen}
    {
        \StrChar{#1}{\i}[\svsign];
        \node[align=right](#4) at (\xx+\i*\sp, \yy) {#5\textcolor{clr\i}{\contour{clr\i}{$\bm{\svsign}$}}};
    }
}

\newcommand{\ssv}[5]{
    \StrLen{#1}[\svlen];
    \def \sp {.22};
    \def \xx {#2 - \svlen*.14 + .02};
    \def \yy {#3};
    \foreach \i in {1,...,\svlen}
    {
        \StrChar{#1}{\i}[\svsign];
        \node[align=right](#4) at (\xx+\i*\sp, \yy) {#5\textcolor{black}{\contour{black}{$\bm{\svsign}$}}};
    }
}

\section{Introduction}
\label{sec:intro}

Sigmoid classifiers for Multi-Label Classification (MLC) are simple to implement:
just append a linear layer with sigmoid activations to your neural feature encoder of choice.
They are widely used in neural MLC with thousands of output labels; applications include clinical coding~\citep{mullenbach2018}, image classification~\citep{Baruch2020}, fine-grained entity typing~\citep{choi-etal-2018} and protein function prediction~\citep{Kulmanov2019}.
Moreover, they are the default for MLC in frameworks such as Scikit-learn~\citep{scikit-learn} and Keras~\citep{Paul2020}.
In this paper we highlight an overlooked weakness of this layer. If, as is common for computational efficiency, we make the number of features smaller than the number of labels, the result is a \textbf{Bottlenecked Sigmoid Layer (BSL)}, in which \emph{exponentially many
label combinations cannot be predicted irrespective of input}. We say that such outputs are \textbf{unargmaxable}~\citep{grivas-2022}. \cref{subfig:unargmaxable} illustrates how a BSL with two features and three labels must have unargmaxable label combinations.

\begin{figure}[!t]
\begin{subfigure}{.38\columnwidth}
\centering
    \resizebox{.98\textwidth}{!}{%
        \begin{tikzpicture}[mycolour,scale=2]
        \def \scale {10};
        \begin{scope}
            \clip(0, 0) circle (1.35);
            \node[](w1) at (1.00, 0.00){};
            \draw[clr1, thick] let \p{w1}=(w1) in (\scale * \y{w1}, -\scale * \x{w1}) -- (-\scale * \y{w1}, \scale * \x{w1});
            \node[](w2) at (0.50, 0.70){};
            \draw[clr2, thick] let \p{w2}=(w2) in (\scale * \y{w2}, -\scale * \x{w2}) -- (-\scale * \y{w2}, \scale * \x{w2});
            \node[](w4) at (-0.50, 0.50){};
            \draw[clr3, thick] let \p{w4}=(w4) in (\scale * \y{w4}, -\scale * \x{w4}) -- (-\scale * \y{w4}, \scale * \x{w4});
    
            \sv{---}{-0.41}{-1.00}{---}{\LARGE};
            \sv{--+}{-1.00}{-0.08}{--+}{\LARGE};
            \sv{-++}{-0.51}{1.00}{-++}{\LARGE};
            \sv{+--}{0.51}{-1.00}{+--}{\LARGE};
            \sv{++-}{1.00}{0.08}{++-}{\LARGE};
            \sv{+++}{0.41}{1.00}{+++}{\LARGE};
        \end{scope}
        \node[] at (0, -1.4) {};
        \end{tikzpicture}%
    }
    \caption{Unargmaxability: \tsv{+-+} and \tsv{-+-} cannot be predicted.}
    \label{subfig:unargmaxable}
\end{subfigure}\hfill
\begin{subfigure}{.55\columnwidth}
    \begin{tikzpicture}
        \tikzstyle{every node}=[font=\scriptsize]
        \def \yoff {-.5};
        \def \width {6.8em};
        
        \node[align=left, text width=\width] (l1) at (0, \yoff){Unspecified septicemia};
        \node[align=left, text width=\width, below of=l1, yshift=1.8em] (l2) {Lymphosarcoma and...};
        \node[align=left, text width=\width, below of=l2, yshift=1.8em] (l3) {Anemia, unspecified};
        \node[align=left, text width=\width, below of=l3, yshift=1.8em] (l4) {Influenza...};
        \node[align=left, text width=\width, below of=l4, yshift=1.8em] (l5) {Cellulitis and...};
        \node[align=left, text width=\width, below of=l5, yshift=1.8em] (l6) {Rash...};
        \node[align=left, text width=\width, below of=l6, yshift=1.8em] (l7) {Sepsis};
        \node[align=left, text width=\width, below of=l7, yshift=1.8em] (l8) {Viral pneumonia...};
        \node[align=left, text width=\width, below of=l8, yshift=1.8em] (l9) {\hspace{3.5em}$\vdots$};
        
        \node[align=center, draw, minimum width=3, minimum height=6em, left of=l5, xshift=-4.5em](doc) {Clinical\\ Document};

        \draw (l1.west) -- (doc) node[inner sep=0pt, outer sep=2pt, xshift=-.15em, fill=white, at start]{$\bm{+}$};
        \draw (l2.west) -- (doc) node[inner sep=0pt, outer sep=2pt, xshift=-.15em, fill=white, at start]{$\bm{+}$};
        \draw (l3.west) -- (doc) node[inner sep=0pt, outer sep=2pt, xshift=-.15em, fill=white, at start]{$\bm{+}$};
        \draw (l4.west) -- (doc) node[inner sep=0pt, outer sep=2pt, xshift=-.15em, fill=white, at start]{$\bm{+}$};
        \draw (l5.west) -- (doc) node[inner sep=0pt, outer sep=2pt, xshift=-.15em, fill=white, at start]{$\bm{+}$};
        \draw (l6.west) -- (doc) node[inner sep=0pt, outer sep=2pt, xshift=-.15em, fill=white, at start]{$\bm{+}$};
        \draw (l7.west) -- (doc) node[inner sep=0pt, outer sep=2pt, xshift=-.15em, fill=white, at start]{$\bm{+}$};
        \draw[dotted] (l8.west) -- (doc) node[inner sep=0pt, outer sep=2pt, xshift=-.15em, fill=white, at start]{$\bm{-}$};
        \draw[dotted] (l9.west) -- (doc) node[inner sep=0pt, outer sep=2pt, xshift=-.15em, fill=white, at start]{$\bm{-}$};
    \end{tikzpicture}
    \caption{Unargmaxable test example from MIMIC-III for BSL with $d=50$.}
    \label{subfig:unargmaxable-mimic}
\end{subfigure}
\caption{
When we have more labels ($n$) than features ($d$), some label combinations are unargmaxable, i.e. impossible to predict.
Left: in a $d=2$ feature space with $n=3$ classification hyperplanes through the origin, only 6 out of 8 label combinations can be predicted irrespective of how we orient the hyperplanes.\footnotemark~Right: a BSL trained on the MIMIC-III clinical MLC dataset with $d=50$ and $n=8921$ is unable to predict this label combination\footnotemark~which has the depicted $7$ active labels $(+)$ and the remaining ones are inactive $(-)$.
}    
\label{fig:problem}
\end{figure}
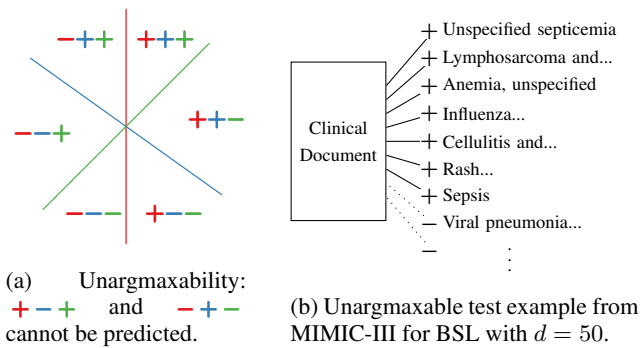
\addtocounter{footnote}{-1}
\footnotetext{Adding a bias term to the BSL allows the hyperplanes to have offsets and they will not necessarily meet at the origin. However, this cannot solve the problem: such a BSL is more restricted than increasing $d$ by one: we still only get 7 out of 8 label combinations.}
\addtocounter{footnote}{1}
\footnotetext{There are also unargmaxable test examples for $d=100$ and $d=200$, we chose this example as it had fewer active labels.}

But unargmaxable label combinations are only a problem if our application requires those combinations. As we show in this paper, they often do. For example, in the safety-critical application of clinical MLC (see~\cref{sec:exps}), unargmaxability can make it impossible to label a report with a specific combination of findings, as illustrated with a real example in \cref{subfig:unargmaxable-mimic}. This would be surprising and unacceptable to users of the system when such label combinations do indeed occur in data. Since BSLs are widely used,
it is critical for developers and users of a model to be aware of this problem and to be able to guarantee that all meaningful outputs for the task at hand are argmaxable.

Previous work has shown that bottlenecked output layers have restrictions on expressivity, but focused only on multi-class classification~\citep{yang2017breaking, Ganea2019} and not MLC. But for multi-class classification, the consequences are minor: classes can be unargmaxable in theory~\citep{demeter2020stolen}
but rarely are in practice~\citep{grivas-2022}. In MLC, we will show that exponentially many label combinations are unargmaxable, and as we have already seen in~\cref{subfig:unargmaxable-mimic}, meaningful outputs can be unargmaxable. While a BSL can in principle learn to represent any particular output, it comes with no guarantees.
And although we can obtain post hoc guarantees by verifying whether specific meaningful outputs are argmaxable, there may be too many outputs to check exhaustively.
To sidestep this limitation, we show how to construct an output layer that guarantees that meaningful outputs are argmaxable by construction. To do so, we provide guarantees for a superset of outputs: those with up to $k$ active labels, where we choose $k$ based on the statistics of the dataset. This is possible since for most MLC tasks $k$ is bounded~\citep{Jain2019}, either empirically (e.g. $k$=80 for MIMIC-III) or by construction (e.g. $k$=50 for BioASQ~\citep{Tsatsaronis2015}).

In summary, our contributions are: \textbf{i)}
We formalise the argmaxability problem for MLC and expose the limitations of BSLs which are widely used in practice (\cref{sec:mlc});
\textbf{ii)} We provide ways of verifying if a label combination is argmaxable for a model (\cref{sec:verify}) and show that for three widely used MLC datasets BSLs can have unargmaxable test set label combinations (\cref{sec:exps}).
\textbf{iii)}
We prove that this need not be the case; we can guarantee that any output with up to $k$ active labels is argmaxable by constraining the output layer parametrisation to a family of matrices. The Discrete Fourier Transform (DFT) matrix is in this family and we use it to parametrise our DFT layer, an efficient replacement output layer with such guarantees (\cref{sec:low-rank-dft}).
\textbf{iv)} Through experiments on three MLC datasets we show that our DFT layer guarantees that meaningful outputs are argmaxable while converging faster and being more parameter efficient than a BSL (\cref{sec:exps}).

\shortsection{Multi-label Classification}
\label{sec:mlc}

We consider a MLC model that predicts a complete label assignment $\vv{y} \in \{+, -\}^{n}$ for a label vocabulary of size $n$, and where each $\vv{y}_i \in \{+, -\}$ denotes if a single label is active (+) or inactive (-). 
Many neural MLC models for problems with large label vocabularies employ an output layer that is linear, e.g., for
fine-grained entity typing~\citep{choi-etal-2018}, protein function prediction~\citep{Kulmanov2019}, clinical coding~\citep{mullenbach2018} and multi-label image classification~\citep{Baruch2020}.

\shortparagraph{Bottlenecked Sigmoid Layers.} 
\label{sec:bsl}
A linear sigmoid layer takes as input a feature vector $\vv{x} \in \R^d$ and predicts $\vv{y}$ by assuming that all labels are independent given $\vv{x}$.
The idea is that a powerful encoder does the ``heavy lifting'' and projects inputs to meaningul embeddings in $\R^d$ such that they can be easily separated by $n$ different hyperplanes.
When $n$ is large, due to computational constraints it is popular to realise such a layer as a \textit{Bottlenecked Sigmoid Layer} (BSL) which is parametrised by a low-rank $\vv{W} \in \R^{n \times d}$ and associates with each label the probability $P(\vv{y}_i \mid \vv{x}) = \sigmoid{{\vv{w}^{(i)}}^\T\vv{x}}$ if $\vv{y}_i = +$ and $1 - \sigmoid{{\vv{w}^{(i)}}^\T\vv{x}}$ otherwise. 
Here,  $\sigma$ is the logistic sigmoid and 
 $\vv{w}^{(i)}$
is the weight vector of the $i$-th classifier (hyperplane), i.e., the $i^{th}$ row of $\vv{W}$. Note that all $\vv{w}^{(i)}$ see the same shared $\vv{x}$.
We focus on such a setup and discuss its limitations because it is the default in mainstream ML libraries such as scikit-learn~\citep{scikit-learn} and is largely used as a simple classifier~\citep{mullenbach2018,Baruch2020,Kulmanov2019}.
In the following, we will denote a whole multi-label classifier by the parametrization of its last layer, e.g., we will say ``a classifier $\vv{W}$'', as our analysis is agnostic to the feature encoder.

\shortparagraph{Argmaxable Label Assignments.}
Making a prediction with a BSL boils down to predicting every label independently by computing $\vv{y}^{*}_{i}=\argmax_{\vv{y}_i} P(\vv{y}_i \mid \vv{x})$.
This is equivalent to taking the sign of the logit of the $i$-th classifier, i.e., computing $\vv{y}_{i}^{*}=\sign{{\vv{w}^{(i)}}^\T\vv{x}}$ where $\sign{\vv{z}_i}=+$ if $\vv{z}_i > 0$ and $-$ if $\vv{z}_i < 0$.~\footnote{In the case $\vv{z}_i = 0$, we perturb $\vv{x}$ by a tiny $\bm\epsilon$ and recompute $\vv{z}$.}
Therefore, $\vv{y}^{*}=\sign{\vv{W}\vv{x}}.$
\begin{definition}
\label{def:argmax}
A label assignment $\vv{y}$ is \textbf{argmaxable} for a classifier $\vv{W}$ if there exists an input $\vv{x}$ for which thresholding the output probabilities using the argmax decision rule $\sign{P(\vv{y}_i = + \mid \vv{x}) - \frac{1}{2}}$ produces $\vv{y}_i$,\footnote{Due to monotonicity of sigmoid $\sigmoid{a} > \frac{1}{2} \iff a > 0$.} i.e. $\vv{y} \, \text{is argmaxable} \iff \exists \vv{x} : \sign{\vv{W}\vv{x}} = \vv{y}$.
\end{definition}

From a geometric perspective, we can interpret the $n$ rows of $\vv{W}$ as the normal vectors, $\vv{w}^{(i)}$, of $n$ hyperplanes that tesselate feature space into regions. 
We can identify each region by assigning it a sign vector which identifies on which side of each hyperplane it is (\cref{fig:problem}, left). 
From this view, a label assignment $\vv{y}$ is argmaxable if the halfspaces intersect in such a way that the corresponding region is formed. For example, the region \tsv{---} is formed as an intersection of the negative halfspaces.
For a classifier parameterised by a full rank $\vv{W}\in\R^{n\times n}$, all $2^n$ label combinations are argmaxable.
However, since $d \ll n$ in many real-world applications, as discussed in \cref{sec:intro}, $\vv{W}$ will be low rank, and therefore \textit{only a (small) subset of label combinations is argmaxable}.

\begin{figure*}[t]
    \centering
    \begin{minipage}[b]{.58\textwidth}
    \centering
    \includegraphics[width=.69\columnwidth]{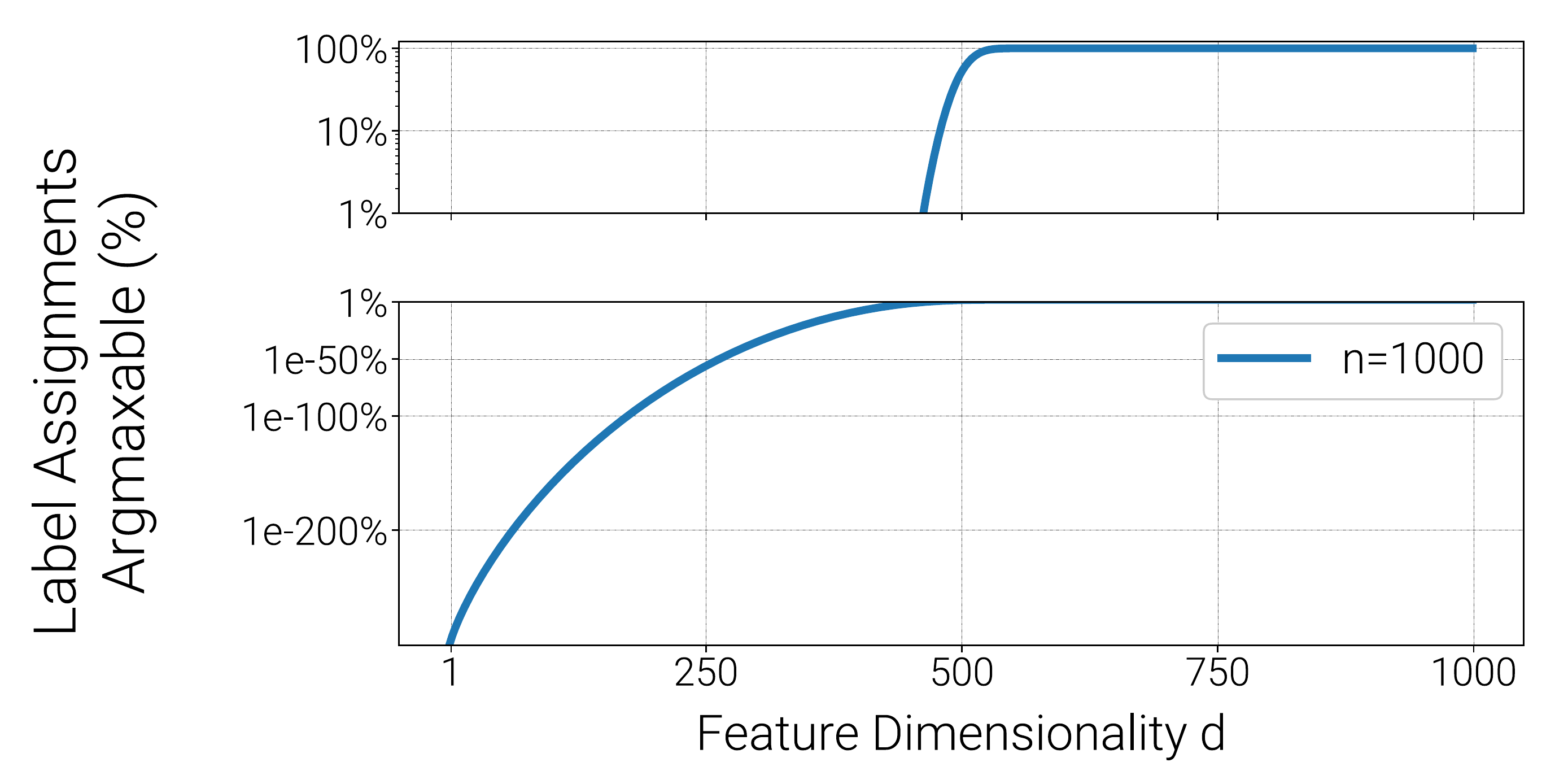}
    
    \caption{
    We log-plot what percentage of the $2^{1000}$ label combinations is argmaxable for a BSL with $n=1000$ labels as we decrease the feature dimensionality $d$ (right to left). When $d \ll n$ we can represent exponentially fewer outputs. We split the y-axis to highlight the fast dip when $d<500$.
    }
    \label{fig:cover}
    \end{minipage}
     \hfill
     \begin{minipage}[b]{0.38\textwidth}
     \centering
     \resizebox{.52\columnwidth}{!}{%
         \begin{tikzpicture}[mycolour,scale=2]
         \def \scale {10};
         \begin{scope}
             \clip(0, 0) circle (1.75);
             \node[](w1) at (1.00, 0.00){};
             \draw[clr1, thick] let \p{w1}=(w1) in (\scale * \y{w1}, -\scale * \x{w1}) -- (-\scale * \y{w1}, \scale * \x{w1});
             \node[](w2) at (0.50, 0.70){};
             \draw[clr2, thick] let \p{w2}=(w2) in (\scale * \y{w2}, -\scale * \x{w2}) -- (-\scale * \y{w2}, \scale * \x{w2});
             \node[](w4) at (-0.50, 0.50){};
             \draw[clr3, thick] let \p{w4}=(w4) in (\scale * \y{w4}, -\scale * \x{w4}) -- (-\scale * \y{w4}, \scale * \x{w4});
     
             \sv{---}{-0.41}{-1.00}{---}{\LARGE};
             \draw[]({-0.41},{-1.00}) circle (0.41421);
             \sv{--+}{-1.00}{-0.08}{--+}{\LARGE};
             \draw[]({-1.00},{-0.08}) circle (0.64858);
             \sv{-++}{-0.51}{1.00}{-++}{\LARGE};
             \draw[]({-0.51},{1.00}) circle (0.51462);
             \sv{+--}{0.51}{-1.00}{+--}{\LARGE};
             \draw[]({0.51},{-1.00}) circle (0.51462);
             \sv{++-}{1.00}{0.08}{++-}{\LARGE};
             \draw[]({1.00},{0.08}) circle (0.64858);
             \sv{+++}{0.41}{1.00}{+++}{\LARGE};
             \draw[]({0.41},{1.00}) circle (0.41421);
         \end{scope}
         \end{tikzpicture}%
     }
     \caption{Our $n=3,d=2$ example from~\cref{fig:problem}. We include the balls found by the Chebyshev LP for each argmaxable label combination. When $d \ll n$, most balls will have a tiny radius.}
     \label{fig:chebyshev}
     \end{minipage}
\end{figure*}

\shortsubsection{Hyperplane Overcrowding}
For a low-rank classifier $\vv{W} \in \R^{n \times d}$ and $d \ll n$, we have a large number of hyperplanes finely slicing a lower dimensional feature space.
The natural question is, \textit{out of the $2^n$ label combinations $\vv{y}$, how many can such a classifier actually represent?}
To elaborate on this, let us define the set of argmaxable label combinations for a classifier $\vv{W}$ as:
\begin{equation}
\argmaxable{\vv{W}} = \{ \sign{\vv{W}\vv{x}}\quad \forall \vv{x} \in \R^d\}
\end{equation}
We can exactly count the number of argmaxable label combinations, i.e. $\left|\argmaxable{\vv{W}} \right|$ if $\vv{W}$ is in \textbf{general position}.
\begin{definition}
    $\vv{W} \in \R^{n \times d}$ is in \textbf{general position} if no subset of $d$ rows is linearly dependent. See~\cref{sec:gp}.
    \label{def:gp}
\end{definition}
\begin{theorem}
\label{thm:cover}
~\citep[Thm 2]{Cover1965} If $\vv{W}$ is in general position, the number of argmaxable label combinations is:
\begin{equation}
\left|\argmaxable{\vv{W}}\right| = 2 \sum\nolimits_{d'=0}^{d-1} \binom{n-1}{d'}.
\label{eq:counts}
\end{equation}
\end{theorem}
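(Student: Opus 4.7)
The plan is to prove Cover's theorem by induction on $n$ and $d$, exploiting the recursive structure of the hyperplane arrangement. Denote by $C(n, d)$ the number of argmaxable label combinations for any matrix $\vv{W} \in \R^{n \times d}$ in general position; I first want to show this number depends only on $n$ and $d$ (not on the particular $\vv{W}$), then derive a recurrence, and finally match it to the claimed closed form.

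First, I would establish the base cases. For $d=1$, each row $\vv{w}^{(i)} \in \R$ defines a threshold at the origin, and general position means $\vv{w}^{(i)} \neq 0$; all $n$ hyperplanes coincide with the origin, producing exactly $2$ open halflines, matching $2\binom{n-1}{0} = 2$. For $n=1$, a single hyperplane through the origin in $\R^d$ partitions space into two open halfspaces, so $C(1,d) = 2 = 2\binom{0}{0}$.

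Next, I would prove the key recurrence
\begin{equation*}
C(n, d) = C(n-1, d) + C(n-1, d-1).
\end{equation*}
Geometrically: start with an arrangement of $n-1$ hyperplanes in general position and add the $n$-th hyperplane $H_n = \{\vv{x} : {\vv{w}^{(n)}}^{\T}\vv{x} = 0\}$. A region of the old arrangement is split by $H_n$ into two new regions iff $H_n$ has a nonempty (open) intersection with it; all other regions survive intact. Thus the number of new regions equals the number of full-dimensional regions that $H_n$ gets cut into by intersecting with the other $n-1$ hyperplanes. Inside $H_n$, which is a $(d-1)$-dimensional subspace, these intersections produce $n-1$ hyperplanes through the origin, and general position of $\vv{W}$ guarantees that these induced hyperplanes are themselves in general position in $\R^{d-1}$ (this is the step where \cref{def:gp} is used substantively, and it is the part that requires the most care). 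By induction the number of regions created inside $H_n$ is $C(n-1, d-1)$, giving the recurrence.

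Finally, I would verify that $f(n, d) := 2\sum_{d'=0}^{d-1} \binom{n-1}{d'}$ satisfies the same recurrence and base cases. Pascal's identity $\binom{n-1}{d'} = \binom{n-2}{d'} + \binom{n-2}{d'-1}$ gives
\begin{equation*}
\sum_{d'=0}^{d-1}\binom{n-1}{d'} = \sum_{d'=0}^{d-1}\binom{n-2}{d'} + \sum_{d'=0}^{d-2}\binom{n-2}{d'},
\end{equation*}
using $\binom{n-2}{-1}=0$ to reindex, which is exactly $f(n-1,d) + f(n-1,d-1)$ after multiplying by $2$. Together with the matching base cases, induction concludes that $|\argmaxable{\vv{W}}| = C(n,d) = f(n,d)$. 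The main obstacle is the general-position inheritance step: one must check that when restricting the other $n-1$ row-normals to $H_n$, no $d-1$ of the restrictions become linearly dependent, which follows because linear dependence of $d-1$ restricted vectors together with $\vv{w}^{(n)}$ would force $d$ rows of $\vv{W}$ to be dependent, contradicting \cref{def:gp}.
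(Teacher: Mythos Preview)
Your proposal is correct. The paper does not supply its own proof of \cref{thm:cover}; it simply cites the result from \citet[Thm.~2]{Cover1965}. The inductive argument you give---base cases, the Schl\"afli-type recurrence $C(n,d) = C(n-1,d) + C(n-1,d-1)$ obtained by adding one hyperplane at a time, and matching the closed form via Pascal's identity---is essentially Cover's original proof, so there is no methodological divergence to discuss. Your handling of the general-position inheritance step is also sound: if $d-1$ of the projected normals in $H_n$ were dependent, the corresponding linear combination of original normals would lie in the span of $\vv{w}^{(n)}$, producing a dependence among $d$ rows of $\vv{W}$ and contradicting \cref{def:gp}.
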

It follows that i) the number of argmaxable label combinations depends only on $n$ and $d$, not the specific $\vv{W}$, and ii) most label combinations will be unargmaxable for $d \ll n$ as \cref{eq:counts} indicates an exponential growth (see ~\cref{fig:cover}).

While we can count the number of (un)argmaxable label combinations, it remains an open problem to verify if a specific set of label combinations can ever be predicted by a given classifier.
We provide a solution in the next section.

\shortsection{Verifying Argmaxable Label Assignments}
\label{sec:verify}

Given a low-rank classifier $\vv{W}$, we are interested in verifying if a set of $L$ label combinations of interest $\{\vv{y}^{(l)}\}_{l=1}^{L}$ is (un)argmaxable.
These labels can belong to a held out set, as in our experiments (\cref{sec:exps}), and help quantify the generalisability and trustworthiness of the given classifier, as we would expect it to be able to predict all $L$ outputs.  

A simple strategy is to verify the argmaxability of each $\vv{y}^{(l)}$.
For that, we use a Chebyshev Linear Programme (LP), which also gives us a proxy for the size of a region, as we explain next.
The LP aims to find the Chebyshev center~\citep[p. 417]{boyd2004} of the region encoded by $\vv{y}^{(l)}$: the center of the largest ball of radius $\epsilon$ that can be embedded within it (see~\cref{fig:chebyshev}).
As a constrained optimization problem (see derivation in~\cref{app:derivlp})\footnote{The appendix can be found at \url{https://arxiv.org/abs/2310.10443}.}, we want to solve:
\begin{alignat}{2}
& \text{maximise} & \quad &  \epsilon  \\
& \text{subject to} &  & -\vv{y}_i {\vv{w}^{(i)}}^\T \vv{x} + \epsilon \Vert \vv{w}^{(i)} \Vert_2 \leq 0, \quad  1 \leq i \leq n, \nonumber \\
& & & -10^4 \leq \vv{x}_{j} \leq 10^4, \quad 1 \leq j \leq d, \quad
 \epsilon > \text{eps} \nonumber
\end{alignat}
where we abuse notation and $\vv{y}_i \in \{+1, -1\}$. We constrain each entry of $\vv{x}$ in a bounded region, since the Chebyshev center is not defined otherwise. 
If the LP is feasible it returns the maximum radius $\epsilon$ and we verify that $\vv{y}$ is argmaxable.
Note that we add an additional constraint, $\epsilon> \text{eps}$, where eps is within the numerical accuracy the LP solver can operate.\footnote{We use $\text{eps}=10^{-8}$ since~\citet{gurobi} has a minimum tolerance of $10^{-9}$.}
As such, while we defined argmaxability in absolute terms (\cref{sec:mlc}), in practice it can only be verified up to some numerical precision:
$\vv{y}$ could be argmaxable, but an LP might not be able to detect it if the neighbourhood around all representative $\vv{x}$ is tiny.
As such, we define $\epsilon$-argmaxability to characterise robustly argmaxable label combinations.
\begin{definition}
A label assignment $\vv{y}$ is \textbf{$\bm\epsilon$-argmaxable} for a classifier $\vv{W}$ if it is argmaxable even under the presence of any noise vector $\bm\delta$ having magnitude $\snorm{\bm\delta} \leq \epsilon$:
\resizebox{.98\linewidth}{!}{
\begin{minipage}{1.15\linewidth}
\begin{equation*}
\vv{y} \, \text{is $\epsilon$-argmaxable} \iff \forall \bm\delta, \snorm{\bm\delta} \leq \epsilon,\, \exists \vv{x}  : \sign{\vv{W}\left(\vv{x} + \vv{\bm\delta}\right)} = \vv{y}.
\label{def:eargmax}
\end{equation*}
\end{minipage}
}
\end{definition}
Our Chebyshev LP is able to verify  $\epsilon$-argmaxability, and therefore argmaxability, as the first implies the second.
Note, however, that the reverse is not true. 
Although verifying that a classifier is able to argmax a certain set of labels is of extreme importance, verification can be computationally expensive, as LPs become intractable as we scale $n$, $d$ and $L$.
To avoid this, we devise a classifier that guarantees that all labels of interest are argmaxable \textit{by design}.

\shortsection{DFT Layers for $k$-Active MLC}
\label{sec:low-rank-dft}
Designing a low-rank BSL that guarantees argmaxability for all $2^n$ possible label combinations is impossible, according to \cref{thm:cover}. 
However, for most MLC datasets the label combinations are \textit{sparse}; only a handful of labels are \textit{active} for any given example. 
As such, herein we choose an upper bound $k$ on the number of active labels for each dataset and show how to modify the parametrisation of a BSL so that any $k$-active label assignment is guaranteed to be argmaxable.
We first define sufficient criteria by specifying a \emph{broad family of parametrisations} for which our result holds: the weight matrix should have at least $2k+1$ input features and all its maximal minors should be non-zero and have the same sign. 
Next, we \textit{specify} an implementation satisfying these criteria, the Discrete Fourier Transform (DFT) layer, which 
is computationally appealing  and is accurate in practice.

\begin{figure*}[!t]
    \centering
    \begin{subfigure}[b]{.28\textwidth}
        \begin{tikzpicture}[mycolour,scale=1,font=\small]
        \matrix [matrix of math nodes, left delimiter={[}, right delimiter={]},ampersand replacement=\&](A){
            \textcolor{clr1}{1.0} \&        \textcolor{clr1}{0.0} \\
            \textcolor{clr2}{0.5} \&        \textcolor{clr2}{0.7} \\
            \textcolor{clr3}{0.0} \&        \textcolor{clr3}{1.0} \\
            \textcolor{clr4}{-0.5} \&       \textcolor{clr4}{0.5} \\
        };
        \node[left of=A, xshift=-18pt](L) {$\vv{W}=$};
        \begin{pgfonlayer}{overlay}
            \node[above, xshift=-4em, yshift=-5.5em, align=center] (d12) (A-4-1.south west) {$\Delta_{\left\{\textcolor{clr1}{1},\textcolor{clr2}{2}\right\}}=.7$};
            \node[above, xshift=-4em, yshift=-7em, align=center] (d13) (A-4-1.south west) {$\Delta_{\left\{\textcolor{clr1}{1},\textcolor{clr3}{3}\right\}}=1.$};
            \node[above, xshift=-4em, yshift=-8.5em, align=center] (d14) (A-4-1.south west) {$\Delta_{\left\{\textcolor{clr1}{1},\textcolor{clr4}{4}\right\}}=.5$};
            \node[above, xshift=3em, yshift=-5.5em, align=center] (d23) (A-4-2.south east) {$\Delta_{\left\{\textcolor{clr2}{2},\textcolor{clr3}{3}\right\}}=.5$};
            \node[above, xshift=3em, yshift=-7em, align=center](d24) (A-4-2.south east) {$\Delta_{\left\{\textcolor{clr2}{2},\textcolor{clr4}{4}\right\}}=.6$};
            \node[above, xshift=3em, yshift=-8.5em, align=center](d34) (A-4-2.south east) {$\Delta_{\left\{\textcolor{clr3}{3},\textcolor{clr4}{4}\right\}}=.5$};
        \end{pgfonlayer}
        \end{tikzpicture}
        \caption{All $\binom{4}{2}=6$ maximal minors, $\Delta_I$, are positive, hence $\vv{W} \in \pgrass{4}{2}$.}
        \label{subfig:maxminors}
    \end{subfigure}%
    \hfill
    \begin{subfigure}[b]{.3\textwidth}
        \begin{tikzpicture}[mycolour,scale=1.3,font=\small]
        \def \scale {10};
        \begin{scope}
            \clip(0, 0) circle (1.6);
            \node[](w1) at (1.00, 0.00){};
            \draw[clr1, dashed] let \p{w1}=(w1) in (\scale * \y{w1}, -\scale * \x{w1}) -- (-\scale * \y{w1}, \scale * \x{w1});
            \node[](w2) at (0.50, 0.70){};
            \draw[clr2, dashed] let \p{w2}=(w2) in (\scale * \y{w2}, -\scale * \x{w2}) -- (-\scale * \y{w2}, \scale * \x{w2});
            \node[](w3) at (0.00, 1.00){};
            \draw[clr3, dashed] let \p{w3}=(w3) in (\scale * \y{w3}, -\scale * \x{w3}) -- (-\scale * \y{w3}, \scale * \x{w3});
            \node[](w4) at (-0.50, 0.50){};
            \draw[clr4, dashed] let \p{w4}=(w4) in (\scale * \y{w4}, -\scale * \x{w4}) -- (-\scale * \y{w4}, \scale * \x{w4});
            \draw[-{Latex[length=3mm]}, thick, clr1] (0, 0) -- (w1);
            \node[] at (1.00, 0.00){\textcolor{clr1}{$\mathbf{w}_1$}};
            \draw[-{Latex[length=3mm]}, thick, clr2] (0, 0) -- (w2);
            \node[] at (0.50, 0.70){\textcolor{clr2}{$\mathbf{w}_2$}};
            \draw[-{Latex[length=3mm]}, thick, clr3] (0, 0) -- (w3);
            \node[] at (0.00, 1.00){\textcolor{clr3}{$\mathbf{w}_3$}};
            \draw[-{Latex[length=3mm]}, thick, clr4] (0, 0) -- (w4);
            \node[] at (-0.50, 0.50){\textcolor{clr4}{$\mathbf{w}_4$}};
        \end{scope}
        \end{tikzpicture}
        \caption{Correspondence of $\vv{W}$ and geometric picture: each normal vector is a row of $\vv{W}$.}
    \end{subfigure}%
    \hfill
    \begin{subfigure}[b]{.3\textwidth}
        \begin{tikzpicture}[mycolour,scale=1.4]
        \def \scale {10};
        \begin{scope}
            \clip(0, 0) circle (1.5);
            \node[](w1) at (1.00, 0.00){};
            \draw[clr1, thick] let \p{w1}=(w1) in (\scale * \y{w1}, -\scale * \x{w1}) -- (-\scale * \y{w1}, \scale * \x{w1});
            \node[](w2) at (0.50, 0.70){};
            \draw[clr2, thick] let \p{w2}=(w2) in (\scale * \y{w2}, -\scale * \x{w2}) -- (-\scale * \y{w2}, \scale * \x{w2});
            \node[](w3) at (0.00, 1.00){};
            \draw[clr3, thick] let \p{w3}=(w3) in (\scale * \y{w3}, -\scale * \x{w3}) -- (-\scale * \y{w3}, \scale * \x{w3});
            \node[](w4) at (-0.50, 0.50){};
            \draw[clr4, thick] let \p{w4}=(w4) in (\scale * \y{w4}, -\scale * \x{w4}) -- (-\scale * \y{w4}, \scale * \x{w4});
    
            \sv{----}{-0.48}{-1.00}{----}{\normalsize};
            \sv{---+}{-1.00}{-0.35}{---+}{\normalsize};
            \sv{--++}{-1.00}{0.28}{--++}{\normalsize};
            \sv{-+++}{-0.51}{1.00}{-+++}{\normalsize};
            \sv{+---}{0.51}{-1.00}{+---}{\normalsize};
            \sv{++--}{1.00}{-0.28}{++--}{\normalsize};
            \sv{+++-}{1.00}{0.31}{+++-}{\normalsize};
            \sv{++++}{0.46}{1.00}{++++}{\normalsize};
        \end{scope}
        \end{tikzpicture}
        \caption{The Argmaxable label assignments are the 1-alternating vectors, $\argmaxable{\vv{W}}=\Alt{4}{1}$.}
    \end{subfigure}%
    \label{fig:altexample}
    \caption{Visual evidence of~\cref{thm:altfeas}. a) We construct a BSL having $n=4$ labels and $d=2$ features parametrised by $\vv{W} \in \R^{4 \times 2}$ such that all maximal minors are positive, i.e. $\vv{W} \in \pgrass{n=4}{d=2}$. (b) The rows of the matrix are binary classifiers, we demarcate the decision boundaries for each classifier using a dashed line. (c) We assign each region a sign vector corresponding to which labels the BSL would flag as active for an input falling in that region. As per~\cref{thm:altfeas}, exactly the $(d-1) = 1$-alternating outputs are argmaxable. More generally, for $d=2k+1$, all $k$-active outputs are argmaxable (see~\cref{app:3d}).
    }
\end{figure*}
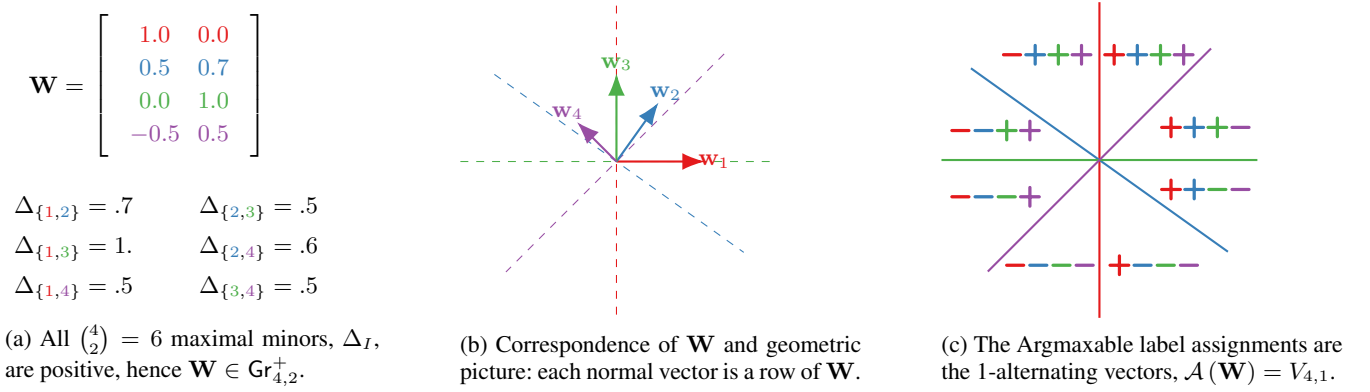

\subsection{$k$-Active Label Assignments}

Label combinations in real-world MLC datasets are often sparse~\citep{Babbar2019}; it is unlikely an image will contain more than $k$ objects or that a clinical document will be assigned more than $k$ clinical codes, where $k \approx \mathcal{O}(\log n)$ is a dataset dependent upper bound on the number of active labels~\citep{Jain2019}.
We now show how to guarantee that all $k$-active outputs are argmaxable by controlling the parametrisation of a BSL. 
We first formalise what a $k$-active label combination is below.
\begin{definition}
For a label assignment $\vv{y}$ we define $\act{\vv{y}}$ to be the number of active labels in $\vv{y}$, i.e:
\begin{equation}
    \act{\vv{y}} =
    \sum\nolimits_{i=1}^{n}
    \mathbf{1}_{\vv{y}} \left\{  \vv{y}_i = +\right\}
\end{equation}
e.g. $\act{-----}=0$ and $\act{+--+-} = 2$.
\end{definition}
\begin{definition}
The \textbf{$k$-active assignments} on $n$ labels are:
\begin{equation}
\Act{n}{k} = \left\{ \vv{y} \in \{+, -\}^{n}  : \act{\vv{y}} \leq k \right\}
\end{equation}
\end{definition}
For example, the MIMIC-III dataset~\citep{Johnson2016, mullenbach2018} has $n=8921$ labels, but no example has more than $80$ active labels. We now show how to guarantee that all label assignments in $\Act{n}{k}$ are argmaxable.

\subsection{$k$-Active Argmaxability Guarantees}
\label{sec:criteria}
Our goal in this section is to prove~\cref{thm:card}, which states that a \textit{general criterion}
for guaranteeing 
all $k$-active labels are argmaxable is that the weight matrix $\vv{W} \in \R^{n \times d},\, d \geq 2k +1$ that parametrises the BSL has \emph{maximal minors that agree in sign and are non-zero}.
As such, we next introduce maximal minors and the family of matrices with the above property. We prove our result by showing that the argmaxable label assignments for this family of matrices are ``$2k$-alternating'' and that these subsume the $k$-active ones.

A \textbf{maximal minor} $\Delta_I$ of a $n \times d$ matrix, $n > d$, is the determinant of any $d \times d$ submatrix formed by deleting the $n-d$ rows not indexed by $I$. For example, in~\cref{subfig:maxminors}, all maximal minors are positive.
We denote with $\pgrass{n}{d}$ the set of all matrices $\vv{W} \in \R^{n \times d}$ whose maximal minors are non-zero and agree in sign (see~\cref{app:grass}).
To prove~\cref{thm:card}, we use the following facts known about $k$-alternating outputs.
\begin{definition}
For a label assignment $\vv{y}$ we define $\alt{\vv{y}}$ to be the number of sign changes encountered when reading the sequence of labels from left to right, i.e:
\begin{equation}
    \alt{\vv{y}} = \sum_{i=1}^{n-1}
    \mathbf{1}_{\vv{y}} \{ \vv{y}_i \neq \vv{y}_{i+1} \}
\end{equation}
\end{definition}
e.g: $\alt{++---}=1$ and $\alt{++-+-}=3$.
\begin{definition}
The \textbf{$k$-alternating assignments} on $n$ labels are:
\begin{equation}
\Alt{n}{k} = \left\{ \vv{y} \in \{+, -\}^{n} : \alt{\vv{y}} \leq k \right\}
\end{equation}
\end{definition}
\begin{lemma}
$\vv{y}$ is $k$-active $\implies \vv{y}$ is $2k$-alternating.
\label{lem:altsp}
\end{lemma}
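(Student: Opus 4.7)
The plan is to bound $\alt{\vv{y}}$ by decomposing $\vv{y}$ into maximal runs of equal sign and counting transitions between them.

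First I would observe that every sign change in $\vv{y}$ occurs at the boundary between two consecutive maximal runs of identical signs. So if $b$ denotes the number of maximal blocks of $+$ signs in $\vv{y}$, each such block can contribute at most two sign changes to $\alt{\vv{y}}$, namely a $- \to +$ transition when the block begins and a $+ \to -$ transition when it ends. Blocks that touch the left or right boundary of the sequence contribute fewer, but the loose bound $\alt{\vv{y}} \leq 2b$ always holds.

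Next I would bound $b$ by $\act{\vv{y}}$. Since each maximal $+$ block contains at least one active label, and the blocks are disjoint, we have $b \leq \act{\vv{y}}$. Chaining the two inequalities gives
\begin{equation*}
\alt{\vv{y}} \;\leq\; 2b \;\leq\; 2\,\act{\vv{y}} \;\leq\; 2k,
\end{equation*}
which is exactly the statement that $\vv{y} \in \Alt{n}{2k}$.

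There is no real obstacle here; the claim is essentially a pigeonhole bookkeeping argument, and the only care needed is to state precisely what a ``maximal block'' is and to handle boundary blocks with a $\leq$ rather than an equality. One could also present the proof purely symbolically by writing $\alt{\vv{y}} = \sum_{i=1}^{n-1} \mathbf{1}\{\vv{y}_i \neq \vv{y}_{i+1}\}$ and noting that each indicator that fires involves at least one index $j \in \{i, i+1\}$ with $\vv{y}_j = +$, and that each active position is adjacent to at most two such boundaries; this yields the same factor of $2$.
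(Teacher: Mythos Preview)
Your proof is correct and follows essentially the same combinatorial idea as the paper's. The paper phrases it as constructing $\vv{y}$ from the all-inactive vector via at most $2k$ suffix sign-flips (one to open and one to close each run of active labels), which is exactly your block-counting bound $\alt{\vv{y}} \leq 2b \leq 2\,\act{\vv{y}} \leq 2k$ in different language; your version is if anything a bit more explicit about why the factor $2$ arises.
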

See~\cref{proof:altsp} for reasoning.
\begin{theorem}
\citep{Gantmakher1961} see \citep[Theorem 1.1]{Karp2017}.
If all maximal minors of $\vv{W} \in \R^{n \times d}$ are non-zero and have the same sign, all label assignments $\vv{y}$ computed as $\vv{y} = \sign{\vv{Wx}},\, \vv{x} \in \R^d$ are $d-1$ alternating.
$\vv{W} \in \pgrass{n}{d} \implies \alt{\vv{y}} \leq d - 1$  .
\label{thm:upperbound}
\end{theorem}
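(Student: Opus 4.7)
The plan is to argue by contradiction, combining a standard Cramer-style identity for the linear dependence among $d+1$ rows of $\vv{W}$ with the sign hypothesis on the maximal minors.

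First, I would suppose $\vv{y} = \sign{\vv{W}\vv{x}}$ satisfies $\alt{\vv{y}} \geq d$. Since alternations are counted between consecutive indices in $\{1, \ldots, n\}$, whenever $\vv{y}$ has at least $d$ sign changes one can select $d+1$ increasing indices $i_1 < i_2 < \cdots < i_{d+1}$ such that the values $\vv{y}_{i_1}, \vv{y}_{i_2}, \ldots, \vv{y}_{i_{d+1}}$ strictly alternate in sign; consequently the real numbers $a_j := (\vv{W}\vv{x})_{i_j}$ also strictly alternate in sign (and are nonzero since $\vv{y}$ has no zero entries by the convention in \cref{sec:mlc}).

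Second, let $\vv{W}'\in \R^{(d+1)\times d}$ be the submatrix of $\vv{W}$ with rows $i_1, \ldots, i_{d+1}$. Because $\vv{W}'$ has rank at most $d$, its rows are linearly dependent, and a standard cofactor argument (expanding the determinant of the $(d+1)\times(d+1)$ matrix $[\vv{W}' \mid \vv{W}']$ along any column, or equivalently applying Cramer's rule) produces the explicit dependence
\begin{equation*}
\sum_{j=1}^{d+1} (-1)^j\, \Delta_{I \setminus \{i_j\}}\; \vv{w}^{(i_j)} \;=\; \vv{0},
\end{equation*}
where $I = \{i_1, \ldots, i_{d+1}\}$ and $\Delta_{I\setminus\{i_j\}}$ is the maximal minor of $\vv{W}$ indexed by the remaining $d$ rows. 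Right-multiplying this identity by $\vv{x}$ yields $\sum_{j=1}^{d+1} c_j a_j = 0$ with $c_j := (-1)^j\,\Delta_{I\setminus\{i_j\}}$.

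Third, I would invoke the hypothesis $\vv{W} \in \pgrass{n}{d}$: every maximal minor is nonzero and shares a common sign, so the $c_j$ strictly alternate in sign. Since the $a_j$ strictly alternate as well, each product $c_j a_j$ carries the \emph{same} nonzero sign across all $j$, so the sum cannot vanish — a contradiction. Hence $\alt{\vv{y}} \leq d-1$.

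The main obstacle will be getting the cofactor bookkeeping right: verifying that the signed maximal minors $(-1)^j \Delta_{I\setminus\{i_j\}}$ really do give a valid dependence among the rows $\vv{w}^{(i_j)}$, and making sure that ``strictly alternating'' on both the $c_j$ and the $a_j$ sides truly forces every $c_j a_j$ into a common sign. A minor side point is the boundary case where $(\vv{W}\vv{x})_i = 0$ for some $i$; by the paper's convention that case is resolved by an infinitesimal perturbation of $\vv{x}$, which preserves the argmaxable label vector $\vv{y}$ and turns every $a_j$ nonzero, so the argument applies without loss of generality.
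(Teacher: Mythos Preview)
Your argument is correct and is essentially the classical proof of the variation-diminishing property for sign-regular matrices: pick $d+1$ indices witnessing $d$ alternations, use the cofactor relation $\sum_{j=1}^{d+1}(-1)^{j}\Delta_{I\setminus\{i_j\}}\,\vv{w}^{(i_j)}=\vv{0}$ among the corresponding rows, dot with $\vv{x}$, and derive a contradiction from the fact that two strictly alternating sequences multiply termwise to a sequence of constant nonzero sign. The cofactor bookkeeping you worried about is fine; the identity follows from Laplace-expanding the singular $(d{+}1)\times(d{+}1)$ matrix $[\vv{W}'\,\,\vv{W}'_{:,k}]$ along its repeated column.

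Note, however, that the paper does \emph{not} supply its own proof of this theorem: it is quoted as a known result with references to \citet{Gantmakher1961} and \citet[Theorem~1.1]{Karp2017}, and is then used as a black box inside the proof of \cref{thm:altfeas}. Your write-up is precisely the standard proof one finds in those sources (and in the total-positivity literature more broadly), so there is no substantive methodological difference to compare --- you have simply filled in what the paper outsourced.
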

\begin{theorem}
For $\vv{W} \in \pgrass{n}{d}$ the argmaxable label assignments are the $(d-1)$-alternating vectors.
$\vv{W} \in \pgrass{n}{d} \implies \argmaxable{\vv{W}} = \Alt{n}{d-1}$.
\label{thm:altfeas}
\end{theorem}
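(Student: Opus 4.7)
The plan is to establish the equality $\argmaxable{\vv{W}} = \Alt{n}{d-1}$ by proving the two inclusions, where the nontrivial one will come from a counting argument rather than an explicit construction of inputs.

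First, I would obtain the inclusion $\argmaxable{\vv{W}} \subseteq \Alt{n}{d-1}$ directly from \cref{thm:upperbound}: any label vector of the form $\sign{\vv{Wx}}$ with $\vv{W} \in \pgrass{n}{d}$ is $(d-1)$-alternating, so every argmaxable $\vv{y}$ lies in $\Alt{n}{d-1}$. This is essentially a restatement of Gantmakher's bound and requires no new work.

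For the reverse inclusion I would not try to exhibit an $\vv{x}$ realising each $(d-1)$-alternating $\vv{y}$. Instead I would argue by cardinality. Since $\vv{W} \in \pgrass{n}{d}$ has all maximal minors non-zero, every $d$-subset of rows is linearly independent, so $\vv{W}$ is in general position in the sense of \cref{def:gp}. Thus \cref{thm:cover} applies and gives $|\argmaxable{\vv{W}}| = 2\sum_{d'=0}^{d-1}\binom{n-1}{d'}$. On the other side, I would count $\Alt{n}{d-1}$ directly: a sign vector of length $n$ with exactly $j$ alternations is determined by choosing which $j$ of the $n-1$ adjacent pairs differ and by choosing the sign of the first entry, giving $2\binom{n-1}{j}$ such vectors; summing over $j=0,\dots,d-1$ yields $|\Alt{n}{d-1}| = 2\sum_{j=0}^{d-1}\binom{n-1}{j}$. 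The two cardinalities coincide, so the inclusion $\argmaxable{\vv{W}} \subseteq \Alt{n}{d-1}$ between finite sets of equal size forces equality.

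The only real obstacle I anticipate is a bookkeeping one: making sure $\vv{W} \in \pgrass{n}{d}$ indeed implies general position (immediate, since linear independence of every $d$ rows is exactly the non-vanishing of every maximal minor), and verifying the combinatorial count of $\Alt{n}{d-1}$ via the bijection between alternating sign vectors and (starting sign, set of cut positions) pairs. Neither step requires machinery beyond what is already in the paper, and the proof reduces to combining \cref{thm:upperbound} with \cref{thm:cover} plus a one-line counting identity, so no case analysis on the geometry of the hyperplane arrangement is needed.
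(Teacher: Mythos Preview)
Your proposal is correct and follows essentially the same argument as the paper's proof: use \cref{thm:upperbound} for the inclusion $\argmaxable{\vv{W}} \subseteq \Alt{n}{d-1}$, observe that non-vanishing maximal minors imply general position so \cref{thm:cover} gives $|\argmaxable{\vv{W}}|$, count $|\Alt{n}{d-1}|$ via the (starting sign, cut positions) bijection, and conclude equality from matching cardinalities. The paper's presentation is slightly more compressed but the logical structure is identical.
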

See~\cref{fig:altexample} for intuition and~\cref{proof:altfeas} for a proof.
\begin{theorem}
Consider a BSL parametrised by $\vv{W} \in \pgrass{n}{2k+1}$ which predicts label assignments using argmax prediction, $\vv{y} = \sign{\vv{Wx}}$. All $k$-active label assignments are argmaxable:
$\Act{n}{k} \subset \argmaxable{\vv{W}}$.
\label{thm:card}
\end{theorem}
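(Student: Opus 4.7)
The plan is to chain together the two preceding results about $k$-alternating assignments. The statement claims that $k$-active labels are a subset of argmaxable labels when $\vv{W}\in\pgrass{n}{2k+1}$, and the natural way to bridge these is via the intermediate notion of $2k$-alternating labels, which sit on both sides of the chain.

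Concretely, the first step is to invoke \cref{lem:altsp} to note that every $k$-active label assignment is automatically $2k$-alternating, giving the inclusion $\Act{n}{k}\subseteq \Alt{n}{2k}$. The intuition is that each active label can contribute at most two sign changes (one when entering the active block and one when leaving it), so $k$ active labels produce at most $2k$ alternations.

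The second step is to apply \cref{thm:altfeas} with $d=2k+1$. Since $\vv{W}\in\pgrass{n}{2k+1}$, this theorem characterises the argmaxable label assignments exactly as the $(d-1)$-alternating ones, that is $\argmaxable{\vv{W}}=\Alt{n}{2k}$. Combining the two inclusions gives $\Act{n}{k}\subseteq \Alt{n}{2k}=\argmaxable{\vv{W}}$, which is the desired conclusion.

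Because both of the inputs (\cref{lem:altsp} and \cref{thm:altfeas}) have already been established, there is no real obstacle here beyond bookkeeping. The only thing worth being careful about is the choice $d=2k+1$: one must verify that this is the smallest value of $d$ for which $\Alt{n}{d-1}$ contains all of $\Act{n}{k}$, since a $k$-active vector can indeed attain exactly $2k$ alternations (e.g. an alternating block of $+$'s and $-$'s of length $2k$), so $d-1\ge 2k$ is tight. Hence $d\ge 2k+1$ is exactly the right bound, and the theorem follows directly from composing the two earlier results.
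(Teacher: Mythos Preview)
Your proposal is correct and matches the paper's own proof essentially line for line: invoke \cref{thm:altfeas} with $d=2k+1$ to identify $\argmaxable{\vv{W}}=\Alt{n}{2k}$, then use \cref{lem:altsp} to obtain $\Act{n}{k}\subseteq\Alt{n}{2k}$. Your additional remark on the tightness of $d\ge 2k+1$ is extra but accurate and does not affect the argument.
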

\begin{proof}
From~\cref{thm:altfeas}, for $\vv{W}$ in $\pgrass{n}{2k+1}$ the set of $2k$-alternating label assignments $\Alt{n}{2k}$ is argmaxable. Then, from \cref{lem:altsp}, we have $\Act{n}{k} \subseteq \Alt{n}{2k} = \argmaxable{\vv{W}}$, and therefore all $k$-active labels are argmaxable.
\end{proof}

In summary, we have showed that if $\vv{W} \in \pgrass{n}{2k+1}$, all $k$-active outputs are argmaxable, but we have not given a concrete implementation. We next introduce the DFT layer, a practical and efficient member of the $\pgrass{n}{2k+1}$ family.

\subsection{DFT Layers}

Herein we engineer a specific BSL parametrisation that satisfies \cref{thm:card} and relies on the Discrete Fourier Transform~(DFT). 
In addition to this property,
the DFT is also appealing because:
a) it allows us to reduce the number of learnable parameters as the DFT coefficients can be fixed~\footnote{In early experiments we parametrised and learned the $t_i$ of the DFT matrix but found little impact on the results. 
} 
and b) we can compute the activation of the DFT Layer in $\mathcal{O}(n\log{n})$ time via the Fast Fourier Transform (see~\cref{app:dft}) instead of a more expensive $\mathcal{O}(n^2)$ generic matrix-vector product.
We next describe a truncated DFT matrix and show that it provides the guarantees we seek. 
For the DFT matrix, we truncate frequencies larger than $k$:

\resizebox{.96\linewidth}{!}{
\begin{minipage}{1.23\linewidth}
\begin{align}
\dft{n}{2k+1}=&
\setlength\arraycolsep{2pt}
\begin{bmatrix}
\frac{1}{\sqrt{n}} & \sqrt{\frac{2}{n}}\cos{t_1} & \sqrt{\frac{2}{n}}\sin{t_1} & \cdots & \sqrt{\frac{2}{n}}\cos{k t_1}  & \sqrt{\frac{2}{n}}\sin{k t_1} \\
\frac{1}{\sqrt{n}} & \sqrt{\frac{2}{n}}\cos{t_2} & \sqrt{\frac{2}{n}}\sin{t_2} & \cdots & \sqrt{\frac{2}{n}}\cos{k t_2} & \sqrt{\frac{2}{n}}\sin{k t_2} \\
\vdots & \vdots & \vdots & \ddots & \vdots & \vdots \\
\frac{1}{\sqrt{n}} &\sqrt{\frac{2}{n}} \cos{t_n} & \sqrt{\frac{2}{n}}\sin{t_n} & \cdots & \sqrt{\frac{2}{n}}\cos{k t_n} &\sqrt{\frac{2}{n}} \sin{k t_n} \\
\end{bmatrix}, \nonumber
\\ & t_i = \frac{2 \pi (i-1)}{n},\, i \in [n]
\end{align}
\end{minipage}
}
\begin{lemma}
\label{lem:dft}
A truncated DFT matrix $\dft{n}{2k+1} \in \pgrass{n}{2k+1}$.
\end{lemma}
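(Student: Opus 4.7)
The plan is to show that every $(2k+1)\times(2k+1)$ maximal minor $\Delta_I$ of $\dft{n}{2k+1}$ is strictly positive, which immediately gives both non-vanishing and shared sign. Fix any $I = \{i_1 < i_2 < \cdots < i_{2k+1}\} \subseteq [n]$ and set $\tau_j := t_{i_j} = 2\pi(i_j-1)/n$, so $0 \le \tau_1 < \tau_2 < \cdots < \tau_{2k+1} < 2\pi$. Pulling out the column prefactors ($1/\sqrt{n}$ from the first column and $\sqrt{2/n}$ from each of the remaining $2k$) contributes the strictly positive scalar $(1/\sqrt{n})(2/n)^{k}$, so it suffices to prove that the trigonometric Vandermonde determinant
\begin{equation*}
\Delta(\tau_1, \ldots, \tau_{2k+1}) := \det\begin{bmatrix} 1 & \cos\tau_1 & \sin\tau_1 & \cdots & \cos k\tau_1 & \sin k\tau_1 \\ \vdots & \vdots & \vdots & \ddots & \vdots & \vdots \\ 1 & \cos\tau_{2k+1} & \sin\tau_{2k+1} & \cdots & \cos k\tau_{2k+1} & \sin k\tau_{2k+1} \end{bmatrix}
\end{equation*}
is strictly positive on every strictly ordered $(2k+1)$-tuple of distinct angles in $[0, 2\pi)$.

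The first step is non-vanishing via a Chebyshev-system argument. If $\Delta = 0$, linear dependence of rows would yield a nonzero coefficient vector $(c_0, a_1, b_1, \ldots, a_k, b_k)$ producing a trigonometric polynomial $p(\tau) = c_0 + \sum_{m=1}^{k}(a_m \cos m\tau + b_m \sin m\tau)$ that vanishes at all $2k+1$ points $\tau_j$. After the substitution $z = e^{i\tau}$, the quantity $z^k p(\tau)$ becomes a nonzero complex polynomial in $z$ of degree at most $2k$, which has at most $2k$ distinct roots on the unit circle. This contradicts the $2k+1$ prescribed vanishing points, so $\Delta \ne 0$ whenever the $\tau_j$ are distinct.

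The second step is sign constancy. The domain $D = \{(\tau_1, \ldots, \tau_{2k+1}) \in [0, 2\pi)^{2k+1} : \tau_1 < \cdots < \tau_{2k+1}\}$ is convex (a convex combination of strictly increasing tuples is again strictly increasing and stays in $[0, 2\pi)$) and hence connected. Because $\Delta$ is continuous and nowhere vanishing on $D$, its sign is constant on $D$. To pin this sign down to positive, I would evaluate $\Delta$ at an easy configuration, e.g.\ the equally spaced choice $\tau_j = (j-1)\pi/(2k+1)$ on a half-period, and verify positivity directly (for $k=1$ this gives $\Delta = \sqrt{3}/2 > 0$). Combined with the positive column prefactor this yields $\Delta_I > 0$ for every index set $I$, completing the proof that $\dft{n}{2k+1} \in \pgrass{n}{2k+1}$.

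The main obstacle I anticipate is the sign-pinning step: one needs either a closed-form evaluation or a verified example to promote ``constant sign'' to ``positive.'' A self-contained alternative that delivers both non-vanishing and positivity in one stroke is to derive the trigonometric Vandermonde identity
\begin{equation*}
\Delta(\tau_1, \ldots, \tau_{2k+1}) = C_k \prod_{1 \le i < j \le 2k+1} \sin\!\left(\tfrac{\tau_j - \tau_i}{2}\right)
\end{equation*}
for an explicit positive constant $C_k$, by changing column basis from $\{1,\cos m\tau,\sin m\tau\}_{m=1}^{k}$ to $\{e^{im\tau}\}_{m=-k}^{k}$, recognising the resulting matrix as a complex Vandermonde in $z_j = e^{i\tau_j}$ after scaling row $j$ by $e^{ik\tau_j}$, and applying $z_j - z_i = 2i\,e^{i(\tau_i+\tau_j)/2}\sin((\tau_j - \tau_i)/2)$. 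Each half-difference lies in $(0, \pi)$, so every sine factor is strictly positive; the only delicate bookkeeping is tracking the phase factors from the basis change and the row scalings and confirming that they cancel to leave $C_k$ real and positive.
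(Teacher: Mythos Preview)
Your argument is correct and takes a genuinely different, more elementary route than the paper. The paper's proof is citation-driven: it identifies $\dft{n}{2k+1}$ with the homogenisation of the trigonometric cyclic polytope $\cyclic{n}{2k}$, invokes the rigidity of even-dimensional cyclic polytopes \citep[Thm.~5.1]{Cordovil2000} to conclude that the underlying oriented matroid is the uniform alternating one, and then cites that any realisation of that matroid has maximal minors of a common nonzero sign. Your proof instead works directly with the trigonometric Vandermonde determinant: non-vanishing follows from the Chebyshev-system fact that a degree-$k$ trigonometric polynomial has at most $2k$ zeros on $[0,2\pi)$, and common sign follows because all the ordered angle tuples arising from the row subsets $I$ lie in the single convex (hence connected) region $D$, on which the nonvanishing continuous $\Delta$ must keep constant sign. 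This is self-contained and avoids the oriented-matroid machinery entirely; conversely, the paper's approach situates the result within a broader structural framework and makes the connection to $\Alt{n}{d-1}$ (used elsewhere in the paper) more transparent.

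One remark: the obstacle you flag is not actually an obstacle for the lemma as stated. The paper defines $\pgrass{n}{d}$ as matrices whose maximal minors are nonzero and share a sign, not specifically positive. Your connectedness argument already delivers exactly that, so the sign-pinning step (evaluating at a sample tuple, or deriving the product formula $\Delta = C_k\prod_{i<j}\sin\!\bigl(\tfrac{\tau_j-\tau_i}{2}\bigr)$) is optional here. It is still a nice addendum, and the product-formula route you sketch is standard and does go through with $C_k>0$ after the bookkeeping you describe.
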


We need to show that the maximal minors of $\dft{n}{2k+1}$ are non-zero and agree in sign. See~\cref{proof:dft} for a proof.

\shortparagraph{Problem: Regions can become too small.}
While in practice we could use the fixed DFT Layer as defined above and rely on an expressive feature encoder to do the heavy lifting, if the number of labels, $n$, is much greater than the number of features, $2k+1$, it becomes hard to classify some label assignments with large confidence.
This is because segmenting a low dimensional space with very many hyperplanes induces regions that become arbitrarily small shards. %
In argmaxability terms, if we fix an $\epsilon$ and increase the number of labels, all $k$-active labels are argmaxable but increasingly more are not $\epsilon$-argmaxable, see left in \cref{fig:eargmax}. 
This is a problem for any classifier $\vv{W}$, because for training and for generalisation we need to project points into large enough regions.
However, we found that DFT layers are more susceptible to it (\cref{app:unargmax-test}) than general BSL (see~\cref{app:small} for a more detailed explanation).

\shortparagraph{Solution: Slack variables.}
A practical way to deal with small regions is to increase the dimensionality of the features by adding learnable \textit{slack variables}, see~\cref{fig:eargmax}.
Crucially, when doing so we retain our guarantees, as we show below.
\begin{figure}[!t]
    \centering
    \includegraphics[width=.99\columnwidth]{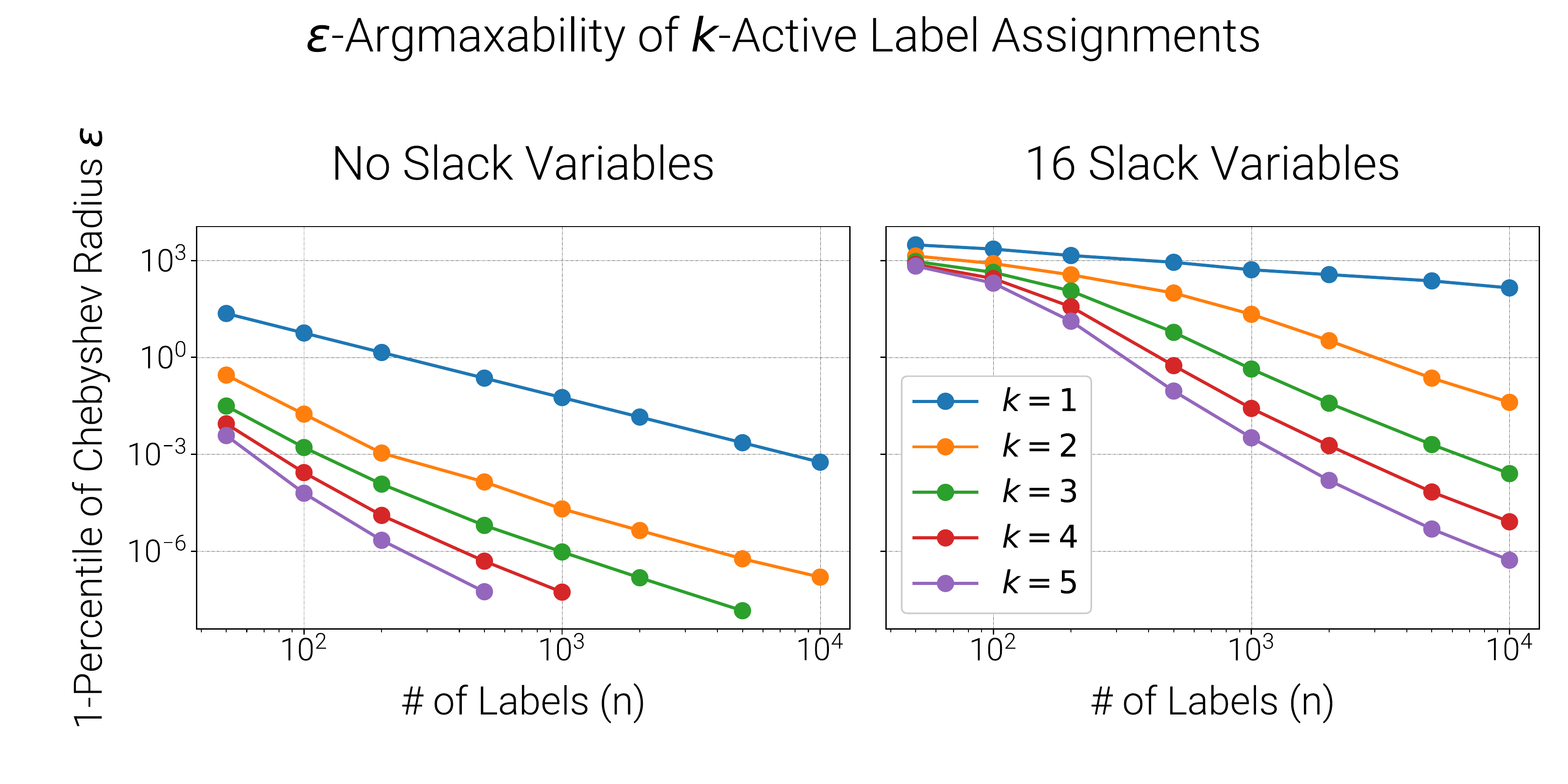}
    \caption{Left: As we increase the number of labels $n$ for the DFT Layer, the radii of the regions shrink, making them harder to predict in practice. Right: Adding slack variables ameliorates this problem. We plot $\epsilon$-argmaxability (\cref{def:eargmax}), measured here for the %
    1\% of labels that have radius less than that plotted. For the DFT Layer, i.e. $\vv{W}=\dft{n}{2k+1}$, all $k$-active label assignments are argmaxable, but as we increase $n$, some (see $k \geq 3$) cannot be detected at the precision of the LP ($10^{-8}$). Adding 16 randomly initialised slack columns, i.e. $\vv{W} = \begin{bmatrix} \dft{n}{2k+1}\,\,\vv{S} \end{bmatrix}$, makes the regions $\epsilon$-argmaxable with larger $\epsilon$.}
    \label{fig:eargmax}
\end{figure}

\begin{lemma}
\label{lem:slack}
Assume a label assignment $\vv{y}$ is argmaxable for a classifier $\vv{W} \in \R^{n \times d}$. Consider increasing the dimensionality of the features of the classifier $\vv{W}$ by adding $s$ more randomly initialised slack columns $\vv{S} \in \R^{n \times s}$. Then $\vv{y}$ is also argmaxable in $\vv{W}' = \begin{bmatrix} \vv{W}\,\,\vv{S} \end{bmatrix},\, \vv{W}' \in \R^{n \times (d + s)}$.
\begin{proof}
Consider the input feature vector for $\vv{W}'$, $\vv{x}'= \begin{bmatrix} \vv{x} \\ \vv{x}_s \end{bmatrix},\, \vv{x} \in \R^{d},\, \vv{x}_s \in \R^{s}$. Set $\vv{x}_s = \vv{0}$. Then notice that
$\vv{y} = \sign{\begin{bmatrix} \vv{W}\,\,\vv{S} \end{bmatrix} \begin{bmatrix} \vv{x} \\ \vv{0} \end{bmatrix}} = \sign{\vv{W} \vv{x}}
$
is equivalent to the original classifier, so if $\vv{y}$ is argmaxable in $\vv{W}$ it is also argmaxable in $\vv{W}'$ by setting $\vv{x}_s$ to zero.
\end{proof}
\end{lemma}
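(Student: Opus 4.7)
The plan is a direct witness construction. Since $\vv{y}$ is argmaxable for $\vv{W}$ by hypothesis, \cref{def:argmax} furnishes some $\vv{x} \in \R^d$ with $\sign{\vv{W}\vv{x}} = \vv{y}$. To establish argmaxability for $\vv{W}' = \begin{bmatrix} \vv{W}\,\,\vv{S} \end{bmatrix} \in \R^{n \times (d+s)}$, I need to exhibit some $\vv{x}' \in \R^{d+s}$ whose signed output under $\vv{W}'$ equals $\vv{y}$. The obvious candidate is the zero-padded vector $\vv{x}' = \begin{bmatrix} \vv{x}^\T & \vv{0}^\T \end{bmatrix}^\T$, chosen precisely so that it annihilates the contribution of the slack columns $\vv{S}$ regardless of what those columns contain.

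Verifying this witness is then a one-line block-matrix computation. We have $\vv{W}'\vv{x}' = \vv{W}\vv{x} + \vv{S}\vv{0} = \vv{W}\vv{x}$, so the pre-sign logits coincide with those of the original classifier, and consequently $\sign{\vv{W}'\vv{x}'} = \sign{\vv{W}\vv{x}} = \vv{y}$. This completes the argument, with no appeal to the structure, rank, or initialisation distribution of $\vv{S}$ required.

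There is no real obstacle to this proof; the only thing worth flagging is what the lemma does \emph{not} say. First, it is stated for plain argmaxability (\cref{def:argmax}), not $\epsilon$-argmaxability: the zero-padding trick would no longer suffice for the latter, since a perturbation $\bm\delta \in \R^{d+s}$ is free to push the slack coordinates away from $\vv{0}$, so one would need a more delicate argument balancing the logit gap against $\snorm{\vv{S}}$. Second, the lemma only claims that argmaxability is preserved, not that the set of argmaxable assignments stays the same; in general $\argmaxable{\vv{W}'} \supseteq \argmaxable{\vv{W}}$, and the inclusion can be strict. Both observations confirm that the lemma is exactly the right tool to combine with \cref{thm:card}: the $k$-active guarantee obtained from the DFT layer $\dft{n}{2k+1}$ survives horizontal concatenation with any learnable slack matrix $\vv{S}$, which is what we need to justify the practical fix depicted in \cref{fig:eargmax}.
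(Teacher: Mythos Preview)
Your proof is correct and follows exactly the same approach as the paper: both construct the witness $\vv{x}' = \begin{bmatrix} \vv{x}^\T & \vv{0}^\T \end{bmatrix}^\T$ by zero-padding and observe that $\vv{W}'\vv{x}' = \vv{W}\vv{x}$, so the sign vector is unchanged. Your additional remarks about $\epsilon$-argmaxability and the one-sided inclusion $\argmaxable{\vv{W}'} \supseteq \argmaxable{\vv{W}}$ are accurate and useful context, though not part of the paper's proof itself.
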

As such, we propose the DFT layer, which has $(2k+1) \times n$ fixed parameters which enforce argmaxability and $s \times n$ learnable parameters which give it flexibility.

\section{Experiments}
\label{sec:exps}
We now empirically evaluate the BSL and DFT layers on three MLC datasets and answer the following research questions: \textbf{RQ1)} Do BSLs have unargmaxable labels in practice?
\textbf{RQ2)} Can DFT layers guarantee that meaningful labels are argmaxable in practice?
\textbf{RQ3)} What is the trade-off between performance and the number of trainable parameters?
We answer the above after introducing the models and datasets.

\subsection{Model Setup}
We define the two MLC output layers we will compare, the \emph{BSL Layer} that is unconstrained and does not guarantee argmaxability of $k$-active outputs, and our \emph{DFT layer} which does. In our experiments we want to study the effect of varying the bottleneck width irrespective of the feature dimensionality of the encoder which varies across datasets and models. As such, we introduce an \emph{affine projection layer} (parametrised by $\vv{P}$ and $\vv{b}$) between the feature encoder and the \emph{linear classifier} (parametrised by $\vv{W}$). For simplicity, we do not include bias terms for the classifiers. For both models, we compute the logits $\vv{z} \in \R^n$ from the encoder activation $\vv{e} \in \R^e$ as:
\begin{equation}
\quad \vv{z} = \vv{W}\vv{x},\quad \vv{x} = \vv{P}\vv{e} + \vv{b}
\end{equation}
While the classifiers $\vv{W}$ have the same number of learnable parameters for both output layers,
the parametrisations differ, as we discuss next.
\subsubsection{BSL Output Layer}
For the BSL, the projection layer maps from $e$, the dimensionality of the encoder embeddings, to $d$, the feature dimensionality using $\vv{P} \in \R^{e \times d}$ and bias $\vv{b} \in \R^d$. This is followed by the linear classifier $\vv{W} \in \R^{n \times d}$.
\subsubsection{DFT Output Layer}
For the DFT, we first pick the maximum number of active labels, $k$, depending on the statistics of the dataset. We then set the number of slack dimensions to be $d$ so we can directly compare to the BSL.
As such, we have $\vv{P} \in \R^{e \times (2k + 1 + d)}$ and $\vv{b} \in \R^{(2k + 1 + d)}$, since we include $2k+1$ more features that map to the DFT columns of the classifier.
The learnable parameters of the classifier comprise $d$ slack columns. Conceptually, and for the purposes of checking the classifier with our LP, we construct the classifier by concatenating the fixed DFT matrix to the slack columns, i.e. $\vv{W} = \begin{bmatrix}\dft{n}{2k+1}\,\,\vv{S} \end{bmatrix}$, $\vv{W} \in \R^{n \times (2k + 1 + d)}$. In practice we compute the logits $\vv{z}$ efficiently as $\vv{z}=\operatorname{FFT}(\vv{x}_{:2k+1}) + \vv{S}\vv{x}_{2k+1:}$ (see \cref{app:dft}).
\paragraph{Computational Cost of DFT} Compared to the BSL, the cost of the DFT layer with $n$ labels is a) an additional cheap $\mathcal{O}(n\log{n})$ matrix vector multiplication and b) an additional $e \times (2k+1)$ trainable parameters in the projection layer.
However, for models with large $n$, we can easily offset the increase in parameters if we want, by modestly shrinking the slack $d$ of the DFT output layer. For example, the MIMIC-III CNN models~\citep{mullenbach2018} have $n=8921$ and $e=500$. For $k=80$, a DFT adds $500 \times 161$ parameters to the projection layer. We could offset this by decreasing $d$ in the output layer by only $\lceil \frac{500 \times 161}{8921} \rceil  = 10$. In~\cref{sec:results} we show that DFT layers obtain better performance with lower $d$ than BSLs, and as such can be more parameter efficient.
\paragraph{Faster Training of DFT}
\label{sec:fast}
For the DFT, we introduce an initialisation trick to speed up training. We exploit that a) $\vv{W}^{\mathsf{DFT}}$ is known and fixed and b) the outputs are $k$-active. Since the outputs are $k$-active, we would prefer to assign a probability $\frac{k}{n}$ to all labels when we start training. To achieve this, we can exploit the fact that the first column of $\vv{W}^{\mathsf{DFT}}$ is $\frac{1}{\sqrt{n}}$ and initialise the bias vector of the projection layer to be $[\sqrt{n}\operatorname{logit}(\frac{k}{n}),0, \ldots,0 ]$, where the logit function is the inverse of sigmoid. This way, assuming logits are close to zero when we begin training, the model will assign probability $\approx \frac{k}{n}$ to each label instead of $\approx \frac{1}{2}$. A similar bias initialisation idea for MLC was discussed in~\citep{Schultheis2021}, but it was not used in a neural network.
\begin{figure*}
    \begin{subfigure}[t]{0.432\linewidth}
    \centering
    \begin{tikzpicture}[inner sep=0pt, remember picture]
    \node (m) at (0,0) {\includegraphics[width=.2765\linewidth]{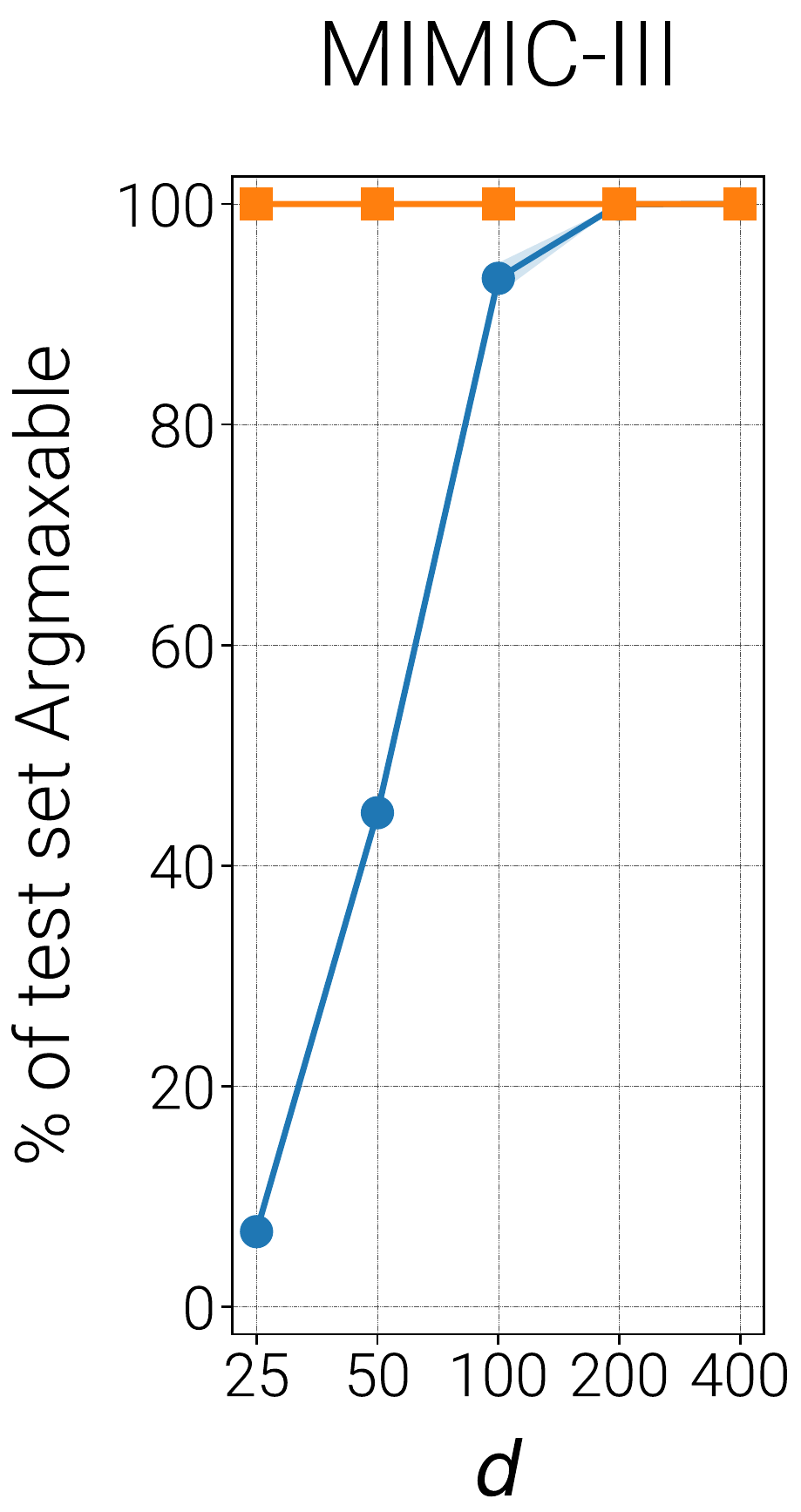}};
    \node[rotate=90, above of=m, yshift=.6cm] {\large Argmaxability};
    \end{tikzpicture}
    \begin{tikzpicture}[inner sep=0pt, remember picture]
    \node at (0,0) {\includegraphics[width=.25\linewidth]{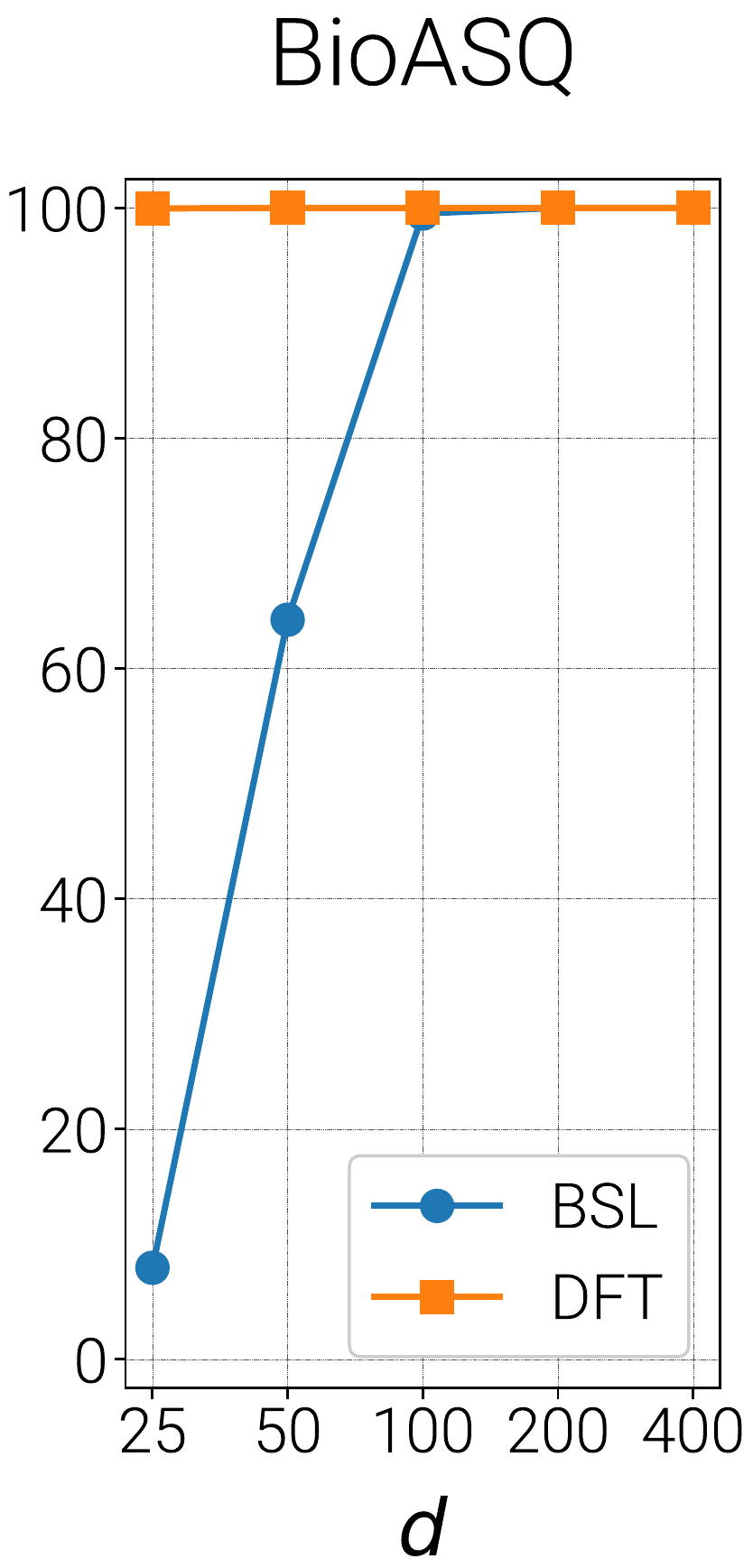}};
    \end{tikzpicture}
    \begin{tikzpicture}[inner sep=0pt, remember picture]
    \node at (0,0) {\includegraphics[width=.25\linewidth]{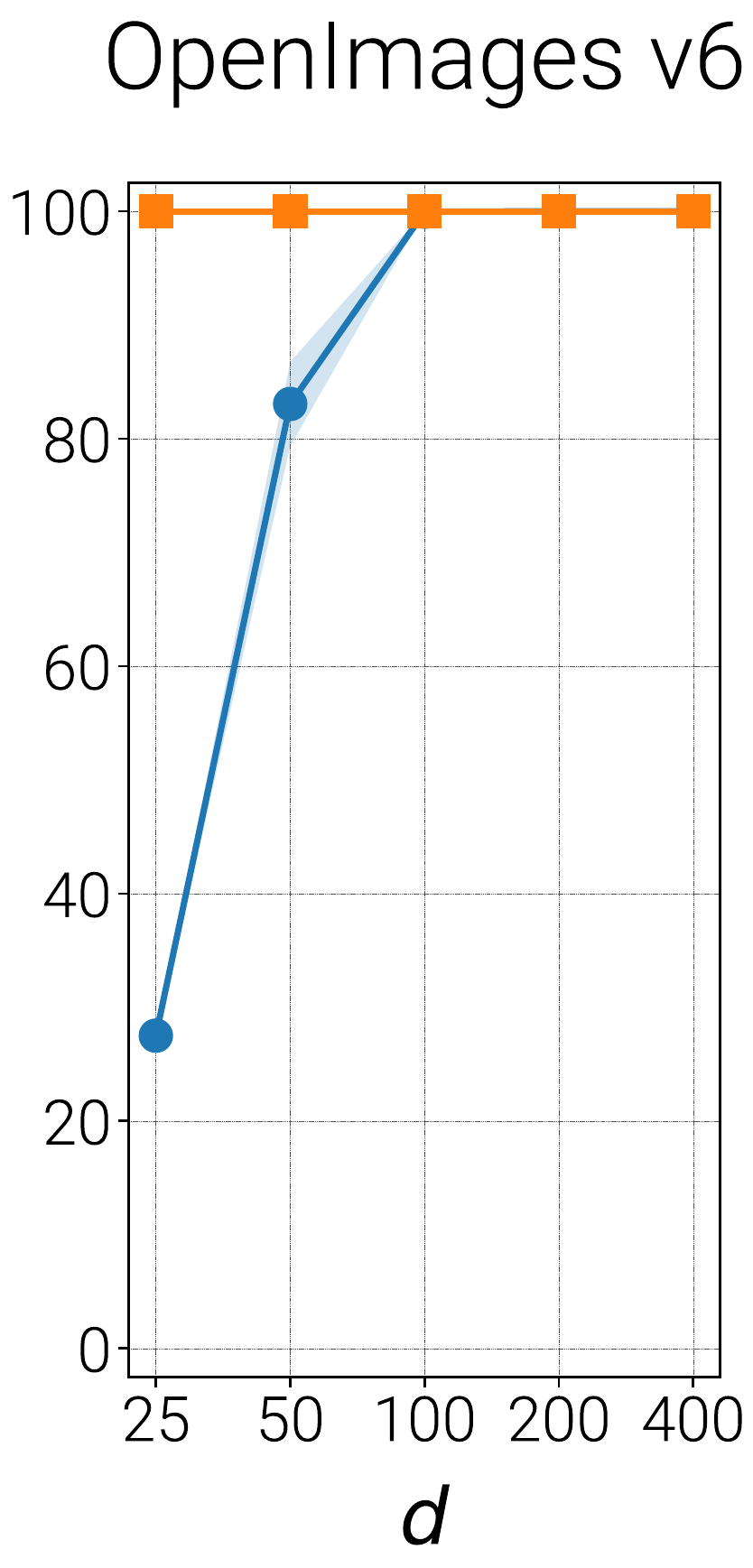}};
    \end{tikzpicture}
    \caption{Our DFT layer does not suffer from unargmaxable test examples. BSLs cannot provide any such guarantees.}
    \label{fig:mimic-argmax}
    \end{subfigure}
    \hfill
    \begin{subfigure}[t]{0.545\linewidth}
    \centering
    \begin{tikzpicture}[inner sep=0pt, remember picture]
    \node (m2) at (0,0) {\includegraphics[width=.29\linewidth]{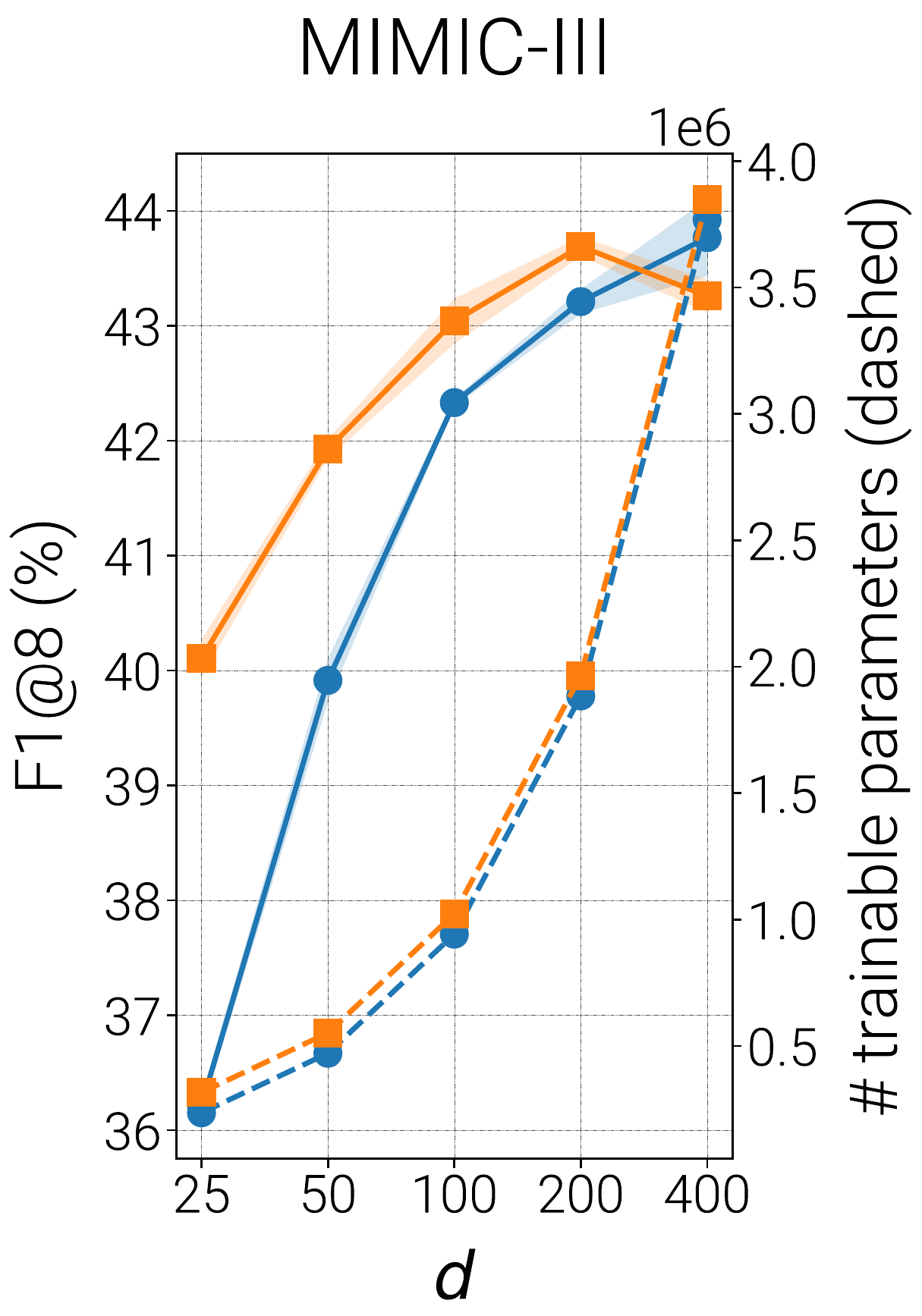}};
    \node[rotate=90, above of=m2, yshift=.9cm] {\large Parameter Efficiency};
    \end{tikzpicture}
    \begin{tikzpicture}[inner sep=0pt, remember picture]
    \node at (0,0) {\includegraphics[width=.29\linewidth]{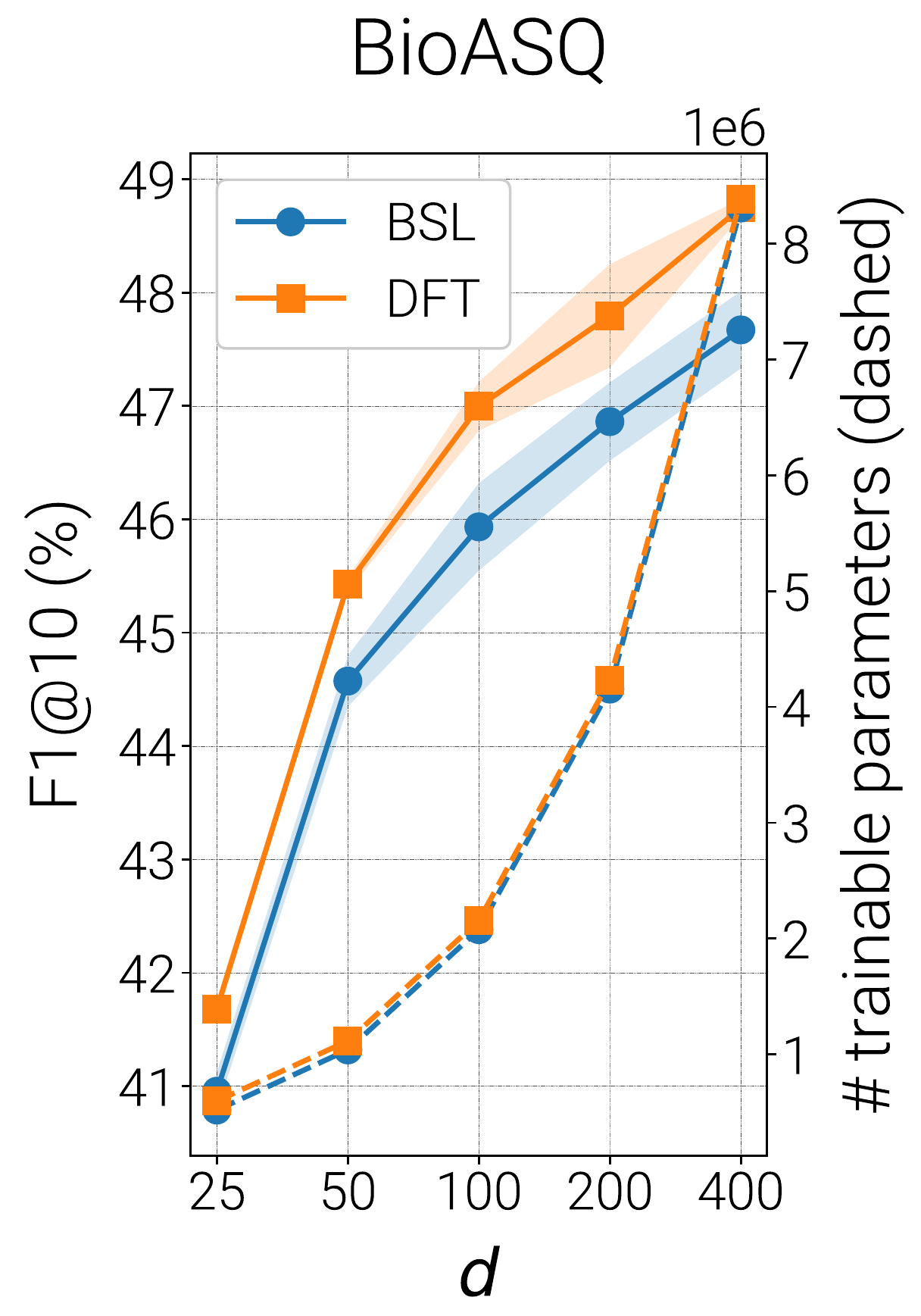}};
    \end{tikzpicture}
    \begin{tikzpicture}[inner sep=0pt, remember picture]
    \node at (0,0) {\includegraphics[width=.29\linewidth]{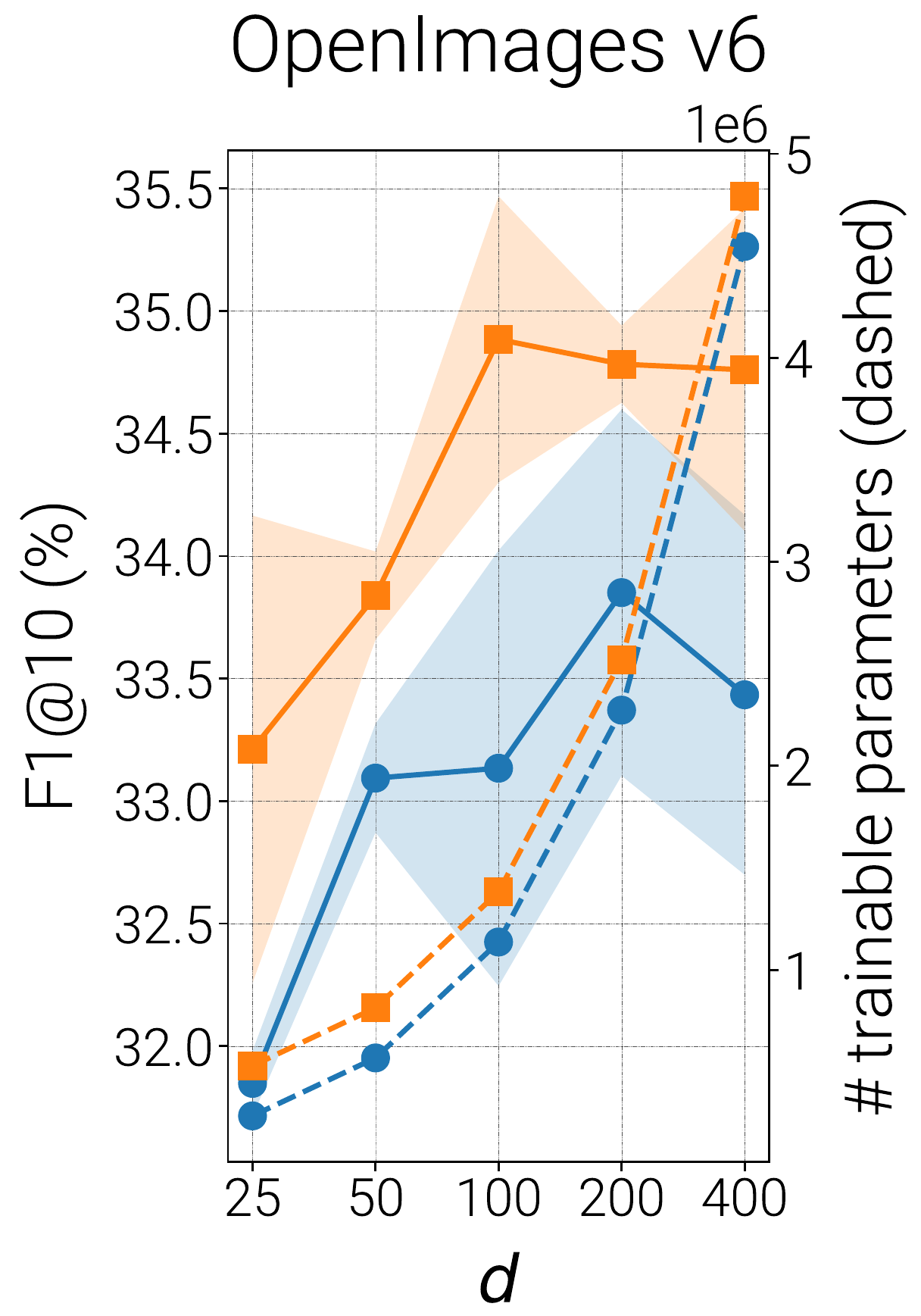}};
    \end{tikzpicture}
    \caption{Our DFT layer is more parameter efficient. F1 is overall better than the BSL, so we can match BSL's F1 with fewer trainable parameters (smaller $d$).
    }
    \label{fig:mimic-f1}
    \end{subfigure}
    \caption{
    Comparison of BSL and our DFT output layer on three MLC datasets. We vary, $d$, the number of trainable dimensions and plot the mean and std (shaded) over 3 runs with different random seeds. Left: We use the LP to verify the output layers on the test sets. When $d < 200$ a large percentage of label configurations becomes unargmaxable for the BSL, in contrast to our DFT. Right: Performance in F1@8 or F1@10 (left axis) in terms of the number of trainable parameters (right axis, dashed lines). Our DFT is better ($d \leq 200$) or comparable to the BSL (MIMIC-III, $d > 200$). We can therefore retain a high F1@k score for DFT even if we reduce the number of trainable parameters by making the bottleneck narrower. For example, on BioASQ we can surpass the F1@10 of the BSL that has $d=200$ with a DFT of $d=100$, which has about 50\% fewer trainable parameters.
    }
\end{figure*}
\begin{table}[b]
    \centering
    \small
    \begin{tabular}{lccccc}
    \toprule
     MLC Dataset   &     n    & k   & N      & encoder \\ \toprule
 MIMIC-III    & $8921$   & 80      &  44k   & CNN (e=500)\\
 BioASQ task A & $20000$  & 50      & 100k   & BERT (e=768)\\
 OpenImagesV6  & $8933$   & 50      & 108k   & TResnet  (e=2432)\\
 \bottomrule
    \end{tabular}
    \caption{Setup: Number of labels, $n$, max number of active labels, $k$, and number of training examples, $N$.}
    \label{tab:data-stats}
\end{table}
\subsection{MLC Tasks (Datasets)}
We now introduce the three datasets. We summarise their important attributes in~\cref{tab:data-stats}.
See~\cref{app:repr} for details on dataset construction and more dataset statistics.
\subsubsection{Clinical Coding (MIMIC-III)}
\label{sec:exp-mimic}
We first test the DFT layer on MIMIC-III~\citep{Johnson2016}. For this safety critical application of clinical coding, the goal is to tag each clinical note with a set of relevant ICD-9 codes which describe findings (see \cref{fig:problem}). We retrain the CNN encoder model defined in~\citet{mullenbach2018} which has $n=8921$ and $e=500$.
We use the same word embeddings, preprocessed data, data splits, metrics (Prec@8) and hyperparameters reported in the paper~\citep{mullenbach2018}. We only change the learning rate of the Adam optimiser to $0.001$, as this improves results (as also found by~\citet{Edin2023}).

\subsubsection{Semantic Indexing (BioASQ Task A)}
Next, we focus on the $2021$ BioASQ semantic indexing challenge~\citep{Tsatsaronis2015,Nentidis2021OverviewOB,krithara2023}. For this task, we are given PubMed abstracts and asked to predict a set of relevant MeSH headings\footnote{\url{https://www.nlm.nih.gov/mesh/meshhome.html}} for each article.
We create dataset splits (see~\cref{app:dataset-proc} for details) with $n=20000$, making sure that all individual labels occur in both the train and test sets. We do so to avoid claiming a label assignment is unargmaxable when in fact it would be hard to predict the labels that constitute it.
We use $k=50$ as this is the maximum number of active labels per example for this dataset by construction.
We finetune PubMedBERT~\citep{pubmedbert}, a domain specific uncased BERT~\citep{devlin2018} encoder that has been pretrained on PubMed abstracts and has $e=768$.
We use early stopping with a patience of $10$ on the validation crossentropy loss.
\subsubsection{Image MLC (OpenImages v6)}
We use the OpenImages v6 dataset~\citep{Kuznetsova2018}
where the goal is to tag each image with objects that appear in it. 
Similar to the BioASQ case, we choose the label vocabulary $n=8933$ such that all examples in the train, validation and test set are covered. We pick $k=50$ since the training data has at most $45$ active labels per example. 
We finetune the TResnet~\citep{Ridnik2020} with $e=2432$ that~\citet{Baruch2020} pretrained for MLC on this dataset.
We use early stopping with a patience of $10$ on the validation crossentropy loss.

\subsection{Results}
\label{sec:results}
\textbf{RQ1) BSL: Unargmaxable outputs.}  We use the LP to verify the BSL on the test sets. 
As can be seen in~\cref{fig:mimic-argmax}, for $d > 200$ all the examples in the test set are argmaxable for the BSL. However, as we reduce $d$, the number of unargmaxable label assignments increases for all datasets. More specifically, we first get unargmaxable outputs at $d=200$ for MIMIC-III (see~\cref{app:inf-mimic}).
Analogous considerations can be drawn for BioASQ and OpenImages (\cref{fig:mimic-argmax}).
As such, we conclude that unargmaxability can indeed be an issue when 
$d$ is not large enough.
Note that one can never determine a $d$ that can always guarantee all label configurations of interest to be argmaxable: even an exhaustive verification on all test samples does not imply that future unseen configurations will be argmaxable.
\textbf{RQ2) DFT: Argmaxability.} While we know that the DFT layer guarantees argmaxability in theory, we also see that it works in practice (apart from a handful of outputs when $d=25$, which are $\epsilon$-unargmaxable, see~\cref{app:unargmax-test}). Crucially, these guarantees also apply to unseen $k$-active label assignments; this would be impossible to enforce with BSLs, as discussed.
\textbf{RQ3) Parameter Efficiency.}
 In addition, we turn to~\cref{fig:mimic-f1}, and see that the DFT layer outperforms the BSL layer by a wide margin for small $d$. This allows us to match the performance of the BSL using a DFT with smaller $d$ and hence fewer trainable parameters.
As can be seen, in some cases DFT layers obtain better or comparable performance with up to 50\% less trainable parameters.
\textbf{Faster Training.}
Lastly, a benefit of DFT layers is that they converge faster due to the initialisation trick (~\cref{sec:fast}), see~\cref{app:dft-conv} for a comparison of training times.

\section{Related Work}
\label{sec:relatedwork}
\shortparagraph{Limitations of Low-rank Parametrisations in MLC.}
Previous work has shown that the low-rank assumption can be problematic for MLC: long-tail labels lack strong linear dependencies, making the output label matrix high-rank~\citep{Bhatia2015, Xu2016, Tagami2017}.
As such, low-rank approximations suffer from high reconstruction error.
Herein, we highlight a more tangible limitation:
meaningful label assignments can be unargmaxable.
While we prove that we can retain argmaxability with a very low-rank matrix if the outputs are sparse, we concur that low-rank parametrisations can still be problematic, especially for predicting long-tail labels.
For effective prediction of such labels,~\citet{Babbar2019} showed the importance of making a model robust to input perturbations. We similarly find that we need $\epsilon$-argmaxability with a large enough $\epsilon$, both for effective training and prediction.
\shortparagraph{Sign Rank of a Matrix.}
Consider approximating a matrix of sign vectors with a matrix $\vv{M}$ of rank $r$. The smallest $r$ for which all sign vectors can be recovered by taking the sign of $\vv{M}$ is known as its sign rank~\citep{Alon2015,Neumann2016}, see~\cref{app:signrank} for more details.
Our rank $2k+1$ approximation of a $k$-active label matrix is a non-trivial upper bound on its sign rank, see also~\citet[Theorem 1.1, part i]{Alon1985}.
~\citet[Theorem 6]{Chanpuriya2020} use a similar construction
to show that adjacency matrices of graphs of bounded degree $k$ can be embedded in $2k+1$ dimensions.

\section{Discussion and Conclusion}
\label{sec:conclusion}

Through extensive experiments on three multi-label classification datasets we have shown that BSLs -- which are still ubiquitous in large MLC scenarios (\cref{sec:bsl}) -- can have unargmaxable label assignments when the label vocabulary is much larger than the number of input features (\cref{sec:exps}).
We believe practitioners should be aware that selecting the feature dimension of a BSL is not an innocuous hyperparameter search: by lowering the rank of the output layer they are potentially making some label assignments impossible to predict.
Unargmaxability impacts the generalisation and trustworthiness of these classifiers, since BSLs cannot guarantee that meaningful label assignments will not be missed at test time or that the classifiers will not be targetted by adversarial attacks~\citep{zhu2019transferable,aghakhani2021bullseye}.

However, we showed that this does \emph{not} have to be the case. We provided a family of parametrisations that guarantee that all label assignments having up to $k$ active labels are argmaxable (\cref{sec:criteria}) and implemented our DFT output layer which converges faster than a BSL, is up to 50\% more parameter efficient, and outperforms or matches the BSL on three widely used MLC datasets.

Our findings also prompt several avenues for future work.
As we make the sigmoid bottleneck narrower, the argmaxable regions get smaller (\cref{fig:eargmax}), making label assignments harder to predict robustly.
Can we parametrise output layers in a way that guarantees $\epsilon$-argmaxability for large $\epsilon$?
Moreover, while we focussed on BSLs, there are many families of output layers to analyse. E.g. those that
a) partition the label vocabulary so that classifiers do not all compete in a shared feature space~\citep{Yu2020},
b) parametrise the classifier based on the input~\citep{mullenbach2018} and
c) predict labels autoregressively~\citep{simig-etal-2022,Kementchedjhieva2023}.
It is an open question for future work to verify that these models do not admit unargmaxable outputs and to robustify them otherwise.

\section*{Acknowledgements}
 
We would like to thank Lorenzo Loconte, Rickey Liang, and Marina Potsi for feedback and Emile van Krieken for feedback and catching omissions. Furthermore, Asif Khan for many fruitful discussions and help connecting the Cyclic Polytope to truncated DFT, Nikolay Bogoychev for discussing multi-label classification datasets, Jesse Sigal for discussions on cyclic permutations, Matúš Falis for discussing multi-label classification and datasets, Tom Sherbourne for discussing DFT and low-pass filters and Ivan Titov and Henry Gouk for helpful suggestions and advice.

AG was supported by the Engineering and Physical Sciences Research Council (EP/R513209/1). AV was supported by the ``UNREAL: Unified Reasoning Layer for Trustworthy ML'' project (EP/Y023838/1) selected by the ERC and funded by UKRI EPSRC.

\bibliography{aaai24}

\begin{thebibliography}{48}
\providecommand{\natexlab}[1]{#1}

\bibitem[{Aghakhani et~al.(2021)Aghakhani, Meng, Wang, Kruegel, and
  Vigna}]{aghakhani2021bullseye}
Aghakhani, H.; Meng, D.; Wang, Y.-X.; Kruegel, C.; and Vigna, G. 2021.
\newblock Bullseye polytope: A scalable clean-label poisoning attack with
  improved transferability.
\newblock In \emph{2021 IEEE European Symposium on Security and Privacy
  (EuroS\&P)}, 159--178. IEEE.

\bibitem[{Alon, Frankl, and Rodl(1985)}]{Alon1985}
Alon, N.; Frankl, P.; and Rodl, V. 1985.
\newblock Geometrical realization of set systems and probabilistic
  communication complexity.
\newblock In \emph{26th Annual Symposium on Foundations of Computer Science
  (sfcs 1985)}, 277--280.

\bibitem[{Alon, Moran, and Yehudayoff(2015)}]{Alon2015}
Alon, N.; Moran, S.; and Yehudayoff, A. 2015.
\newblock Sign rank versus VC dimension.
\newblock In \emph{Annual Conference Computational Learning Theory}.

\bibitem[{Babbar and Sch{\"o}lkopf(2019)}]{Babbar2019}
Babbar, R.; and Sch{\"o}lkopf, B. 2019.
\newblock Data scarcity, robustness and extreme multi-label classification.
\newblock \emph{Machine Learning}, 1--23.

\bibitem[{Baruch et~al.(2020)Baruch, Ridnik, Zamir, Noy, Friedman, Protter, and
  Zelnik-Manor}]{Baruch2020}
Baruch, E.~B.; Ridnik, T.; Zamir, N.; Noy, A.; Friedman, I.; Protter, M.; and
  Zelnik-Manor, L. 2020.
\newblock Asymmetric Loss For Multi-Label Classification.
\newblock \emph{2021 IEEE/CVF International Conference on Computer Vision
  (ICCV)}, 82--91.

\bibitem[{Bhatia et~al.(2015)Bhatia, Jain, Kar, Varma, and Jain}]{Bhatia2015}
Bhatia, K.; Jain, H.; Kar, P.; Varma, M.; and Jain, P. 2015.
\newblock Sparse Local Embeddings for Extreme Multi-label Classification.
\newblock In Cortes, C.; Lawrence, N.; Lee, D.; Sugiyama, M.; and Garnett, R.,
  eds., \emph{Advances in Neural Information Processing Systems}, volume~28.
  Curran Associates, Inc.

\bibitem[{Björner et~al.(1999)Björner, Las~Vergnas, Sturmfels, White, and
  Ziegler}]{OrientedMatroids1999}
Björner, A.; Las~Vergnas, M.; Sturmfels, B.; White, N.; and Ziegler, G.~M.
  1999.
\newblock \emph{Oriented Matroids}.
\newblock Encyclopedia of Mathematics and its Applications. Cambridge
  University Press, second edition.

\bibitem[{Boyd and Vandenberghe(2004)}]{boyd2004}
Boyd, S.~P.; and Vandenberghe, L. 2004.
\newblock \emph{Convex optimization}.
\newblock Cambridge University Press.

\bibitem[{Chanpuriya et~al.(2020)Chanpuriya, Musco, Sotiropoulos, and
  Tsourakakis}]{Chanpuriya2020}
Chanpuriya, S.; Musco, C.; Sotiropoulos, K.; and Tsourakakis, C. 2020.
\newblock Node Embeddings and Exact Low-Rank Representations of Complex
  Networks.
\newblock In Larochelle, H.; Ranzato, M.; Hadsell, R.; Balcan, M.; and Lin, H.,
  eds., \emph{Advances in Neural Information Processing Systems}, volume~33,
  13185--13198. Curran Associates, Inc.

\bibitem[{Choi et~al.(2018)Choi, Levy, Choi, and Zettlemoyer}]{choi-etal-2018}
Choi, E.; Levy, O.; Choi, Y.; and Zettlemoyer, L. 2018.
\newblock Ultra-Fine Entity Typing.
\newblock In \emph{Proceedings of the 56th Annual Meeting of the Association
  for Computational Linguistics (Volume 1: Long Papers)}, 87--96. Melbourne,
  Australia: Association for Computational Linguistics.

\bibitem[{Cordovil and Duchet(2000)}]{Cordovil2000}
Cordovil, R.; and Duchet, P. 2000.
\newblock Cyclic Polytopes and Oriented Matroids.
\newblock \emph{European Journal of Combinatorics}, 21(1): 49--64.

\bibitem[{Cover(1965)}]{Cover1965}
Cover, T.~M. 1965.
\newblock Geometrical and Statistical Properties of Systems of Linear
  Inequalities with Applications in Pattern Recognition.
\newblock \emph{IEEE Transactions on Electronic Computers}, EC-14(3): 326--334.

\bibitem[{Demeter, Kimmel, and Downey(2020)}]{demeter2020stolen}
Demeter, D.; Kimmel, G.; and Downey, D. 2020.
\newblock Stolen Probability: A Structural Weakness of Neural Language Models.
\newblock In \emph{Proceedings of the 58th Annual Meeting of the Association
  for Computational Linguistics}, 2191--2197. Online: Association for
  Computational Linguistics.

\bibitem[{Devlin et~al.(2019)Devlin, Chang, Lee, and Toutanova}]{devlin2018}
Devlin, J.; Chang, M.-W.; Lee, K.; and Toutanova, K. 2019.
\newblock {BERT}: Pre-training of Deep Bidirectional Transformers for Language
  Understanding.
\newblock In \emph{Proceedings of the 2019 Conference of the North {A}merican
  Chapter of the Association for Computational Linguistics: Human Language
  Technologies, Volume 1 (Long and Short Papers)}, 4171--4186. Minneapolis,
  Minnesota: Association for Computational Linguistics.

\bibitem[{Donoho and Tanner(2005)}]{Donoho2005}
Donoho, D.~L.; and Tanner, J. 2005.
\newblock Sparse nonnegative solution of underdetermined linear equations by
  linear programming.
\newblock \emph{Proceedings of the National Academy of Sciences of the United
  States of America}, 102 27: 9446--51.

\bibitem[{Edin et~al.(2023)Edin, Junge, Havtorn, Borgholt, Maistro, Ruotsalo,
  and Maal{\o}e}]{Edin2023}
Edin, J.; Junge, A.; Havtorn, J.~D.; Borgholt, L.; Maistro, M.; Ruotsalo, T.;
  and Maal{\o}e, L. 2023.
\newblock Automated Medical Coding on MIMIC-III and MIMIC-IV: A Critical Review
  and Replicability Study.
\newblock \emph{Proceedings of the 46th International ACM SIGIR Conference on
  Research and Development in Information Retrieval}.

\bibitem[{Gale(1963)}]{Gale1963}
Gale, D. 1963.
\newblock Neighborly and cyclic polytopes.
\newblock In \emph{Proceedings of Symposia in Pure Mathematics}, volume~7,
  225--232.

\bibitem[{Ganea et~al.(2019)Ganea, Gelly, Becigneul, and Severyn}]{Ganea2019}
Ganea, O.; Gelly, S.; Becigneul, G.; and Severyn, A. 2019.
\newblock Breaking the Softmax Bottleneck via Learnable Monotonic Pointwise
  Non-linearities.
\newblock In Chaudhuri, K.; and Salakhutdinov, R., eds., \emph{Proceedings of
  the 36th International Conference on Machine Learning}, volume~97 of
  \emph{Proceedings of Machine Learning Research}, 2073--2082. PMLR.

\bibitem[{Gantmakher and Kre{\u\i}n(1961)}]{Gantmakher1961}
Gantmakher, F.; and Kre{\u\i}n, M. 1961.
\newblock \emph{Oscillation Matrices and Kernels and Small Vibrations of
  Mechanical Systems}.
\newblock American Mathematical Society.
\newblock ISBN 9780821831717.

\bibitem[{Grivas, Bogoychev, and Lopez(2022)}]{grivas-2022}
Grivas, A.; Bogoychev, N.; and Lopez, A. 2022.
\newblock Low-Rank Softmax Can Have Unargmaxable Classes in Theory but Rarely
  in Practice.
\newblock In \emph{Proceedings of the 60th Annual Meeting of the Association
  for Computational Linguistics (Volume 1: Long Papers)}, 6738--6758. Dublin,
  Ireland: Association for Computational Linguistics.

\bibitem[{Gu et~al.(2021)Gu, Tinn, Cheng, Lucas, Usuyama, Liu, Naumann, Gao,
  and Poon}]{pubmedbert}
Gu, Y.; Tinn, R.; Cheng, H.; Lucas, M.; Usuyama, N.; Liu, X.; Naumann, T.; Gao,
  J.; and Poon, H. 2021.
\newblock Domain-Specific Language Model Pretraining for Biomedical Natural
  Language Processing.
\newblock \emph{ACM Trans. Comput. Healthcare}, 3(1).

\bibitem[{{Gurobi Optimization}(2021)}]{gurobi}
{Gurobi Optimization}. 2021.
\newblock {Gurobi Optimizer Reference Manual}.

\bibitem[{Jain et~al.(2019)Jain, Balasubramanian, Chunduri, and
  Varma}]{Jain2019}
Jain, H.; Balasubramanian, V.; Chunduri, B.; and Varma, M. 2019.
\newblock Slice: Scalable Linear Extreme Classifiers Trained on 100 Million
  Labels for Related Searches.
\newblock \emph{Proceedings of the Twelfth ACM International Conference on Web
  Search and Data Mining}.

\bibitem[{J{\"a}rvelin and Kek{\"a}l{\"a}inen(2002)}]{Jrvelin2002}
J{\"a}rvelin, K.; and Kek{\"a}l{\"a}inen, J. 2002.
\newblock Cumulated gain-based evaluation of IR techniques.
\newblock \emph{ACM Trans. Inf. Syst.}, 20: 422--446.

\bibitem[{Johnson et~al.(2016)Johnson, Pollard, Shen, Lehman, Feng, Ghassemi,
  Moody, Szolovits, Celi, and Mark}]{Johnson2016}
Johnson, A. E.~W.; Pollard, T.~J.; Shen, L.; Lehman, L.-W.~H.; Feng, M.;
  Ghassemi, M.; Moody, B.; Szolovits, P.; Celi, L.~A.; and Mark, R.~G. 2016.
\newblock {MIMIC-III}, a freely accessible critical care database.
\newblock \emph{Sci. Data}, 3(1): 160035.

\bibitem[{Karp(2017)}]{Karp2017}
Karp, S.~N. 2017.
\newblock Sign variation, the Grassmannian, and total positivity.
\newblock \emph{Journal of Combinatorial Theory, Series A}, 145: 308--339.

\bibitem[{Kementchedjhieva and Chalkidis(2023)}]{Kementchedjhieva2023}
Kementchedjhieva, Y.; and Chalkidis, I. 2023.
\newblock An Exploration of Encoder-Decoder Approaches to Multi-Label
  Classification for Legal and Biomedical Text.
\newblock In Rogers, A.; Boyd-Graber, J.; and Okazaki, N., eds., \emph{Findings
  of the Association for Computational Linguistics: ACL 2023}, 5828--5843.
  Toronto, Canada: Association for Computational Linguistics.

\bibitem[{Krithara et~al.(2023)Krithara, Mork, Nentidis, and
  Paliouras}]{krithara2023}
Krithara, A.; Mork, J.~G.; Nentidis, A.; and Paliouras, G. 2023.
\newblock The road from manual to automatic semantic indexing of biomedical
  literature: a 10 years journey.
\newblock \emph{Frontiers in Research Metrics and Analytics}, 8.

\bibitem[{Kulmanov and Hoehndorf(2019)}]{Kulmanov2019}
Kulmanov, M.; and Hoehndorf, R. 2019.
\newblock DeepGOPlus: improved protein function prediction from sequence.
\newblock \emph{Bioinformatics}, 36: 422 -- 429.

\bibitem[{Kuznetsova et~al.(2020)Kuznetsova, Rom, Alldrin, Uijlings, Krasin,
  Pont-Tuset, Kamali, Popov, Malloci, Kolesnikov, Duerig, and
  Ferrari}]{Kuznetsova2018}
Kuznetsova, A.; Rom, H.; Alldrin, N.; Uijlings, J.; Krasin, I.; Pont-Tuset, J.;
  Kamali, S.; Popov, S.; Malloci, M.; Kolesnikov, A.; Duerig, T.; and Ferrari,
  V. 2020.
\newblock The Open Images Dataset V4: Unified image classification, object
  detection, and visual relationship detection at scale.
\newblock \emph{IJCV}.

\bibitem[{Mullenbach et~al.(2018)Mullenbach, Wiegreffe, Duke, Sun, and
  Eisenstein}]{mullenbach2018}
Mullenbach, J.; Wiegreffe, S.; Duke, J.~D.; Sun, J.; and Eisenstein, J. 2018.
\newblock Explainable Prediction of Medical Codes from Clinical Text.
\newblock In \emph{NAACL}.

\bibitem[{Nentidis et~al.(2021)Nentidis, Katsimpras, Vandorou, Krithara, and
  Paliouras}]{Nentidis2021OverviewOB}
Nentidis, A.; Katsimpras, G.; Vandorou, E.; Krithara, A.; and Paliouras, G.
  2021.
\newblock Overview of BioASQ Tasks 9a, 9b and Synergy in CLEF2021.
\newblock In \emph{Conference and Labs of the Evaluation Forum}.

\bibitem[{Neumann, Gemulla, and Miettinen(2016)}]{Neumann2016}
Neumann, S.; Gemulla, R.; and Miettinen, P. 2016.
\newblock What You Will Gain By Rounding: Theory and Algorithms for Rounding
  Rank.
\newblock \emph{2016 IEEE 16th International Conference on Data Mining (ICDM)},
  380--389.

\bibitem[{Paul and Rakshit(2014)}]{Paul2020}
Paul, S.; and Rakshit, S. 2014.
\newblock Large-scale multi-label text classification.
\newblock \url{https://keras.io/examples/nlp/multi_label_classification}.
\newblock Accessed: 2023-08-15.

\bibitem[{Pedregosa et~al.(2011)Pedregosa, Varoquaux, Gramfort, Michel,
  Thirion, Grisel, Blondel, Prettenhofer, Weiss, Dubourg, Vanderplas, Passos,
  Cournapeau, Brucher, Perrot, and Duchesnay}]{scikit-learn}
Pedregosa, F.; Varoquaux, G.; Gramfort, A.; Michel, V.; Thirion, B.; Grisel,
  O.; Blondel, M.; Prettenhofer, P.; Weiss, R.; Dubourg, V.; Vanderplas, J.;
  Passos, A.; Cournapeau, D.; Brucher, M.; Perrot, M.; and Duchesnay, E. 2011.
\newblock Scikit-learn: Machine Learning in {P}ython.
\newblock \emph{Journal of Machine Learning Research}, 12: 2825--2830.

\bibitem[{Pinkus(2009)}]{Pinkus2009}
Pinkus, A. 2009.
\newblock \emph{Basic properties of totally positive and strictly totally
  positive matrices}, 1–35.
\newblock Cambridge Tracts in Mathematics. Cambridge University Press.

\bibitem[{Postnikov(2006)}]{Postnikov2006}
Postnikov, A. 2006.
\newblock Total positivity, Grassmannians, and networks.
\newblock \emph{arXiv: Combinatorics}.

\bibitem[{Ridnik et~al.(2020)Ridnik, Lawen, Noy, and Friedman}]{Ridnik2020}
Ridnik, T.; Lawen, H.; Noy, A.; and Friedman, I. 2020.
\newblock TResNet: High Performance GPU-Dedicated Architecture.
\newblock \emph{2021 IEEE Winter Conference on Applications of Computer Vision
  (WACV)}, 1399--1408.

\bibitem[{Schultheis and Babbar(2022)}]{Schultheis2021}
Schultheis, E.; and Babbar, R. 2022.
\newblock Speeding-up one-versus-all training for extreme classification via
  mean-separating initialization.
\newblock \emph{Machine Learning}, 111(11): 3953--3976.

\bibitem[{Simig et~al.(2022)Simig, Petroni, Yanki, Popat, Du, Riedel, and
  Yazdani}]{simig-etal-2022}
Simig, D.; Petroni, F.; Yanki, P.; Popat, K.; Du, C.; Riedel, S.; and Yazdani,
  M. 2022.
\newblock Open Vocabulary Extreme Classification Using Generative Models.
\newblock In \emph{Findings of the Association for Computational Linguistics:
  ACL 2022}, 1561--1583. Dublin, Ireland: Association for Computational
  Linguistics.

\bibitem[{Sturmfels(1988)}]{Sturmfels1987}
Sturmfels, B. 1988.
\newblock Neighborly Polytopes and Oriented Matroids.
\newblock \emph{European Journal of Combinatorics}, 9(6): 537--546.

\bibitem[{Tagami(2017)}]{Tagami2017}
Tagami, Y. 2017.
\newblock AnnexML: Approximate Nearest Neighbor Search for Extreme Multi-label
  Classification.
\newblock \emph{Proceedings of the 23rd ACM SIGKDD International Conference on
  Knowledge Discovery and Data Mining}.

\bibitem[{Tsatsaronis et~al.(2015)Tsatsaronis, Balikas, Malakasiotis, Partalas,
  Zschunke, Alvers, Weissenborn, Krithara, Petridis, Polychronopoulos,
  Almirantis, Pavlopoulos, Baskiotis, Gallinari, Arti{\`e}res, Ngomo, Heino,
  Gaussier, Barrio-Alvers, Schroeder, Androutsopoulos, and
  Paliouras}]{Tsatsaronis2015}
Tsatsaronis, G.; Balikas, G.; Malakasiotis, P.; Partalas, I.; Zschunke, M.;
  Alvers, M.~R.; Weissenborn, D.; Krithara, A.; Petridis, S.; Polychronopoulos,
  D.; Almirantis, Y.; Pavlopoulos, J.; Baskiotis, N.; Gallinari, P.;
  Arti{\`e}res, T.; Ngomo, A.-C.~N.; Heino, N.; Gaussier, {\'E}.;
  Barrio-Alvers, L.; Schroeder, M.; Androutsopoulos, I.; and Paliouras, G.
  2015.
\newblock An overview of the BIOASQ large-scale biomedical semantic indexing
  and question answering competition.
\newblock \emph{BMC Bioinformatics}, 16.

\bibitem[{Xu, Tao, and Xu(2016)}]{Xu2016}
Xu, C.; Tao, D.; and Xu, C. 2016.
\newblock Robust Extreme Multi-label Learning.
\newblock \emph{Proceedings of the 22nd ACM SIGKDD International Conference on
  Knowledge Discovery and Data Mining}.

\bibitem[{Yang et~al.(2018)Yang, Dai, Salakhutdinov, and
  Cohen}]{yang2017breaking}
Yang, Z.; Dai, Z.; Salakhutdinov, R.; and Cohen, W.~W. 2018.
\newblock Breaking the Softmax Bottleneck: {A} High-Rank {RNN} Language Model.
\newblock In \emph{6th International Conference on Learning Representations,
  {ICLR} 2018, Vancouver, BC, Canada, April 30 - May 3, 2018, Conference Track
  Proceedings}.

\bibitem[{Yu, Zhong, and Dhillon(2020)}]{Yu2020}
Yu, H.-F.; Zhong, K.; and Dhillon, I.~S. 2020.
\newblock PECOS: Prediction for Enormous and Correlated Output Spaces.
\newblock \emph{Proceedings of the 28th ACM SIGKDD Conference on Knowledge
  Discovery and Data Mining}.

\bibitem[{Zhu et~al.(2019)Zhu, Huang, Li, Taylor, Studer, and
  Goldstein}]{zhu2019transferable}
Zhu, C.; Huang, W.~R.; Li, H.; Taylor, G.; Studer, C.; and Goldstein, T. 2019.
\newblock Transferable clean-label poisoning attacks on deep neural nets.
\newblock In \emph{International Conference on Machine Learning}, 7614--7623.
  PMLR.

\bibitem[{Ziegler(1994)}]{Ziegler1994LecturesOP}
Ziegler, G.~M. 1994.
\newblock \emph{Lectures on Polytopes}.
\newblock Springer New York, NY.

\end{thebibliography}

\newpage
\clearpage
\appendix

\section{Vectors in General Position}
\begin{definition}
We say $n$ vectors are in \textbf{general position} in $\R^d$ if all subsets of $d$ or fewer vectors is linearly independent~\citep{Cover1965}.

\end{definition}

Intuitively, this means that the vectors are no more dependent than they need to be in $\R^{d}$.
No 2 vectors lie on a line through the origin, no 3 vectors lie on a plane through the origin, no $d$ vectors lie in a $d-1$ subspace.
Algebraically, it means that a $d \times d$ matrix formed by stacking any $d$ out of the $n$ vectors together has non-zero determinant.

\label{sec:gp}

\section{The Cyclic Polytope}
\label{app:cyc}

To prove \cref{lem:dft} in the next section, we leverage the theorem in \citet{Cordovil2000}, who present their result in terms of the homogenisation of the Cyclic Polytope. Herein we highlight the equivalence of the homogenisation of the Cyclic Polytope to a DFT matrix with total order constraints on the $t_i$. We will need this to make claims about the maximal minors of the DFT matrix.

\paragraph{The Cyclic Polytope} We start with the standard definition. The $\cyclic{n}{d}$ with $n$ vertices in $\R^d$ is defined as the convex hull of $n > d$ distinct points on the moment curve in $\R^d$~\citep[Example 0.6, p.11]{Ziegler1994LecturesOP}. The moment curve is a map $m(t): \R \mapsto \R^d$ defined as:
\begin{align}
\cyclic{n}{d} =& \convex{m(t_1), m(t_2), \ldots, m(t_n)} \\
\text{where}\quad m(t) =& \begin{bmatrix} t \\ t^2 \\ \vdots \\ t^d \end{bmatrix},\quad
t_1 < t_2 < \ldots < t_n
\end{align}

\paragraph{Homogenisation} In order to study the affine dependencies of a point configuration in $\R^d$ (e.g. the face structure of a polytope), it is convenient to map them to linear dependencies of a vector configuration in $\R^{d+1}$ and study those instead. This can be done via homogenisation: we map points in $\R^d$ to vectors in $\R^{d+1}$ by appending an extra dimension and fixing it to $1$~\citep[Section 6.2]{Ziegler1994LecturesOP}, i.e. $\operatorname{hom}: \R^d \rightarrow \R^{d+1},\, \operatorname{hom}(\vv{x}) = \begin{pmatrix}1\\ \vv{x}\end{pmatrix}$. To abide by earlier notation, we stack the vertices of the Cyclic Polytope in the rows of the matrix. For the standard Cyclic Polytope on the moment curve we get a Vandermonde matrix:

\resizebox{.85\columnwidth}{!}{
\begin{minipage}{\columnwidth}
\begin{align}
\cyclic{n}{d} &= \begin{bmatrix}
t_1 & t_1^2 & \cdots & t_1^d \\
t_2 & t_2^2 & \cdots & t_2^d \\
\vdots & \vdots & \ddots & \vdots \\
t_n & t_n^2 & \cdots & t_n^d \\
\end{bmatrix} \\
\affine{\cyclic{n}{d}} &= \begin{bmatrix}
1 & t_1 & t_1^2 & \cdots & t_1^d \\
1 & t_2 & t_2^2 & \cdots & t_2^d \\
\vdots & \vdots & \vdots & \ddots & \vdots \\
1 & t_n & t_n^2 & \cdots & t_n^d \\
\end{bmatrix}
\label{eq:cyclic}
\end{align}
\end{minipage}
}

\paragraph{Trigonometric Cyclic Polytope} Instead of the moment curve, \citet{Gale1963} used the trigonometric moment curve to construct the Cyclic Polytope, see also~\citet[Section 3]{Donoho2005}. We note that its homogenisation is the truncated DFT matrix:

\resizebox{.95\columnwidth}{!}{
\begin{minipage}{1.05\columnwidth}
\begin{align}
\cyclic{n}{2k} &= \begin{bmatrix}
\cos{t_1} & \sin{t_1} & \cdots & \cos{k t_1} & \sin{k t_1}\\
\cos{t_2} & \sin{t_2} & \cdots & \cos{k t_2} & \sin{k t_2}\\
\vdots & \vdots & \ddots & \vdots \\
\cos{t_n} & \sin{t_n} & \cdots & \cos{k t_n} & \sin{k t_n}\\
\end{bmatrix}\\
\affine{\cyclic{n}{2k}} &= \begin{bmatrix}
1 & \cos{t_1} & \sin{t_1} & \cdots & \cos{k t_1} & \sin{k t_1}\\
1 & \cos{t_2} & \sin{t_2} & \cdots & \cos{k t_2} & \sin{k t_2}\\
\vdots & \vdots & \vdots & \ddots & \vdots \\
1 & \cos{t_n} & \sin{t_n} & \cdots & \cos{k t_n} & \sin{k t_n}\\
\end{bmatrix} \nonumber
\\ & t_i = \frac{2 \pi (i-1)}{n},\, i \in [n]
\label{eq:trig-cyclic}
\end{align}
\end{minipage}
}

In~\cref{proof:dft}, we will use the connection between the DFT matrix and the Cyclic Polytope to prove Lemma 2.

\section{Grassmanians}

The \textbf{Grassmanian} $\grass{n}{d}$
is the set of all $d$-dimensional subspaces of $\R^n$. We will think of $\grass{n}{d}$ as the space of rank $d$ matrices $\vv{W} \in \R^{n \times d}$ with $1 \leq d \leq n$. More precisely, a single member of $\grass{n}{d}$ defines a subspace and corresponds to all $\vv{W}$ that form a basis for that subspace, i.e. all matrices obtained by multiplying the basis on the right by any invertible $d \times d$ matrix.~\footnote{Column operations change the basis of the columnspace but not the subspace itself.} 
We can subdivide the whole Grassmanian into more granular matrix families by considering the sign of the maximal minors.
In this paper, we are especially interested in the \textbf{Totally Positive Grassmanian} $\pgrass{n}{d}$, the space of $n \times d$ matrices for which all maximal minors are non-zero and have the same sign~(see also~\citet[Definition 3.1]{Postnikov2006}).
We note that if all maximal minors are non-zero, then the rows of $\vv{W}$ are in general position.

\label{app:grass}

\section{Proofs}

\subsection{\cref{lem:altsp}}
\label{proof:altsp}
$\vv{y}$ is $k$-active $\implies \vv{y}$ is at most $2k$-alternating.
\begin{proof}
Construct a $k$-active $\vv{y}$ of length $n$ from the all inactive $\vv{y}$ by flipping all signs after any of the $n-1$ positions between labels. We need at most $2k$ distinct flips, i.e. when the active labels are not adjacent and do not include the $1^{st}$ or $n^{th}$ label, and less otherwise. For example, $\ys{-+--}$ can be produced with 2 flips starting from the all inactive vector: $\ys{----} \rightarrow \ys{-+++} \rightarrow \ys{-+--}$.
\end{proof}

\subsection{\cref{thm:altfeas}}
Here we show that for a classifier $\vv{W} \in \pgrass{n}{d}$ the argmaxable label assignments are the $(d-1)$-alternating vectors. We do so by invoking Theorem 2 to guarantee that any sign vector has at most $d-1$ sign changes. We then use a counting argument to show that the number of argmaxable sign vectors is the same as the number of alternating sign vectors, and as such the argmaxable sign vectors are exactly the $(d-1)$-alternating vectors.
\label{proof:altfeas}
\begin{proof}
Since $\vv{W} \in \pgrass{n}{d}$, all maximal minors are non-zero and hence the rows of $\vv{W}$ are in general position (see~\cref{def:gp}).
By invoking~\cref{thm:cover}, the number of argmaxable label assignments is $\left|\argmaxable{\vv{W}}\right| = 2 \sum\nolimits_{d'=0}^{d-1} \binom{n-1}{d'}$. Note that this is exactly the number of $(d-1)$-alternating label assignments $\left|\Alt{n}{d-1}\right|$ as we elaborate on next. The binomial coefficient comes from choosing $d'$ out of the $n-1$ positions between labels to flip the sign and we sum over all possible number of sign changes up to $d-1$. For each alternating $\vv{y}$, we can produce another by flipping all signs, hence the leading multiplier $2$. Now, from~\cref{thm:upperbound}, none of the label assignments can have more than $d-1$ sign changes; hence they must be exactly the $(d-1)$-alternating label assignments: $\argmaxable{\vv{W}}=\Alt{n}{d-1}$.
\end{proof}

\subsection{\cref{lem:dft}}
We now show that the maximal minors of the DFT matrix are non-zero and have the same sign.
In fact, we show this is true more generally for any matrix with rows that are homogenised vertices of an even dimensional Cyclic Polytope.
\label{proof:dft}
\begin{proof}
The DFT matrix corresponds to the homogenisation of the vertices of a Cyclic Polytope~\citep{Gale1963, Cordovil2000}, see~\cref{app:cyc}, where the vertices of the Cyclic Polytope are in $2k$ dimensions before being homogenised. Cyclic Polytopes in $2k$ dimensions (even dimension) are rigid\footnote{See \citep[Section 6.6]{Ziegler1994LecturesOP} for more details on rigidity, and Example 6.3 for a polytope that is not rigid.}: their face structure determines their geometric structure~\citep[Theorem 5.1]{Cordovil2000}. Their geometric structure is that of a Uniform Alternating Oriented Matroid~\footnote{By this we mean that the matrix obtained by the homogenisation of any Cyclic Polytope has the structure of the Uniform Alternating Oriented Matroid.}. Any matrix realisation of a Uniform Alternating Oriented Matroid has maximal minors that agree in sign and are non-zero~\citep[Proposition 3.1]{Cordovil2000, Sturmfels1987}, see chirotope representation of Oriented Matroids~\citep[Section 9.4]{OrientedMatroids1999}. Any normalisation of the DFT matrix obtained by scaling columns using non-zero scalars does not alter the columnspace of the matrix, and hence the oriented matroid structure is unchanged: the orthants intersected by the columnspace are the same.
\end{proof}

\section{DFT Layer as FFT}

We can use the FFT to speed up computations, as we will show that computing the logits $\vv{z}$ is equivalent to computing the truncated inverse DFT of the input $\vv{x}$, if we reinterpret the vector $\vv{x}$ that has $2k+1$ entries as the coefficients of $k+1$ complex numbers. Let us start from the inverse DFT, that computes the complex signal in the time domain from the frequency domain. We use $n', d'$ and $k'$ as variables to avoid confusion with $n, d$ and $k$, which we have already defined as constants throughout the paper. Denote the complex frequency component for frequency $k'$ by $X_{k'}$ and the signal at time $n'$ by $x_{n'}$, we have:
\begin{equation}
x_{n'} = \sum_{k'=0}^{n-1} X_{k'} \left[ \cos \left(\frac{2 \pi n'}{n}k' \right) + i \sin \left( \frac{2 \pi n'}{n}k'\right)\right]
\end{equation}
We take the real part of the iDFT, to obtain:
\begin{align*}
&\real{x_{n'}} = \real{\sum_{k'=0}^{n-1} X_{k'} \left[ \cos \left(k' t_{n'} \right)
+ i \sin \left( k' t_{n'} \right)\right]} \\
  &= \real{\sum_{k'=0}^{n-1} (a_{k'} + ib_{k'}) \left[ \cos \left(k' t_{n'} \right)
+ i \sin \left( k' t_{n'}\right)\right]}  \\
  &= \sum_{k'=0}^{n-1} \left[a_{k'} \cos \left(k' t_{n'} \right)
- b_{k'} \sin \left( k' t_{n'}\right)\right]  \\
\end{align*}
If we truncate the iDFT to the first $k$ frequencies, we get:
\begin{align*}
\real{x_{n'}} &= \sum_{k'=0}^{k} \left[a_{k'} \cos \left(k' t_{n'} \right)
- b_{k'} \sin \left( k' t_{n'}\right)\right]  \\
\end{align*}
We will now match the coefficients $a_{k'}$ and $b_{k'}$ to corresponding elements in $\vv{x}$ (ignoring scaling factors).
From the earlier computation of $\vv{W}\vv{x}$, we rewrite the logits $\vv{z}$ as below:
\begin{align}
\vv{z}_{n'} &= {\vv{w}^{(n')}}^\T\vv{x} \\
         &= \vv{x}_1 + \sum_{k'=1}^k \left[ \vv{x}_{2k'} \cos\left(k' t_n\right) + \vv{x}_{2k' + 1} \sin\left(k' t_n\right)\right]
\end{align}
From which we see that we can write the DFT layer as a truncated Inverse DFT by matching the coefficients
of the sines and cosines: $\vv{x}_1 = a_0,\, \vv{x}_{2k'} = a_{k'}$ and $\vv{x}_{2k'+1} = - b_{k'}$. See also our code \textbf{test\_dft\_equivalence.py}.
From this perspective, this paramatrisation is a low-pass filter.
\label{app:dft}

\section{Unargmaxable Test Examples}
\label{app:inf-mimic}
We now introduce two fine-grained measurements of argmaxability, eps-argmaxability and $1$-argmaxability. We discuss a few additional insights in the results section below.

\subsection{Argmaxability Measurements}

\paragraph{eps-argmaxable}
This is the estimate of argmaxability we can get at the precision of our LP, which is eps=$10^{-8}$. This roughly means that we can only detect regions which can contain a ball with a radius that is larger than $10^{-8}$.

\paragraph{1-argmaxable}
As we discussed in the paper, some label assignments may be $\epsilon$-argmaxable with a very small $\epsilon$, which makes it hard to predict such label assignments in practice. We therefore also report $1$-argmaxability, assuming -- by a fairly large margin -- that regions with radius $1$ are large enough to be easily predicted in practice.

\subsection{Results}
In ~\cref{tab:inf-bioasq,tab:inf-mimic,tab:inf-open} we tabulate the number of argmaxable label combinations on the MIMIC-III, BioASQ and OpenImages v6 test sets. We obtain the following insights.

a) For the DFT, when we make $d$ very small, e.g. $d=25$, a handful of label combinations in BioASQ are not eps-argmaxable according to the LP. Out of these, one example was found to be infeasible, while for the remaining ones numerical difficulties were encountered by the LP, i.e. Gurobi returned status 12: ``\emph{Operation terminated due to unrecoverable numerical difficulties}''. The above issues are likely caused because of dimensionality pressures, the label combinations are $\epsilon$-argmaxable but with a very small $\epsilon$ that cannot be detected with the precision of current LPs (we use eps=$10^{-8}$, as we discussed in~\cref{sec:verify}). This highlights the importance of our proofs, since our results would be tricky to verify using empirical methods alone. Moreover, as we discussed in~\cref{sec:conclusion}, being able to guarantee $\epsilon$-argmaxability with a large $\epsilon$ is important future work, since while we showed that our current solution of adding slack variables works in practice, if we increase the pressure on the bottleneck by making $d$ small enough, we can still run into $\epsilon$-argmaxability issues.

b) We note that the behaviour of BSLs in terms of eps-argmaxability and $\epsilon$-argmaxability is quite different when compared to DFT. For BSLs, if a label combination is argmaxable, it is very often also $\epsilon$-argmaxable. On the other hand, for the DFT some label combinations are argmaxable but are not $\epsilon$-argmaxable, highlighting that the regions do indeed exist, but they can shrink quite a bit in size due to the reduced dimensionality. See also~\cref{app:small}.
\label{app:unargmax-test}
\begin{table}[h!]
\centering\caption{Median number of $eps$-argmaxable and $1$-argmaxable label assignments over 3 random seeds on the dev and test sets of MIMIC-III. Takeaway: BSL layers have \rr{unargmaxable labels} starting from $d=200$ but it does not have to be this way. DFT layers resolve this problem and make all examples \gr{argmaxable}, but when slack dimensionality is very small, the regions are too small to detect with the precision of the LP.
Note that even if a BSL is able to argmax all test label configurations, this does not imply it will be able to guarantee so for meaningful but unseen future configurations.}
\label{tab:inf-mimic}
\begin{tabular}{ll@{\hspace{2em}}cc@{\hspace{2em}}cc}
\toprule
\multirow{2}{*}{split}     &  \multirow{2}{*}{$d$}   & \multicolumn{2}{l}{\# $eps$-Argmaxable} & \multicolumn{2}{l}{$1$-Argmaxable} \\
&    &            BSL &        DFT &                BSL &        DFT \\
\midrule
dev & 25  &      \rr{  128} &  \gr{1631} &           128 &  1572 \\
     & 50  &     \rr{  781} &  \gr{1631} &           781 &  1625 \\
     & 100 &     \rr{ 1533} &  \gr{1631} &          1533 &  1631 \\
     & 200 &     \gr{ 1631} &  \gr{1631} &          1631 &  1631 \\
     & 500 &     \gr{ 1631} &  \gr{1631} &          1631 &  1631 \\
     \midrule
test & 25  &     \rr{  229} &  \gr{3371} &           229 &  3216 \\
     & 50  &     \rr{ 1515} &  \gr{3371} &          1515 &  3356 \\
     & 100 &     \rr{ 3137} &  \gr{3371} &          3137 &  3370 \\
     & 200 &     \rr{ 3370} &  \gr{3371} &          3370 &  3371 \\
     & 500 &     \gr{ 3371} &  \gr{3371} &          3371 &  3371 \\
\bottomrule
\end{tabular}
\end{table}
\begin{table}[h!]
\begin{tabular}{llcccc}
\toprule
\multirow{3}{*}{split}     & \multirow{3}{*}{d}    & \multicolumn{2}{c}{\# $eps$-Argmaxable} & \multicolumn{2}{c}{\# $1$-Argmaxable} \\
      &     &       \multicolumn{2}{c}{out of 10000}    &          \multicolumn{2}{c}{out of 10000}   \\
      &   &           BSL  &  DFT   &           BSL      &   DFT  \\
\midrule
test & 25  &         \rr{  880} &  \rr{ 9995} &                879 &   8174 \\
     & 50  &         \rr{ 6498} &  \gr{10000} &               6483 &   9925 \\
     & 100 &         \rr{ 9951} &  \gr{10000} &               9950 &  10000 \\
     & 200 &         \gr{10000} &  \gr{10000} &              10000 &  10000 \\
     & 400 &         \gr{10000} &  \gr{10000} &              10000 &  10000 \\
\bottomrule
\end{tabular}
\centering\caption{Median number of $eps$-argmaxable and $\epsilon$-argmaxable label assignments over 3 random seeds on the test set of BioASQ. Note that even if a BSL is able to argmax all test label configurations, this does not imply it will be able to guarantee so for meaningful but unseen future configurations.}
\label{tab:inf-bioasq}
\end{table}
\begin{table}[h!]
\begin{tabular}{llcccc}
\toprule
\multirow{3}{*}{split}     & \multirow{3}{*}{d}    & \multicolumn{2}{c}{\# $eps$-Argmaxable} & \multicolumn{2}{c}{\# $1$-Argmaxable} \\
      &     &       \multicolumn{2}{c}{out of 10000}    &          \multicolumn{2}{c}{out of 10000}   \\
      &   &           BSL  &  DFT   &           BSL      &   DFT  \\
\midrule
test & 25  &         \rr{  2758} &  \gr{10000} &               2757 &   9981 \\
     & 50  &         \rr{  8439} &  \gr{10000} &               8435 &  10000 \\
     & 100 &         \rr{  9997} &  \gr{10000} &               9997 &  10000 \\
     & 200 &         \gr{ 10000} &  \gr{10000} &              10000 &  10000 \\
     & 400 &         \gr{ 10000} &  \gr{10000} &              10000 &  10000 \\
\bottomrule
\end{tabular}
\centering\caption{Median number of $eps$-argmaxable and $\epsilon$-argmaxable label assignments over 3 random seeds on the test set of OpenImages. Note that even if a BSL is able to argmax all test label configurations, this does not imply it will be able to guarantee so for meaningful but unseen future configurations.}
\label{tab:inf-open}
\end{table}

\section{DFT Train Efficiency}
Herein we provide more information showing how the DFT layer speeds up convergence and requires less training time to reach equivalent performance. We focus on the most demanding datasets, BioASQ and OpenImages v6, and highlight two perspectives. In~\cref{fig:dft-conv}, we show how the training loss evolves over time. Meanwhile, in~\cref{fig:dft-time} we compare the number of hours it took for the BSL and DFT models to converge on BioASQ and OpenImages v6. Both figures show that the DFT layer leads to faster convergence, as it starts training with a lower loss due to the initialisation trick and maintains its lead over the BSL throughout training.
\label{app:dft-conv}
\begin{figure}[h!]
\begin{subfigure}{.4\columnwidth}
\centering
\includegraphics[width=\linewidth]{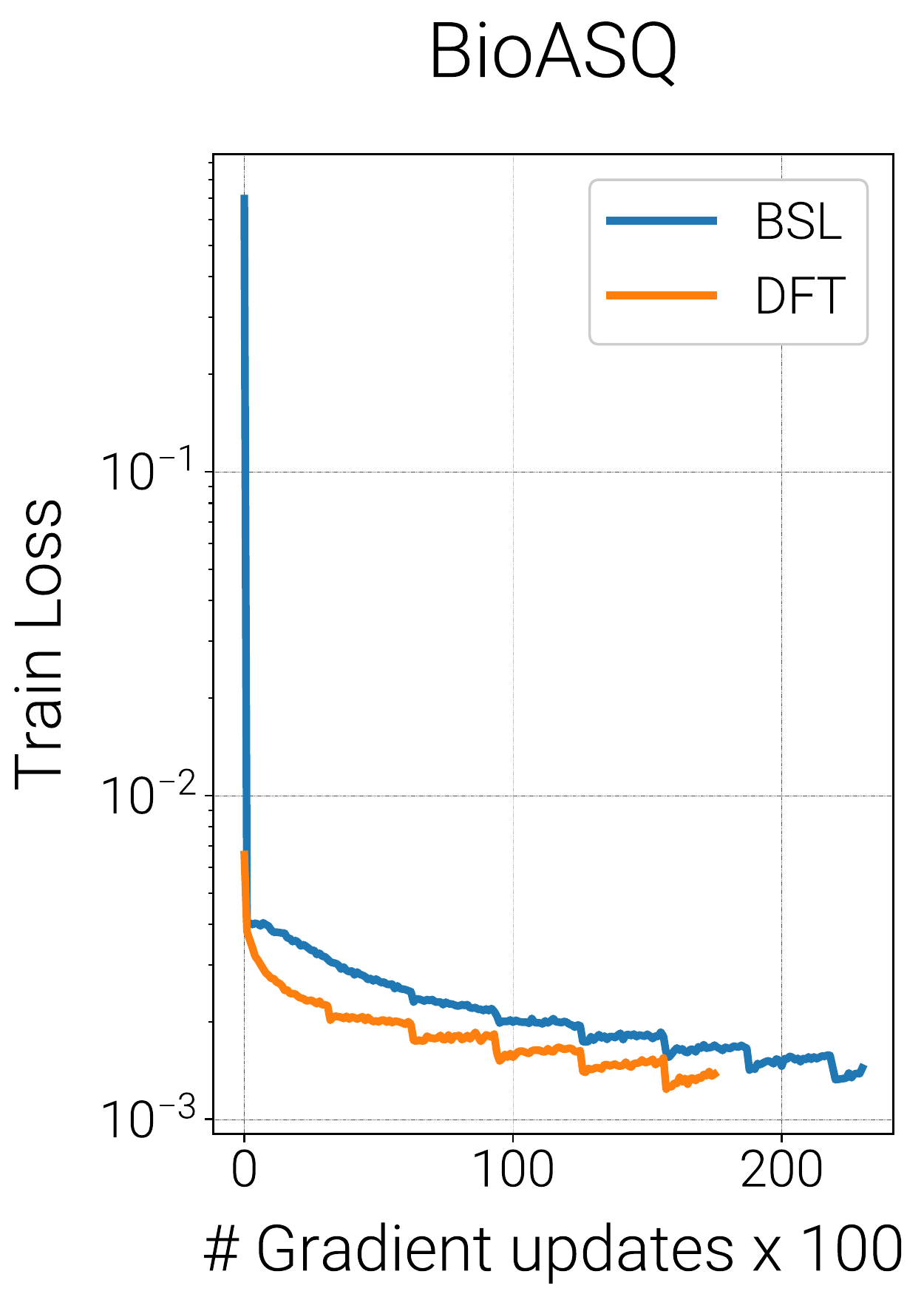}
\end{subfigure}
\hfill
\begin{subfigure}{.4\columnwidth}
\centering
\includegraphics[width=\linewidth]{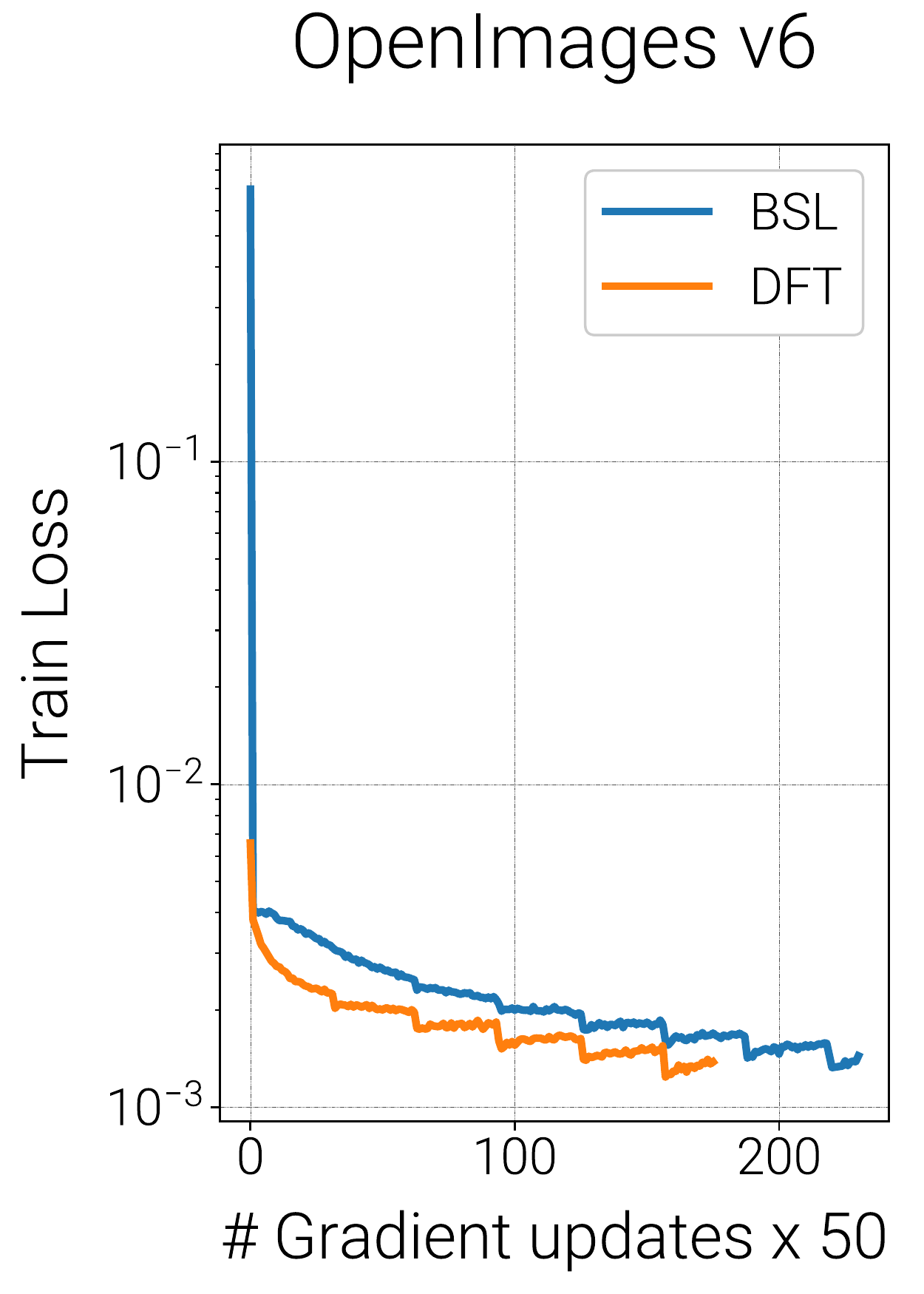}
\end{subfigure}
\caption{Comparison of BSL and DFT for $d=100$, in terms of the training cross entropy loss (y-axis, log scale) as training evolves (x-axis). Due to the initialisation trick, the DFT starts training at a lower loss and converges faster.}
\label{fig:dft-conv}
\end{figure}
\begin{figure}[h!]
\begin{subfigure}{.4\columnwidth}
\centering
\includegraphics[width=\linewidth]{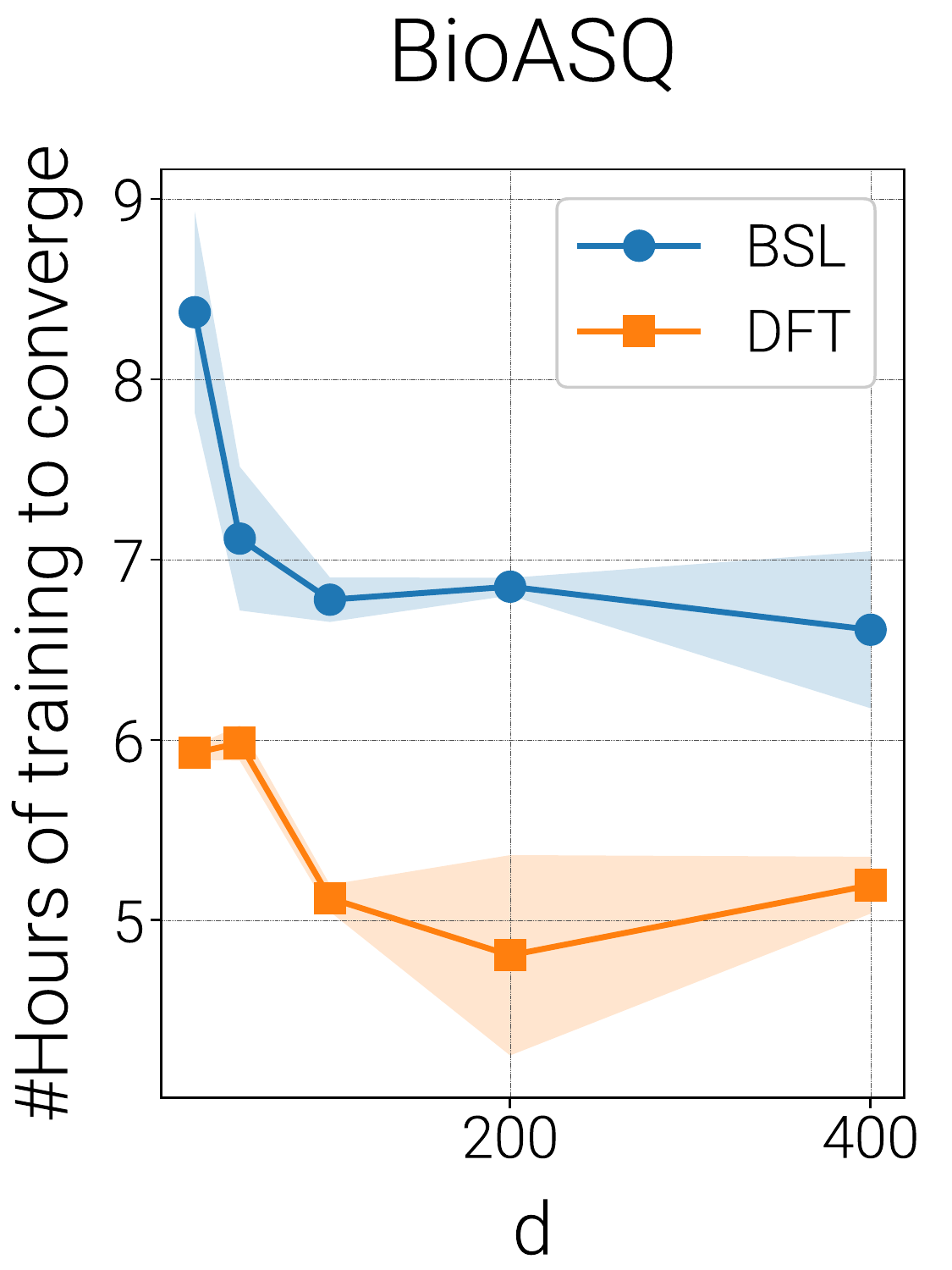}
\end{subfigure}
\hfill
\begin{subfigure}{.4\columnwidth}
\centering
\includegraphics[width=\linewidth]{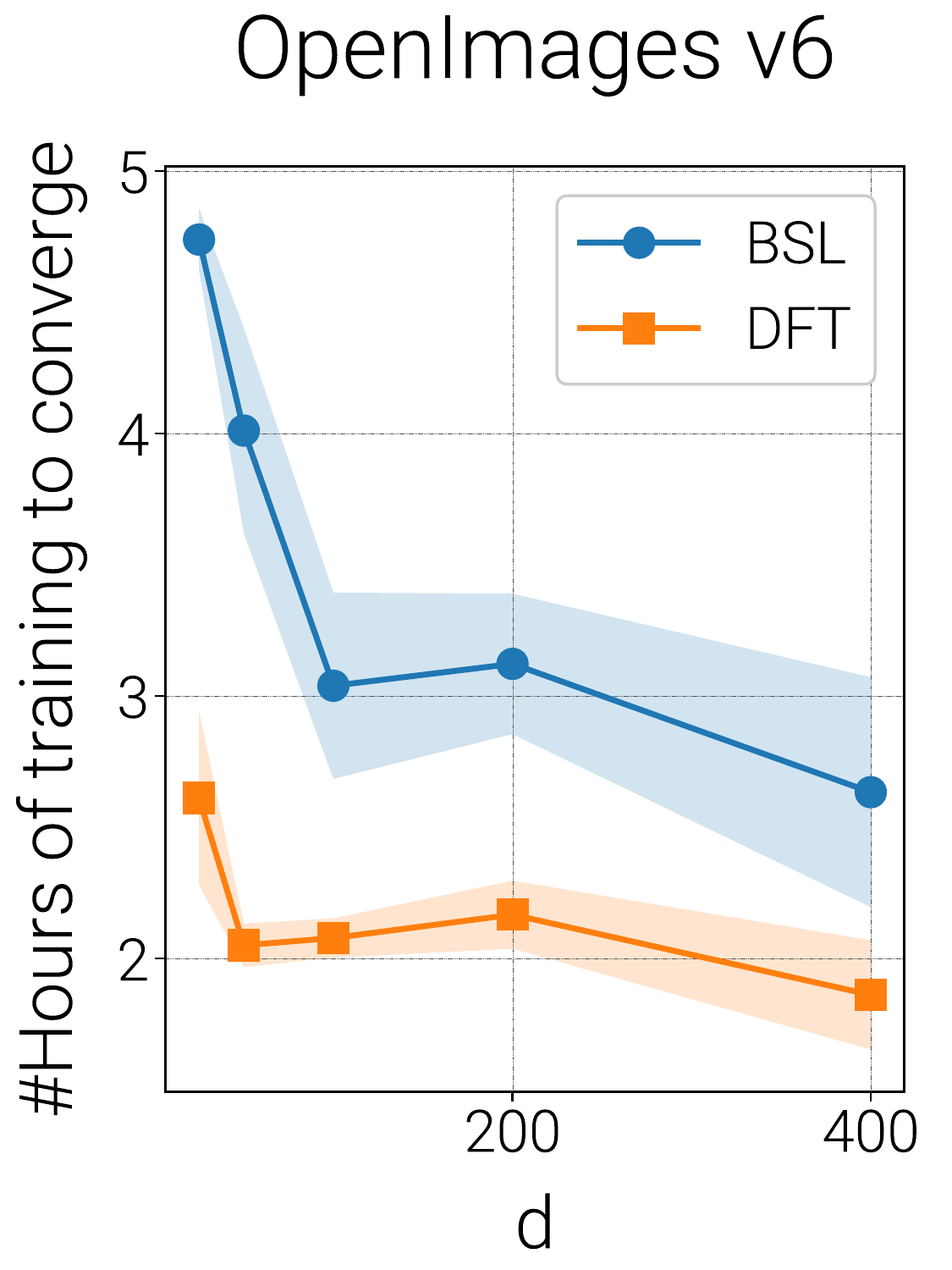}
\end{subfigure}
\caption{Comparison of BSL and DFT in terms of training time (in hours) to convergence. As can be seen, the DFT converges about 25\% faster.}
\label{fig:dft-time}
\end{figure}

\section{Reproducibility}
\label{app:repr}
\subsection{Dataset Access and Preprocessing}
\subsubsection{MIMIC-III} While de-identified, the MIMIC-III dataset~\citep{Johnson2016} contains sensitive and detailed information on the clinical care of patients. As such, permission to access this dataset needs to be requested, as explained here.\footnote{\url{https://mimic.mit.edu/docs/gettingstarted/}} We used the same preprocessing, setup and train, validation and test splits as~\citep{mullenbach2018}. See their github repository for more details.

\subsubsection{BioASQ Task A 2021}
The BioASQ Task A dataset~\citep{Tsatsaronis2015,Nentidis2021OverviewOB} is available after registering for the task on the BioASQ website.\footnote{\url{http://participants-area.bioasq.org/}}
We created dataset splits which cover $n=$20k labels using a 1m subset of the 2021 BioASQ task A dataset.
We construct train, validation and test splits by sampling examples, making sure that all individual labels (not label combinations) occur in both the train and test sets.
We encode the concatenation of the journal, title and abstract as text input.
Due to the context size limitation of BERT, we truncate the input to the first $512$ subwords. See our code for more details.

\subsubsection{OpenImages v6}
The OpenImages v6 dataset~\citep{Kuznetsova2018} can be accessed from the project website.\footnote{\url{https://storage.googleapis.com/openimages/web/index.html}}
We downloaded the images from CVDF, which was linked from the website.
Since the dataset is very large, we only used $N=108228$ images, these had hashes that started with $1$ and were available as a single zip download from CVDF.
Since the validation and test sets are also large, we validate and test on the first 5k examples of the validation set and the first 10k examples of the test set, correspondingly. For preprocessing, we simply reshape all images to 448x448, as done in~\citet{Baruch2020}.

\subsection{Dataset Statistics}
We tabulate the sizes of the dataset splits in~\cref{tab:hyper}.
A histogram of the number of active labels can be seen in~\cref{fig:cardinalities}.
\label{app:datastats}

\begin{table}[t]
\resizebox{\columnwidth}{!}{%
\begin{tabular}{llll}
\toprule
 & \multicolumn{3}{c}{\Large Datasets} \\
 & MIMIC-III & BioASQ task A & OpenImagesV6 \\
\midrule
n           &   8921      &    20000      &    8933      \\
d           &   25-400  &    25-400     &     25-400   \\
encoder     &       CNN &    PubmedBERT &   T-Resnet-L \\
pretrained      &     no    &    yes        &   yes        \\
encoder dim $e$ &       500 &           768 &         2432 \\
lr (encoder)    &     0.001 &       0.00005 &       0.0001 \\
lr (classifier) &     0.001 &         0.001 &        0.001 \\
batch size  &        16 &            32 &            64 \\
patience    &        10 &            10 &           10 \\
eval every  &  1 epoch   &  500 steps       &       250 steps \\
criterion  &       P@8 &         Valid loss &        Valid loss \\
\midrule
\# train   &       44k  &         100k      &        108k       \\
\# valid   &       1.6k &           5k      &           5k      \\
\# test    &       3.3k &           10k     &          10k      \\
\bottomrule
\end{tabular}
}
\caption{Dataset attributes and model hyperparameters.}
\label{tab:hyper}
\end{table}
\label{app:dataset-proc}

\begin{figure*}[t]
    \centering
    \includegraphics[width=.32\textwidth]{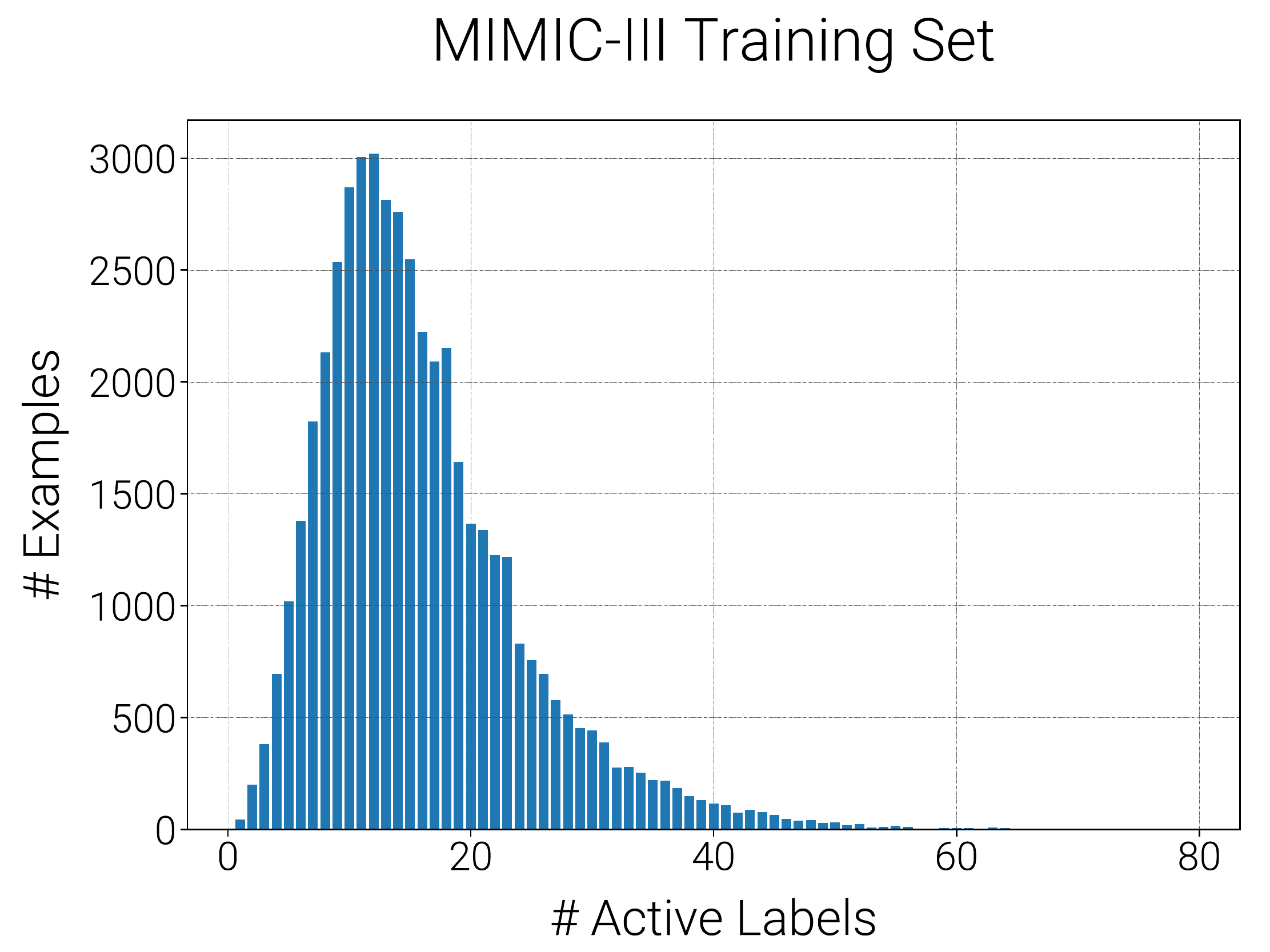}
    \hfill
    \includegraphics[width=.32\textwidth]{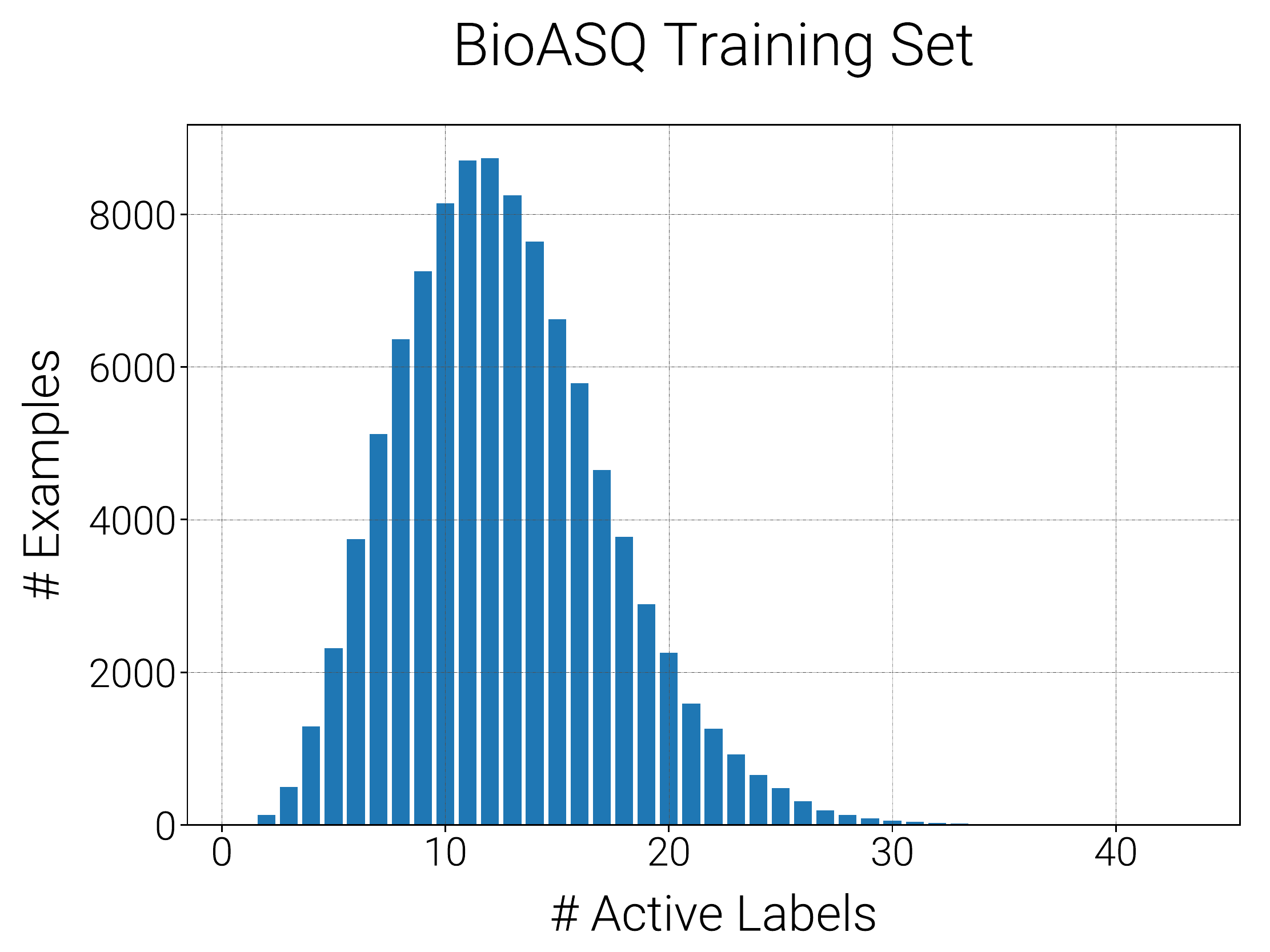}
    \hfill
    \includegraphics[width=.32\textwidth]{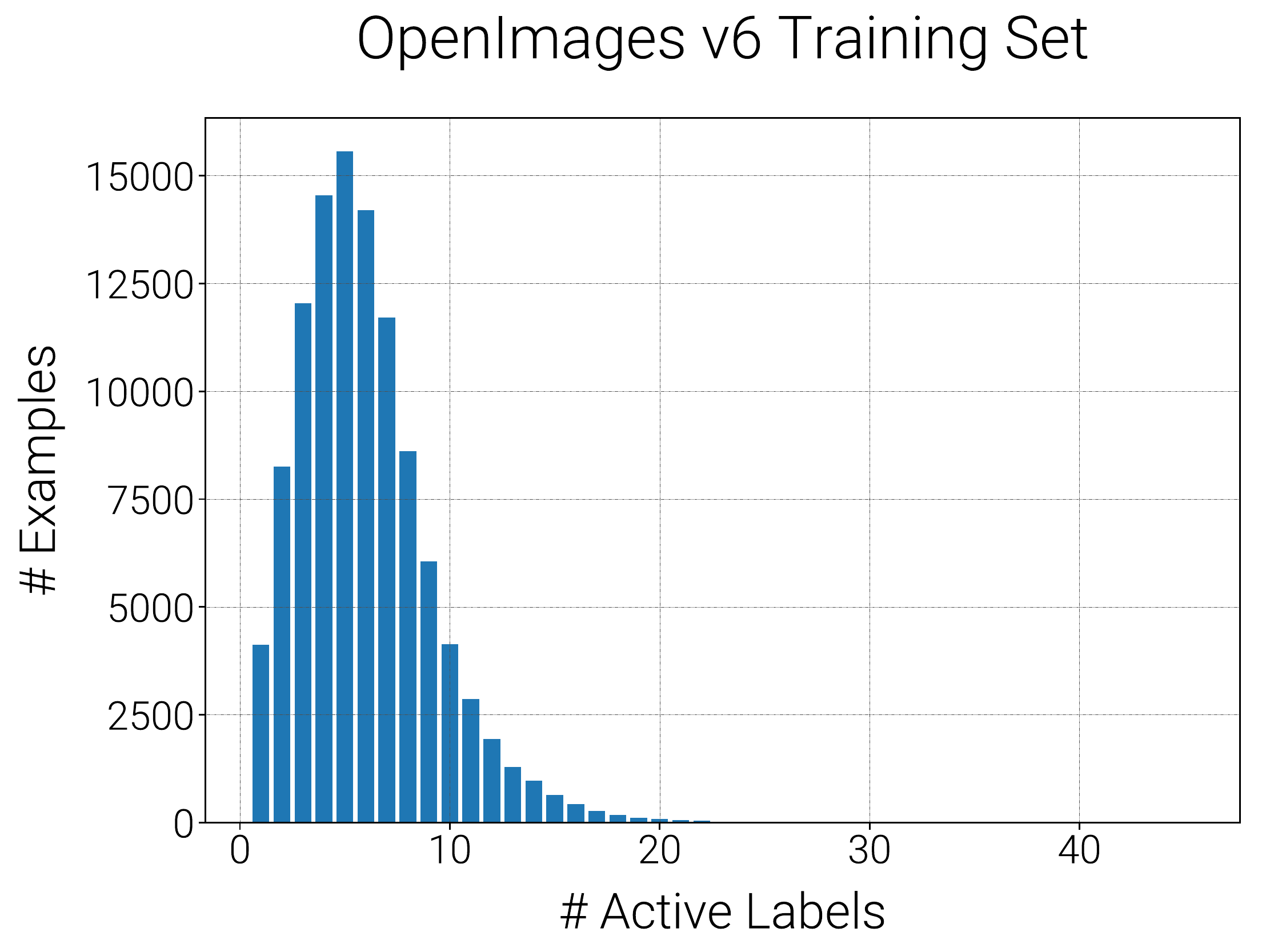}
    \caption{Comparison of the number of active labels on the training sets of MIMIC-III, BioASQ and OpenImages datasets. All three datasets have a long tail of less sparse active labels (large $k$). MIMIC-III has the longest ($k \leq 80$).}
    \label{fig:cardinalities}
\end{figure*}

\subsection{Hyperparameters}
In order to study the sensitivity of our methods to random initialisation we ran all experiments three times, once per random seed in (0, 1, 2). We train all models using binary crossentropy loss.
We summarise all hyperparameters in~\cref{tab:hyper}.
We use early stopping for all models with a patience of $10$. The stopping criterion is Prec@8 for MIMIC-III and Validation Loss for BioASQ and OpenImages.
\label{app:traindeets}

\subsection{Training Resources and Train time}
We train and evaluate our models on GPUs. For the MIMIC-III dataset we used a NVIDIA 3090 GPU that has 24Gb of RAM and for the remaining two datasets we used an NVIDIA RTX A6000 which has 48Gb of RAM.
The experiments took about two weeks of compute. More specifically, the MIMIC-III runs took 9 hours, the OpenImages runs took 85 hours and the BioASQ runs took 188 hours. 
We verified our models using the LP on CPU on a cluster with an AMD EPYC 7452 32-Core Processor and 500GB of RAM. We parallelised verification by running the LP for each label in parallel to others of the same model. With this setup and running on 50 threads the verification of 10k examples takes from between 20 minutes to 3 hours, depending on the dimensionality (slower for large $d$).

\section{Why DFT Regions Become Very Small}
\label{app:small}
As we saw in~\cref{fig:eargmax}, while we can provide guarantees that all k-active label assignments are argmaxable, they may not be epsilon argmaxable (i.e. argmaxable with a large margin).
This is because, due to numerical issues, the regions may be too small to detect via our Linear Programme of~\cref{sec:verify}.

Herein we provide some thoughts on why the DFT regions become small when $d \ll n$.
For a matrix $\vv{W}$ to be in $\pgrass{n}{d}$, all its maximal minors have to be non-zero and have the same sign. Let us assume they are all positive. For the matrix to ``robustly'' have this structure, the maximal minors should be large; the larger they are the more the row vectors will have to change before some become collinear and make one of the maximal minors zero. We note that the maximal minors are affected by both the angles between the vectors and the norm of the vectors. As such, one consideration is whether some vectors are more sensitive to perturbations due to their norm, e.g. row vectors having a small norm would generally be more sensitive to perturbations than row vectors that have a large norm. However, for the DFT matrix, we can ignore the effect of norms as we shall see below.

\paragraph{a) Row vectors of the DFT matrix have equal norm.}
We can show (see~\cref{sec:normder}) that all row vectors $\vv{w}^{(i)}$ of $\vv{W} \in \R^{n \times (2k+1)},\, k \in \mathbb{N}, k \geq 1$ have norm given by:
\begin{equation}
\snorm{\vv{w}^{(i)}}=\sqrt{\frac{2k+1}{n}}
\end{equation}

\paragraph{b) Maximal minor constraints for orthonormal matrices.}
In addition, the truncated DFT matrix is also an orthonormal matrix (its columns are pairwise orthogonal and have norm 1). An orthonormal matrix $\vv{M} \in \R^{n \times d},\, d < n$ with maximal minors $\Delta_{I}$ indexed by $d$-subsets of rows, has the property that the sum of squares of its maximal minors is 1:
\begin{equation}
\sum_{I \in \binom{[n]}{d}} \left(\Delta_I(\vv{M})\right)^2 = 1
\end{equation}
The above follows from the Cauchy-Binet formula (see~\cref{sec:detder}).
Therefore, we have a bound of $1$ on the sum of squares of maximal minors. Since there are $\binom{n}{d}$ maximal minors, a lot of them will have to become very small in magnitude as we increase $n$ while keeping $d$ fixed, i.e. $d << n$.  As we saw in a), the magnitude of all row vectors is equal. Therefore, the only way to have small maximal minors is to have small angles between the vectors, which in turn forces many regions to be small (i.e. narrow wedges).

\subsection{Derivation of a)}
\label{sec:normder}

Recall a row $\vv{w} \in \R^{2k+1},\, k \in \mathbb{N},\, k \geq 1$ of the DFT matrix:

\resizebox{.98\linewidth}{!}{
\begin{minipage}[b]{1.1\linewidth}
\centering
\begin{align}
\vv{w} =&
\setlength\arraycolsep{2pt}
\begin{bmatrix}
\frac{1}{\sqrt{n}} & \sqrt{\frac{2}{n}}\cos{t} & \sqrt{\frac{2}{n}}\sin{t} & \cdots & \sqrt{\frac{2}{n}}\cos{k t}  & \sqrt{\frac{2}{n}}\sin{k t}
\label{eq:dftrow}
\end{bmatrix}
\end{align}
\end{minipage}
}
Using the form \cref{eq:dftrow}, we compute the Euclidean norm:
\resizebox{.98\linewidth}{!}{
\begin{minipage}[b]{1.1\linewidth}
\centering
\begin{align}
\snorm{\vv{w}^{(i)}}&
= \sqrt{ \sum_{j=1}^{2k+1} \left(\vv{w}^{(i)}_j\right)^2 }\\
&= \sqrt{  \left(\vv{w}^{(i)}_1\right)^2 + \sum_{j=1}^{2k} \left(\vv{w}^{(i)}_{j+1}\right)^2}\\
&= \sqrt{ \left( \frac{1}{\sqrt{n}} \right)^2 + \sum_{k'=1}^{k} \left( \sqrt{\frac{2}{n}}\right)^2\left(\left(\sin k't\right)^2 + \left(\cos k't\right) ^2\right)} \\
&= \sqrt{ \frac{1}{n} + \frac{2}{n}\sum_{k'=1}^{k} 1} \\
&= \sqrt{\frac{2k+1}{n}}
\end{align}
\end{minipage}
}

\subsection{Derivation of b)}
\label{sec:detder}
The Cauchy-Binet formula~\citep[page 2]{Pinkus2009} expresses the determinant of a product of two rectangular matrices $\det\left({\vv{AB}}\right)$ with $\vv{A} \in \R^{d \times n},\, \vv{B} \in \R^{n \times d}$ in terms of a sum of maximal minors $\Delta$ of $\vv{A}$ and $\vv{B}$:
\begin{equation}
\sum_{I \in \binom{[n]}{d}}  \Delta_I(\vv{A}) \Delta_I(\vv{B}) = \det\left({\vv{AB}}\right)
\end{equation}

Note that for an orthonormal matrix $\vv{M} \in \R^{n \times d}$, if we set $\vv{A}=\vv{M}^\top$ and $\vv{B}=\vv{M}$, we get:

\begin{equation}
\begin{split}
\sum_{I \in \binom{[n]}{d}}  \Delta_I(\vv{M}^\top) \Delta_I(\vv{M}) &= \det\left({\vv{M}^\top\vv{M}}\right)
\implies \\
\sum_{I \in \binom{[n]}{d}}  \left(\Delta_I(\vv{M})\right)^2 &=\det\left(\vv{I}\right) = 1
\end{split}
\end{equation}

where the bolded $\vv{I}$ is the identity matrix.

\clearpage
\section{MLC Evaluation Metrics}
\label{app:evalmetrics}

\paragraph{F1}
We compute \textbf{Micro F1} by computing Precision and Recall across all labels and then computing F1.
We compute \textbf{Macro F1} by computing Precision, Recall and F1 score for each label individually and then averaging them. Macro F1 does not allow label imbalance to skew the results.

\paragraph{Precision@$k$ (Prec@$k$)}
Prec@$k$ computes the percentage of the $k$ retrieved labels that are indeed correct. We compute the metric by ranking the labels by their assigned probabilities and take the top $k$ as active. We then compute the percentage that is active in the gold data, i.e. we divide the number of correct active labels by $k$. Prec@k is not sensitive to the relative ordering of the labels within the top-k.

\paragraph{Recall@$k$ (Rec@$k$)}
Rec@$k$ computes how many of the actual active labels are actually retrieved in the top $k$. As in Prec@k, we rank the labels by their assigned probabilities and take the top $k$ as active, but this time we divide by the number of labels that are actually active.

\paragraph{F1@$k$ (F1@$k$)}
As is common with metrics, it is useful to distil as much information as possible into a single number. To achieve this, it is common to use the harmonic mean of Prec@k and Rec@k which captures the intuition that we want both Prec@k and Rec@k to be high:
\begin{equation}
\text{F1@k} = 2\frac{\text{Prec@k} \,\text{Rec@k}}{\text{Prec@k} + \text{Rec@k}}
\end{equation}

\paragraph{Normalised Discounted Cumulative Gain @$k$ (nDCG@$k$)}
nDCG~\citep{Jrvelin2002} is sensitive to the relative ranking of examples within the $k$-top subset, due to the use of discounting. To compute it, we rank the labels according to their assigned probabilities, take the top $k$ and sum their truth values, which for MLC is $1$ if the label is active and $0$ otherwise. However, as opposed to Prec@k, DCG@k adds a logarithmic discount factor such that ranking an irrelevant label above a relevant one is penalised. We use nDCG, which includes normalisation such that the score is in $[0, 1]$, with $1$ being optimal.

\subsection{More Results for Above Metrics}
\begin{figure}[h!]
\begin{subfigure}{.32\columnwidth}
\centering
\includegraphics[width=\linewidth]{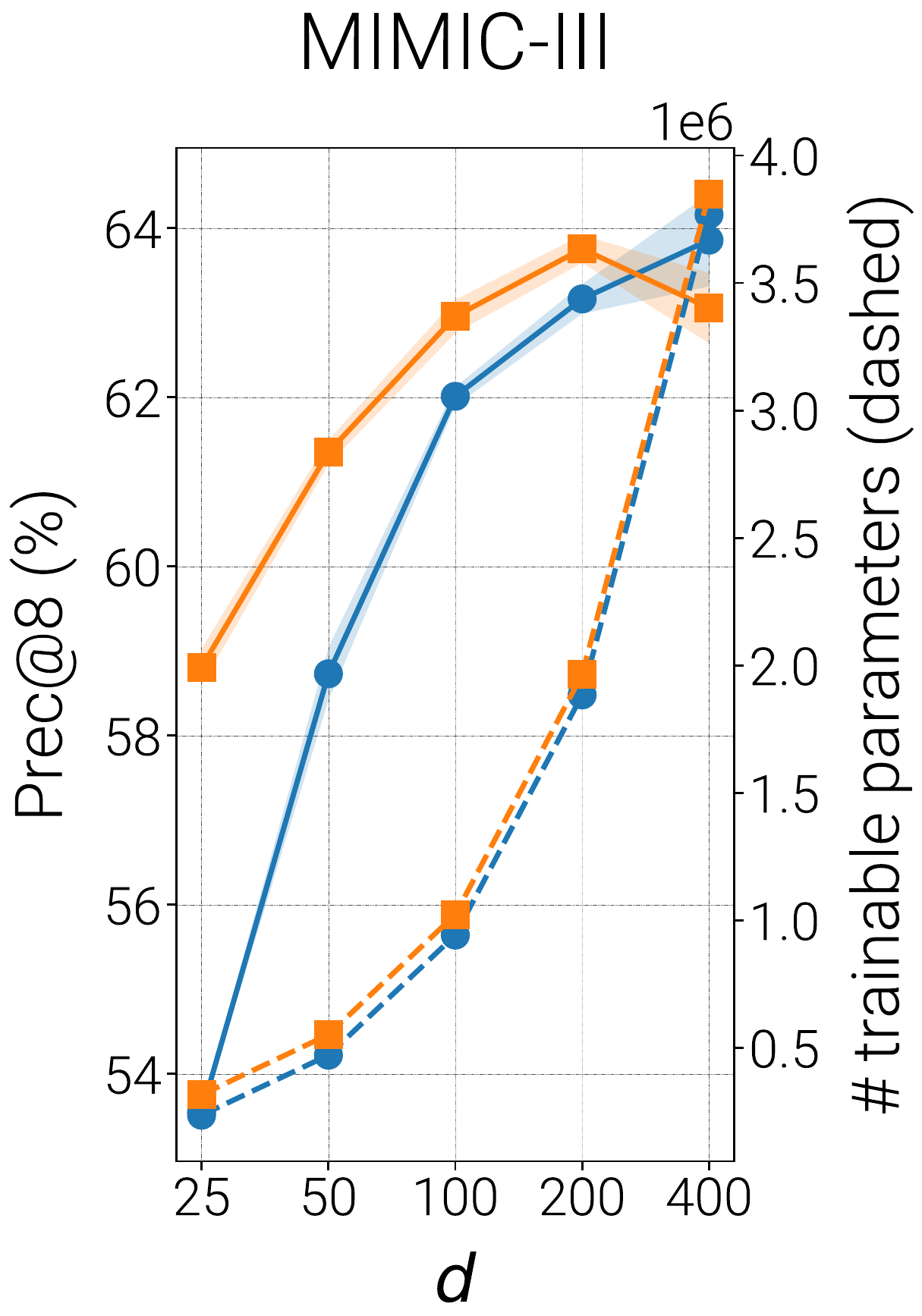}
\end{subfigure}
\hfill
\begin{subfigure}{.32\columnwidth}
\centering
\includegraphics[width=\linewidth]{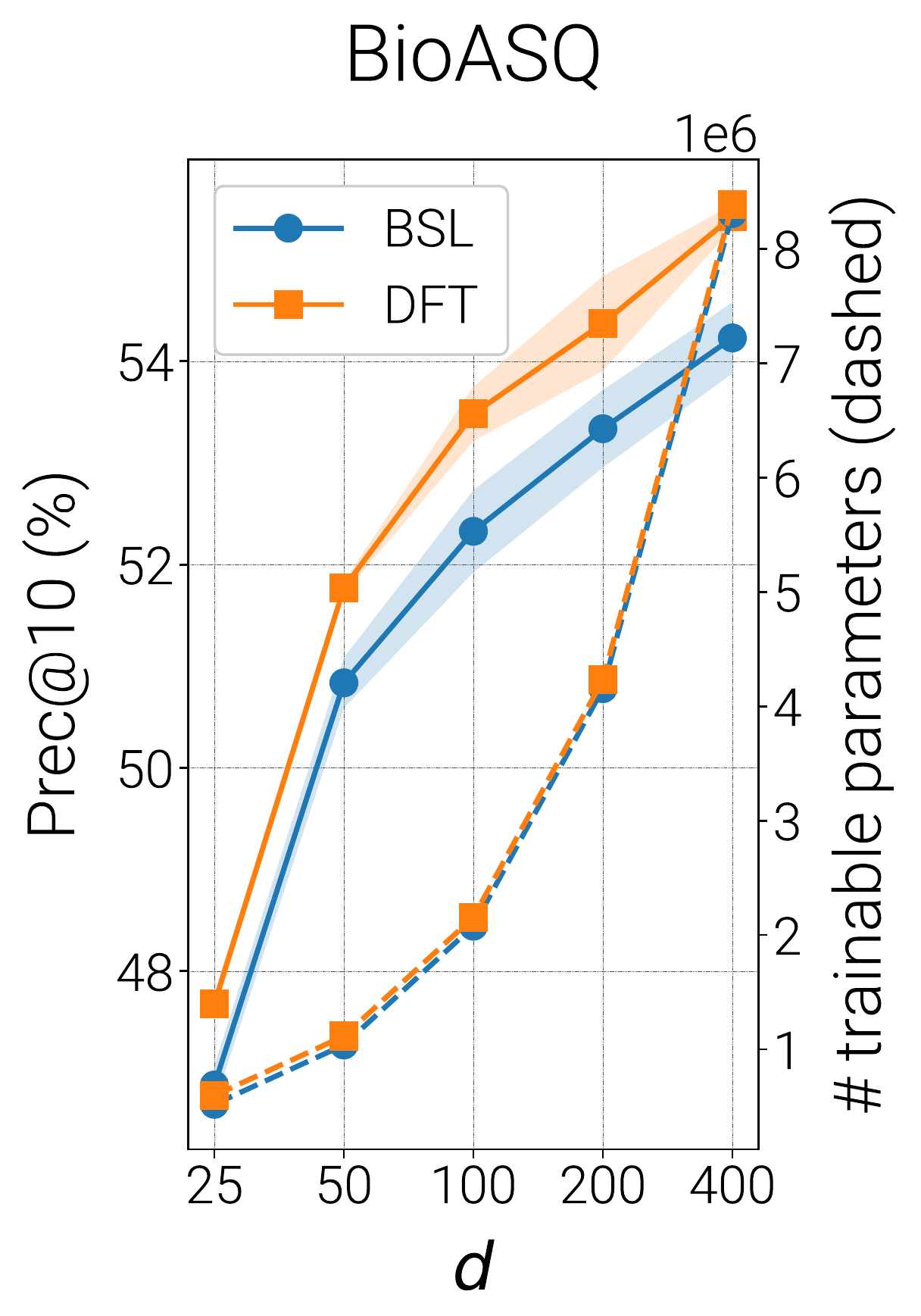}
\end{subfigure}
\hfill
\begin{subfigure}{.32\columnwidth}
\centering
\includegraphics[width=\linewidth]{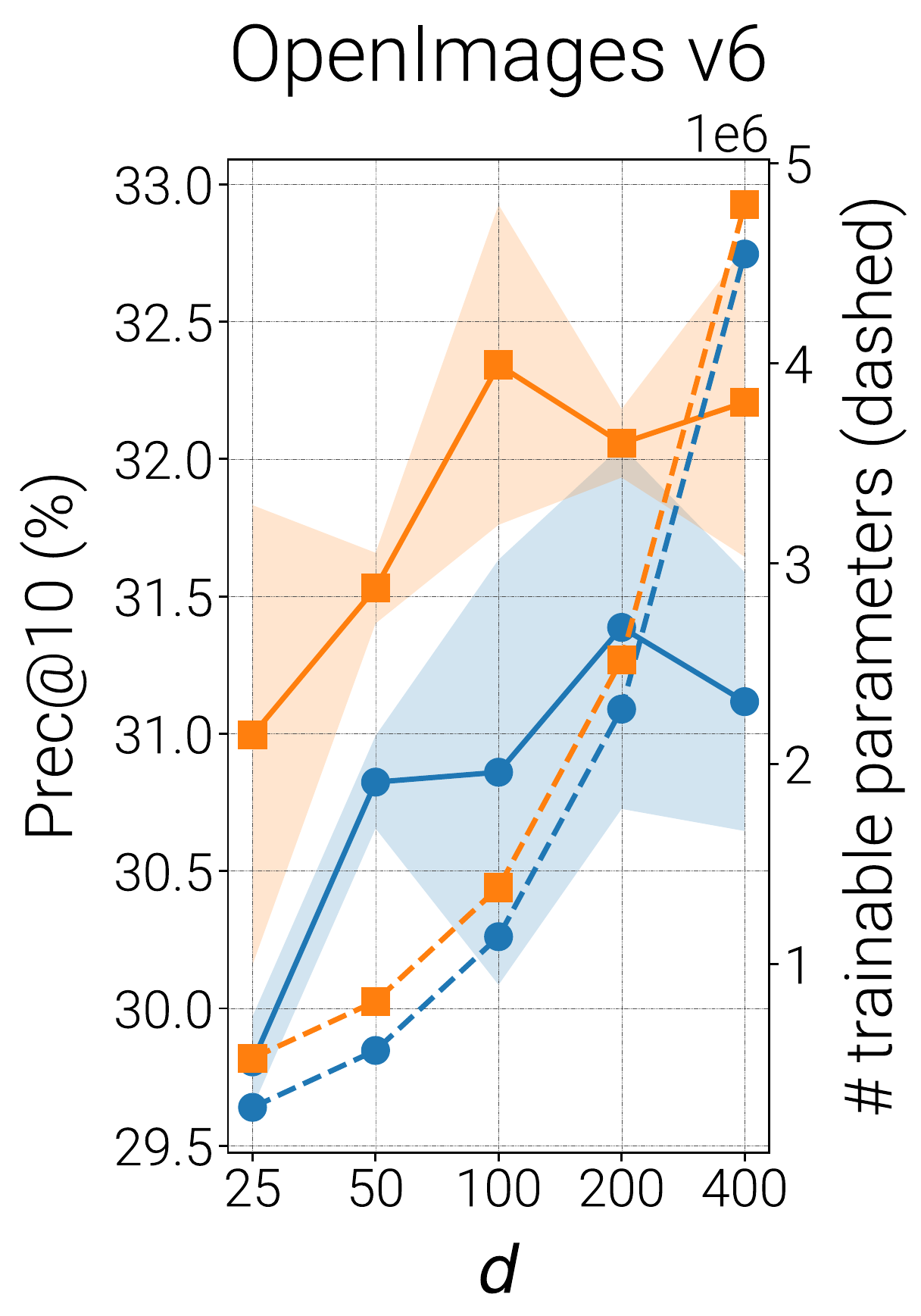}
\end{subfigure}
\caption{Test set Precision@k across datasets. For MIMIC-III, \citet{mullenbach2018} report 58.1 Prec@8 for their CNN baseline. We showed that their result can be improved by: a) Making the learning rate 0.001 b) adding a projection layer after the CNN and c) using the DFT layer.}
\end{figure}
\begin{figure}[h]
\begin{subfigure}{.32\columnwidth}
\centering
\includegraphics[width=\linewidth]{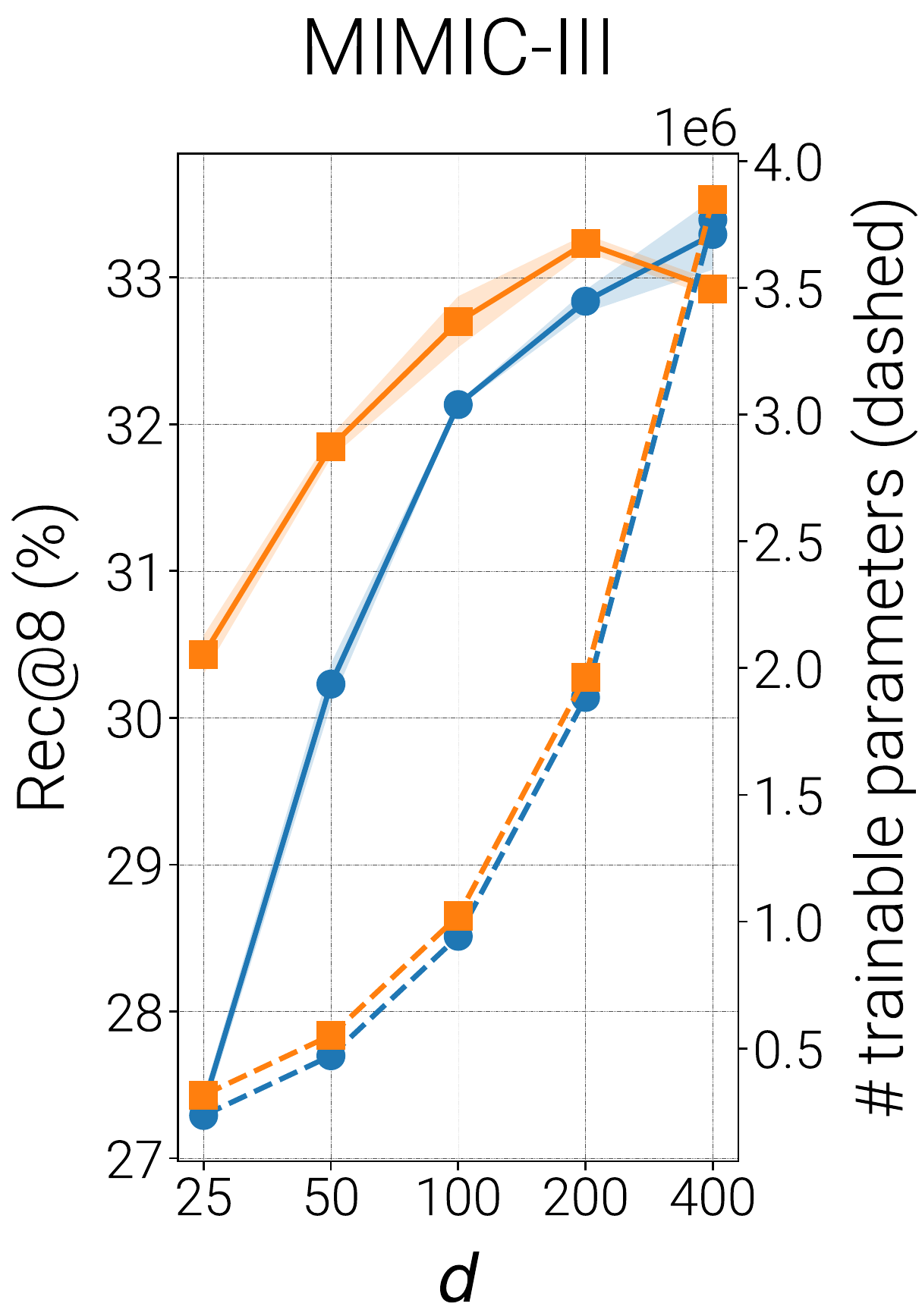}
\end{subfigure}
\hfill
\begin{subfigure}{.32\columnwidth}
\centering
\includegraphics[width=\linewidth]{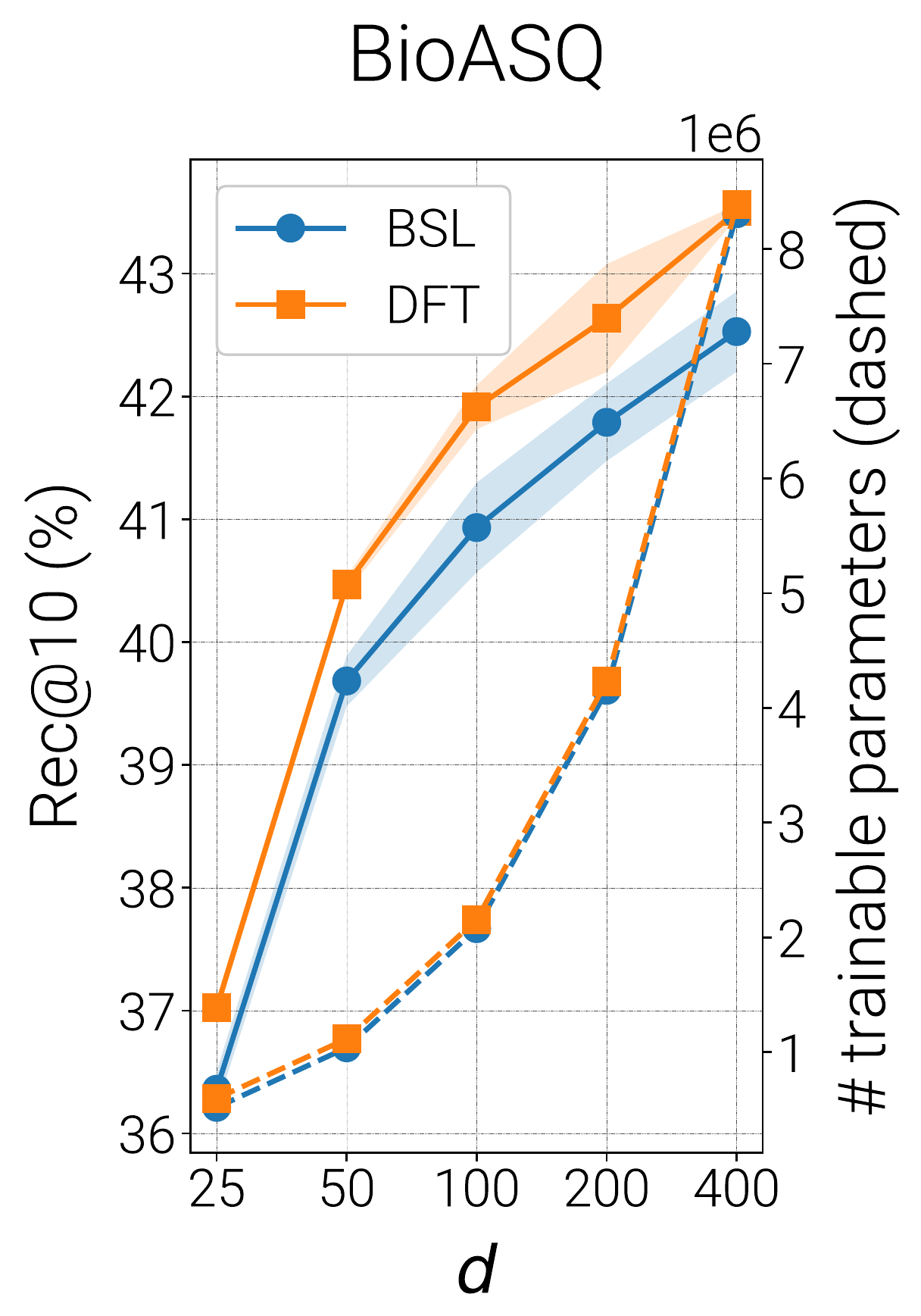}
\end{subfigure}
\hfill
\begin{subfigure}{.32\columnwidth}
\centering
\includegraphics[width=\linewidth]{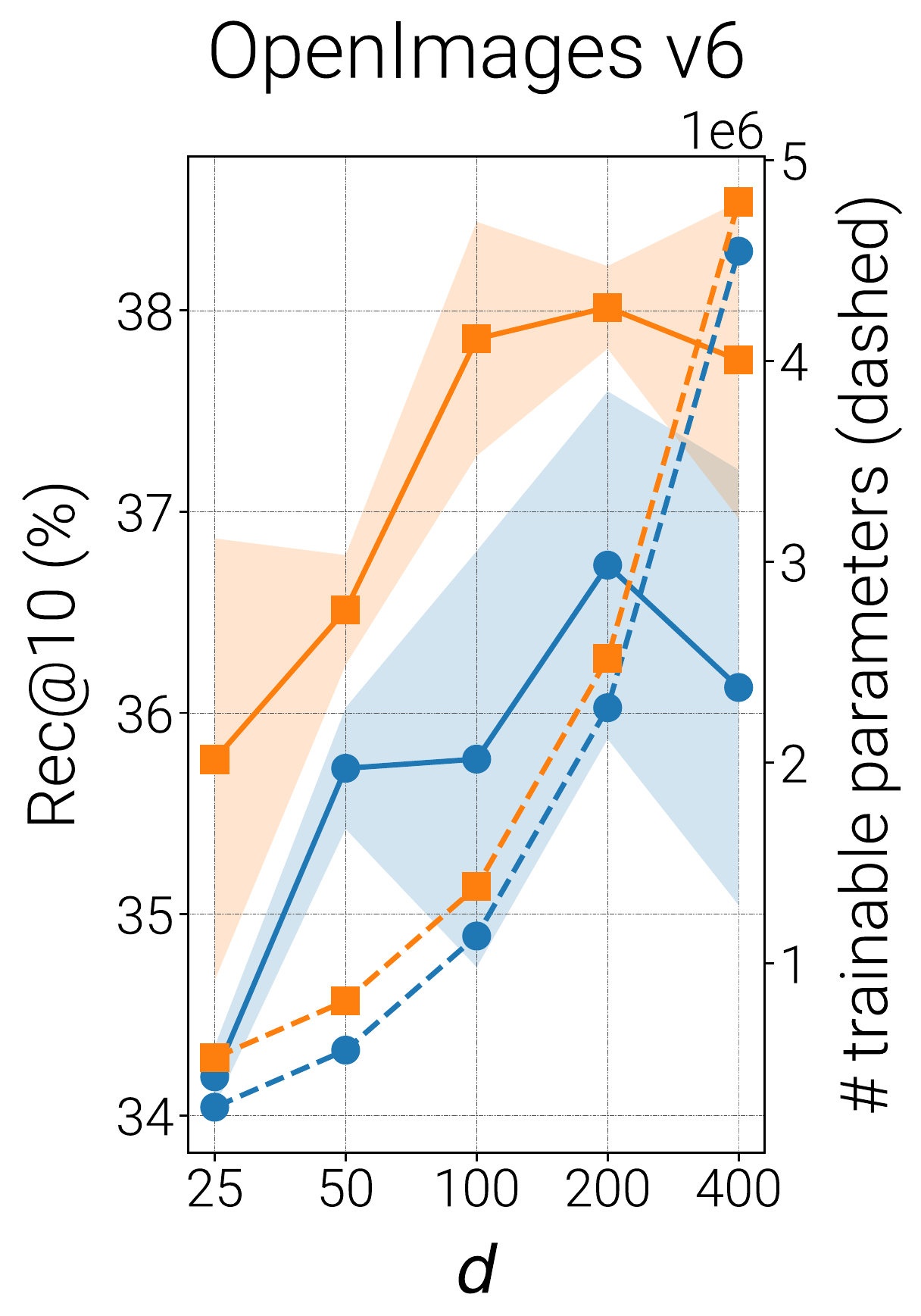}
\end{subfigure}
\caption{Test set Recall@k across datasets.}
\end{figure}

\begin{figure}[h!]
\begin{subfigure}{.32\columnwidth}
\centering
\includegraphics[width=\linewidth]{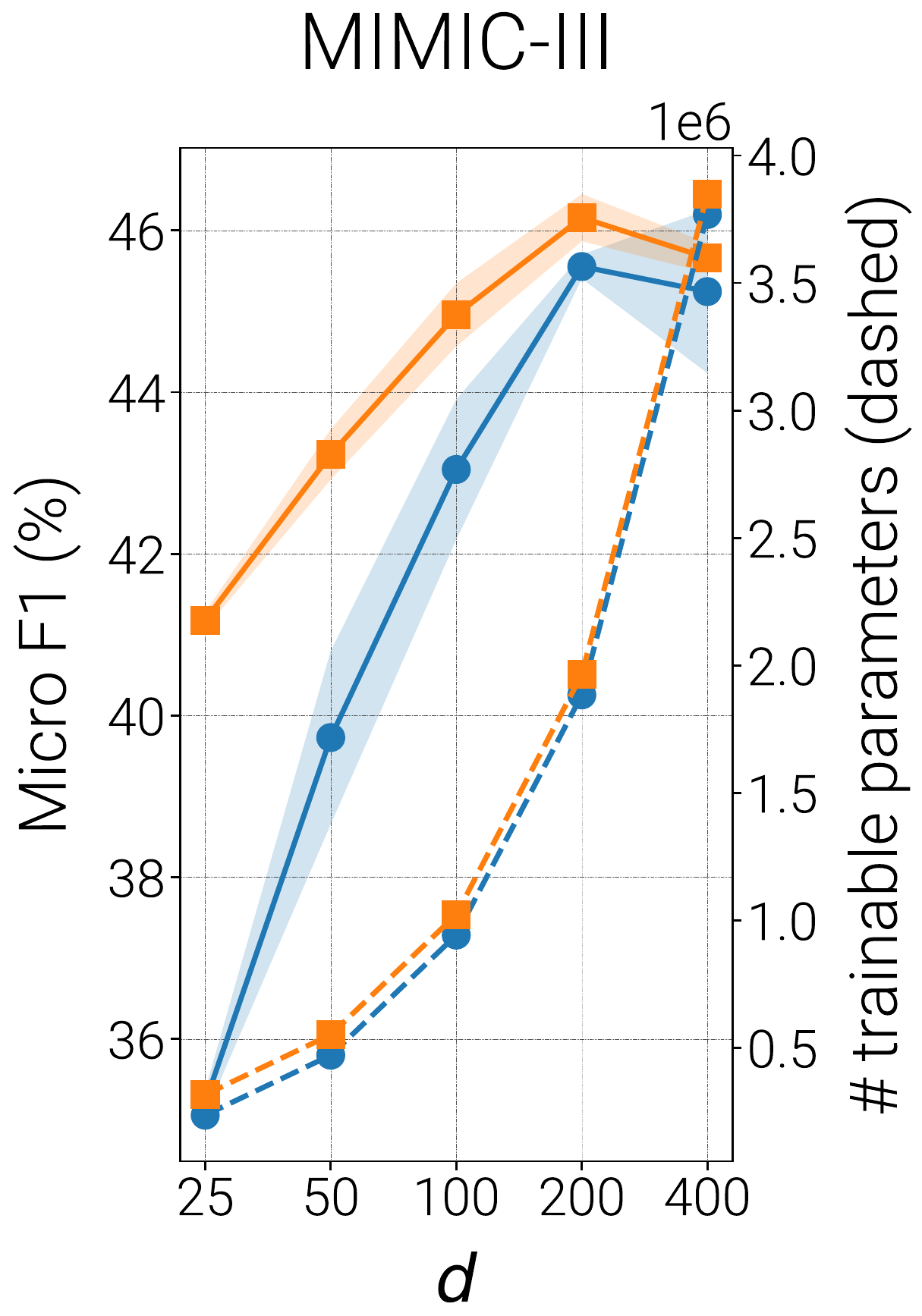}
\end{subfigure}
\hfill
\begin{subfigure}{.32\columnwidth}
\centering
\includegraphics[width=\linewidth]{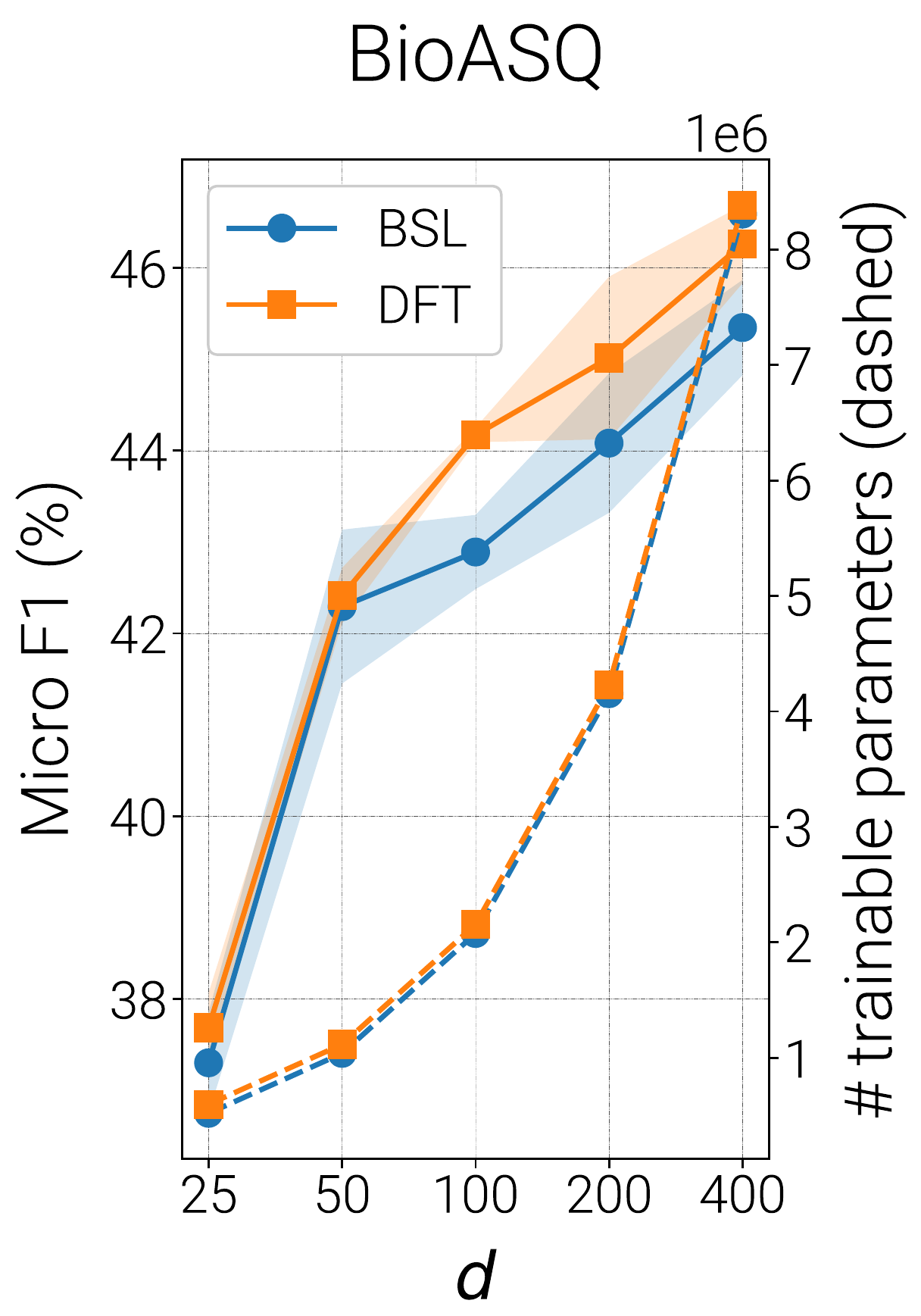}
\end{subfigure}
\hfill
\begin{subfigure}{.32\columnwidth}
\centering
\includegraphics[width=\linewidth]{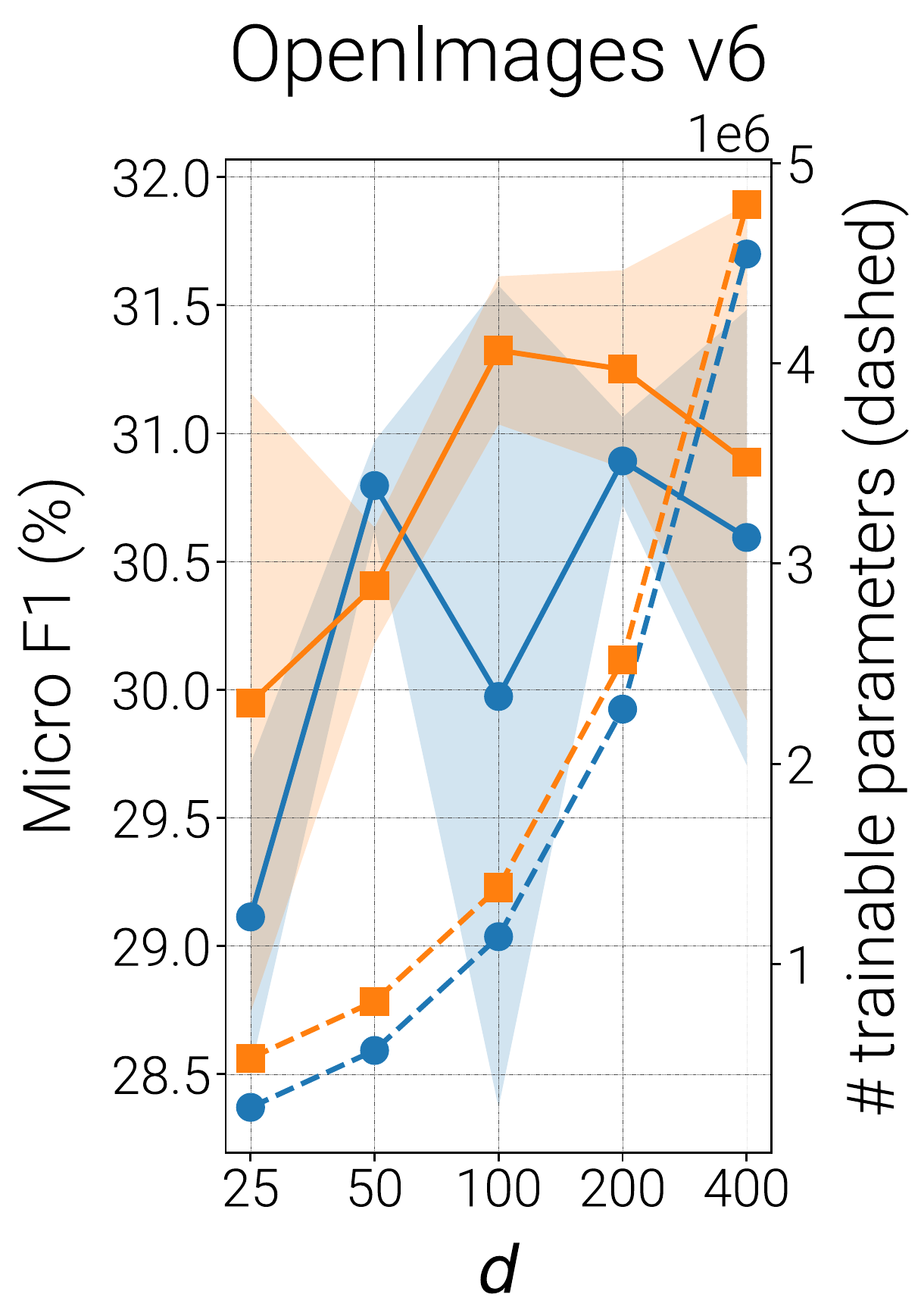}
\end{subfigure}
\caption{Test set Micro F1 across datasets.}
\end{figure}

\begin{figure}[h!]
\begin{subfigure}{.32\columnwidth}
\centering
\includegraphics[width=\linewidth]{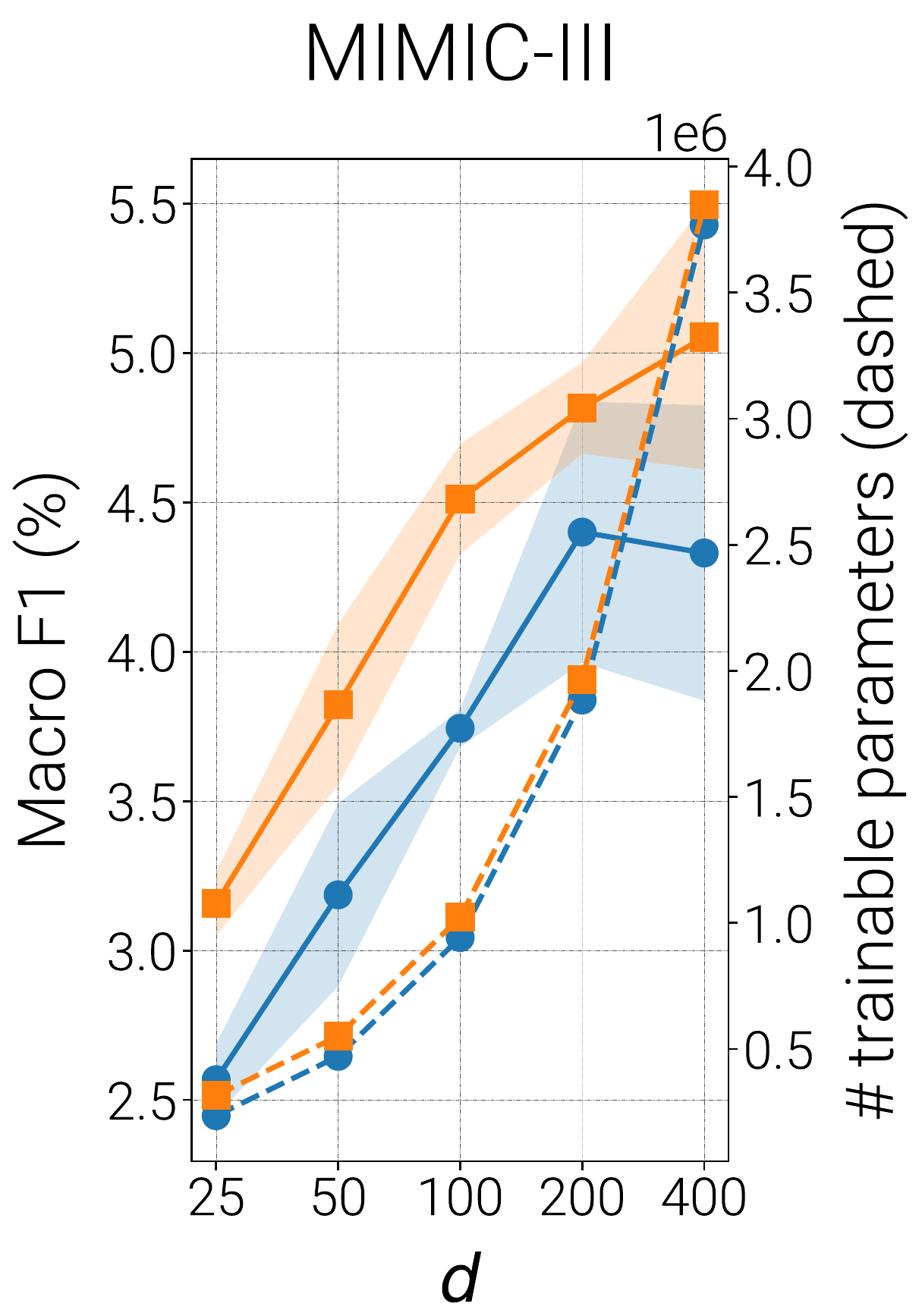}
\end{subfigure}
\hfill
\begin{subfigure}{.32\columnwidth}
\centering
\includegraphics[width=\linewidth]{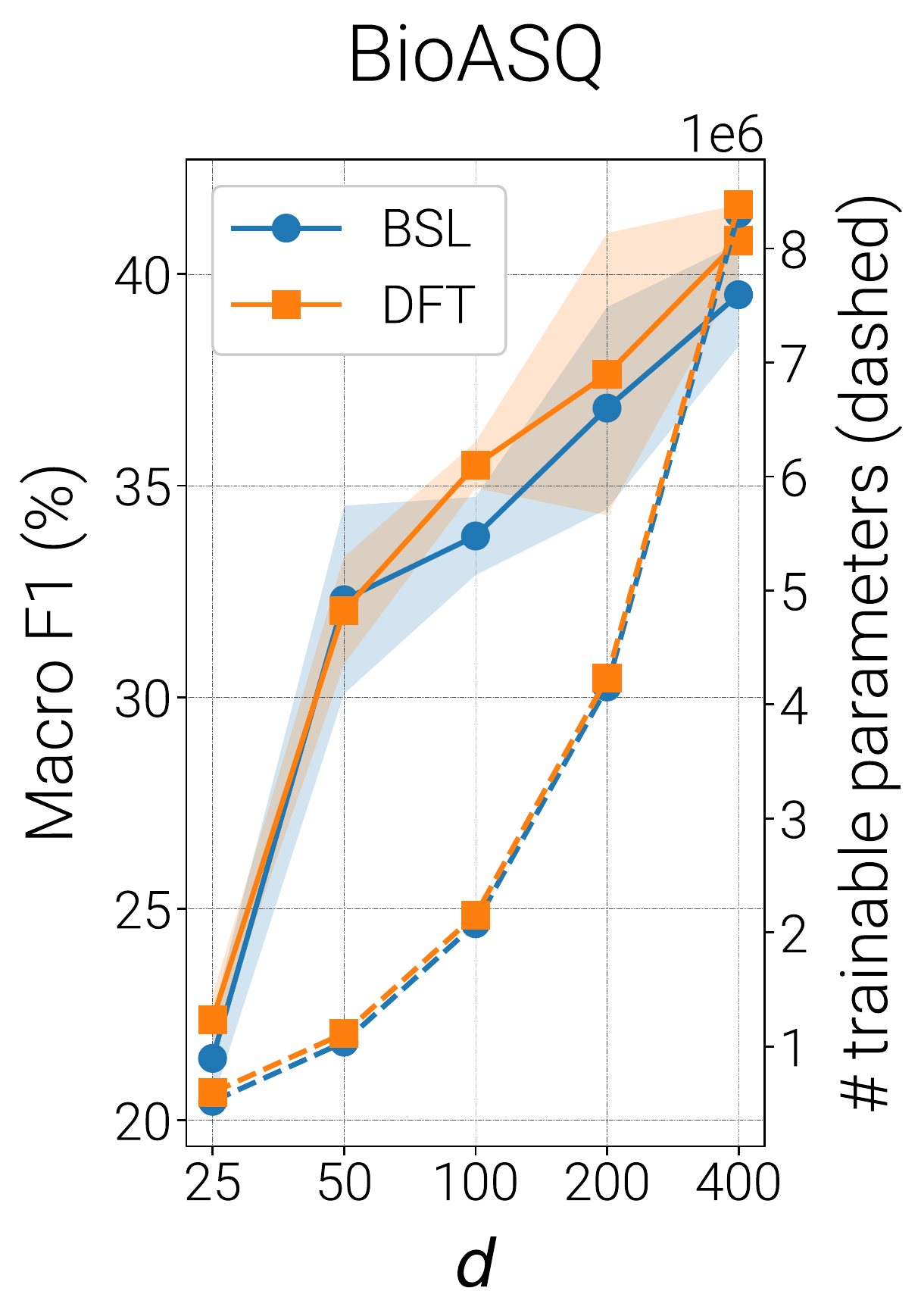}
\end{subfigure}
\hfill
\begin{subfigure}{.32\columnwidth}
\centering
\includegraphics[width=\linewidth]{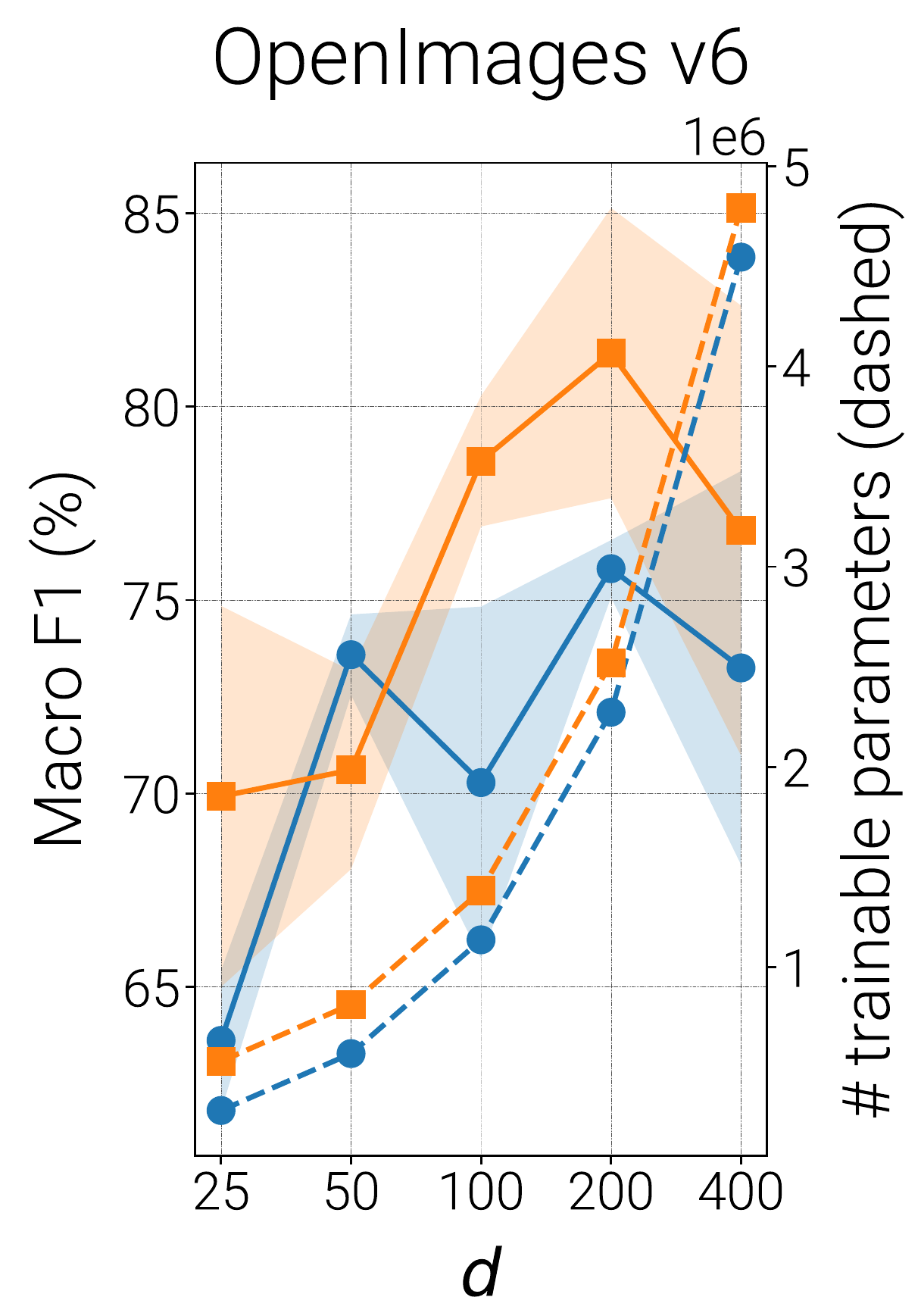}
\end{subfigure}
\caption{Test set Macro F1 across datasets.}
\end{figure}

\begin{figure}[h!]
\hfill
\begin{subfigure}{.33\columnwidth}
\includegraphics[width=\linewidth]{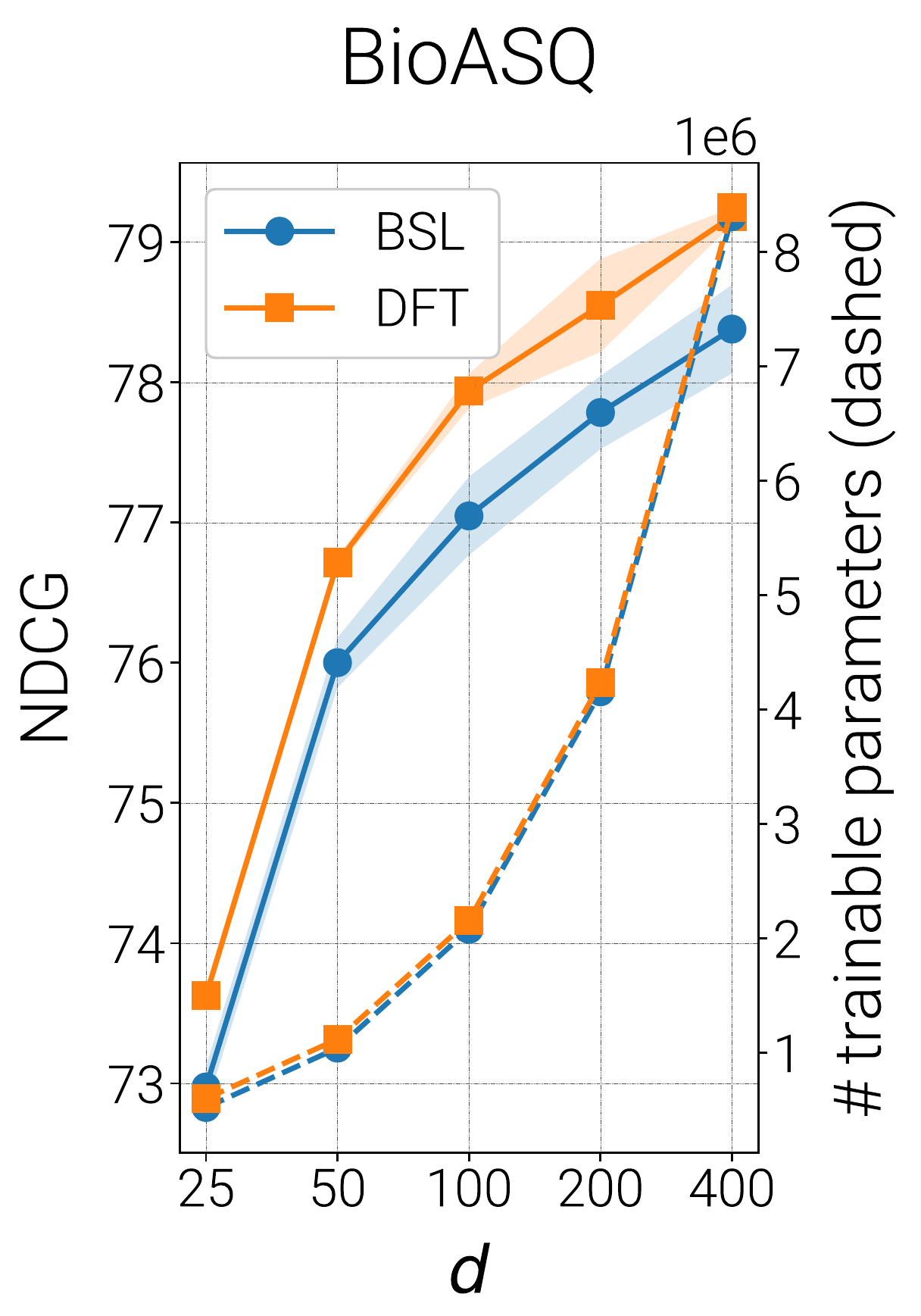}
\end{subfigure}
\begin{subfigure}{.33\columnwidth}
\includegraphics[width=\linewidth]{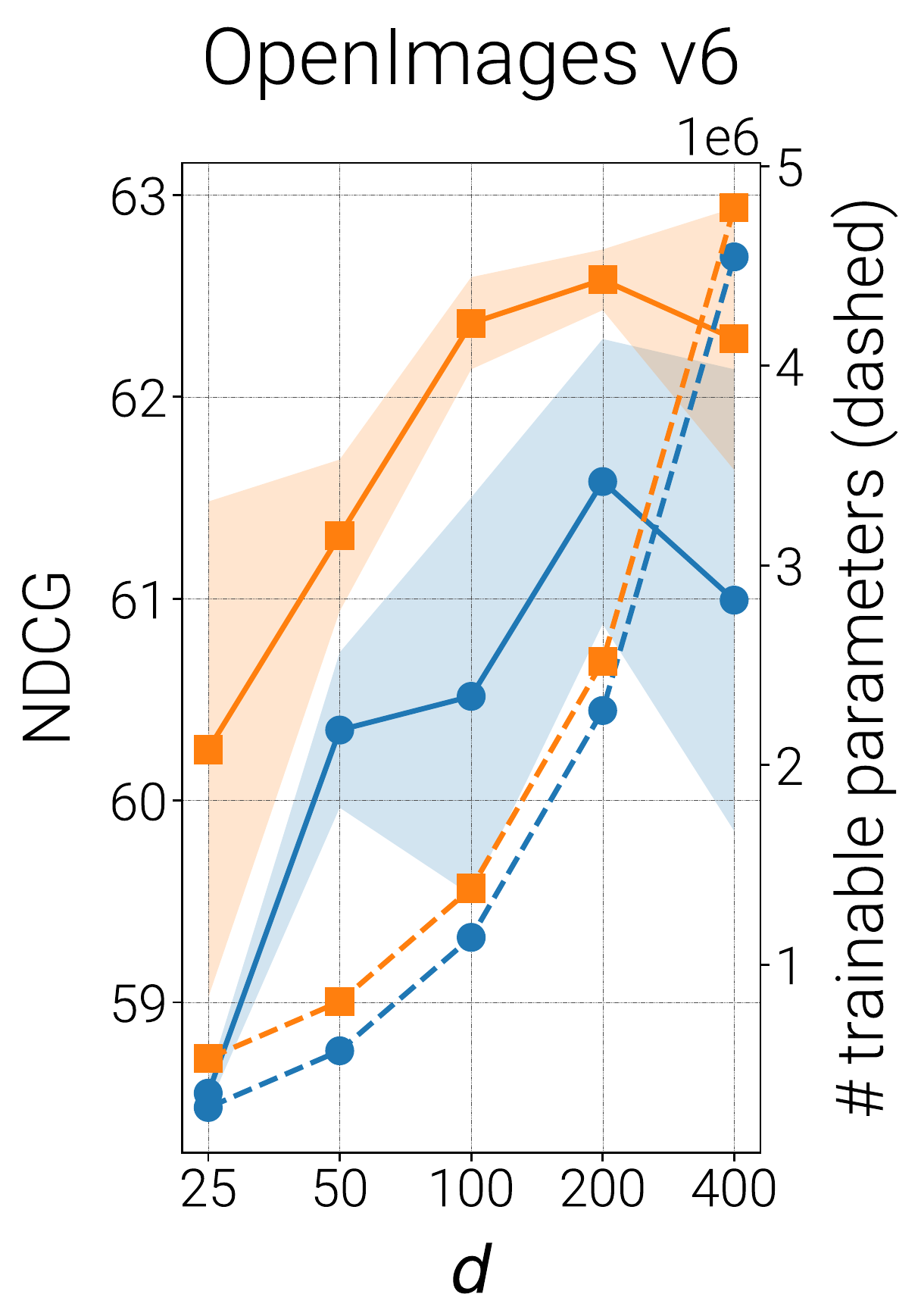}
\end{subfigure}
\caption{Test set nDCG across datasets.}
\end{figure}

\FloatBarrier

\section{Derivation of Linear Programme}
\label{app:derivlp}
In order to check whether a label assignment $\vv{y}$ is argmaxable, we need to check whether there exists an input $\vv{x}$ which can be assigned such a $\vv{y}$. For this to be possible, there must be an input $\vv{x}$ for which the dot product with the corresponding binary classifier $\vv{w}^{(i)}$ agrees in sign with $\vv{y}_i$;  i.e. ${\vv{w}^{(i)}}^\T \vv{x} > 0$ for $\vv{y}_i = +$ and ${\vv{w}^{(i)}}^\T \vv{x} < 0$ for $\vv{y}_i = -$. From this perspective, each $\vv{w}^{(i)}$ and the corresponding sign of the label $\vv{y}_i$ define a halfspace and we
are checking if the intersection of halfspaces exists (argmaxable $\vv{y}$) or not (unargmaxable $\vv{y}$).

\subsection{Halfspace Constraints}
Below we derive the constraints we want to encode for the Linear Programme. More precisely, if the intersection of halfspaces exists we also want to find the largest margin $\epsilon \snorm{\vv{w}^{(i)}}$ for which this is true (Chebyshev LP). In the case $\vv{y}_i = +$ we want the dot product to be positive even if we subtract the margin $\epsilon \snorm{\vv{w}^{(i)}}$. With the same motivation for $\vv{y}_i = -$, we get the constraints $LP(\vv{y}_i)$:

\begin{equation}
LP(\vv{y}_i) =
\begin{cases}
{\vv{w}^{(i)}}^\T \vv{x} - \epsilon \snorm{\vv{w}^{(i)}} \geq 0 & \text{for }\vv{y}_i = +  \\
{\vv{w}^{(i)}}^\T \vv{x} + \epsilon \snorm{\vv{w}^{(i)}} \leq 0 & \text{for }\vv{y}_i = - 
\end{cases}
\end{equation}

We can rewrite the $\vv{y}_i = +$ case by multiplying by $-1$:

\begin{align}
{\vv{w}^{(i)}}^\T \vv{x} - \epsilon \snorm{\vv{w}^{(i)}} \geq 0& \implies \\
-{\vv{w}^{(i)}}^\T \vv{x} + \epsilon \snorm{\vv{w}^{(i)}} \leq 0 
\end{align}

and succinctly combine both cases:
\begin{equation}
-\vv{y}_i {\vv{w}^{(i)}}^\T \vv{x} + \epsilon \snorm{\vv{w}^{(i)}} \leq 0 
\end{equation}

where we abuse notation and assume $\vv{y}_i$ takes values $+1$ and $-1$ correspondingly.

\subsection{Box Constraints}
In order for the Chebyshev center to be defined, we need to bound the magnitude of each dimension of $\vv{x}$. As such, we assume the activations $\vv{x}$ are independently bounded to have magnitude less than $10^4$, i.e. we have box constraints:

\begin{equation}
-10^4 \leq \vv{x}_{j} \leq 10^4, \quad 1 \leq j \leq d
\end{equation}

\subsection{LP Sensitivity}

In theory, the constraint for the margin of the Chebyshev LP is that $\epsilon$ must be positive. However, Gurobi has a sensitivity limit of $10^{-9}$, so we set $\text{eps}=10^{-8}$ and obtain:

\begin{equation}
\epsilon > \text{eps}
\end{equation}

\subsection{Summary}
We combine all the above to get the optimisation problem:

\begin{alignat}{2}
& \text{maximise} & \quad &  \epsilon  \\
& \text{subject to} &  & -\vv{y}_i {\vv{w}^{(i)}}^\T \vv{x} + \epsilon \Vert \vv{w}^{(i)} \Vert_2 \leq 0, \quad  1 \leq i \leq n, \nonumber \\
& & & -10^4 \leq \vv{x}_{j} \leq 10^4, \quad 1 \leq j \leq d, \quad
 \epsilon > \text{eps} \nonumber
\end{alignat}

\section{Sign Rank and Argmaxability}
\label{app:signrank}
In~\cref{sec:relatedwork}, we briefly discussed the sign rank of a matrix but did not have space to go into details. Herein we clarify the connection between the sign rank of a matrix and argmaxability, as we defined it in the paper.

Consider a sign matrix $\vv{S}=\{+1, -1\}^{N \times n}$ where we stack $N$ sign vectors (label assignments) in the rows of $\vv{S}$. The sign rank of $\vv{S}$ is the smallest rank a matrix $\vv{M} \in \R^{N \times n}$ can have such that we can still reconstruct $\vv{S}$ by applying the sign function element-wise to $\vv{M}$.
\begin{equation}
\signrank{\vv{S}} = \min\{\rank{\vv{M}}: \sign{\vv{M}} = \vv{S}\}
\end{equation}
In argmaxability terms, the sign rank, $r=\signrank{\vv{S}}$, is the smallest dimensionality the feature vectors can have such that there exists a linear classifier for which all label assignments in the rows of $\vv{S}$ are argmaxable. To see this, factorise $\vv{M}$ into $\vv{M}=\left(\vv{WX}\right)^\top$, where $\vv{W} \in \R^{n \times d},\, \vv{X} \in \R^{d \times N}$. Interpret $\vv{W}$ as the parametrisation of the BSL and $\vv{X}$ as the feature vectors for each input, stacked as columns of $\vv{X}$. If all entries of $\vv{S}$ are argmaxable, then the sign rank of $\vv{S}$ is at least $d$, and we have a constructive proof of this fact.

An interesting direction for future work is to explore the sign rank literature for such constructive low-rank factorisations that guarantee argmaxability for label assignments of interest. Experiments in~\citet{Neumann2016} and \citet{Chanpuriya2020} show that in practice many datasets have small sign rank, but for MLC we also care about generalisation and robustness ($\epsilon$-argmaxability).

\section{Illustration of Argmaxable Regions}
\label{app:3d}
In the figures on the next page, we provide an illustration of the structure of the label assignments for $\dft{6}{3}$. In~\cref{fig:hasse}, we illustrate the subset of label assignments that are argmaxable.  We represent each label assignment as a sign vector where $+$ denotes an active label and $-$ an inactive one. We color the node green if the corresponding label assignment is argmaxable. We can inspect the consequences of~\cref{thm:card} visually, the $1$-active label assignments are argmaxable. In~\cref{fig:balls}, we illustrate the geometric realisation of $\dft{6}{3}$, where each row of the matrix is a normal vector to a hyperplane. To simplify the plot, instead of plotting the hyperplanes, we plot the Chebyshev balls that correspond to each region. Lastly, in~\cref{fig:hypes} we show the hyperplanes to make it clearer how the balls in the regions are constructed. For an interactive visualisation see {\footnotesize \url{https://grv.unargmaxable.ai/static/files/alternating/index.html}}.
\begin{figure*}[!t]
    \centering
    \begin{subfigure}[t]{0.95\columnwidth}
    \centering
    \includegraphics[width=.95\linewidth]{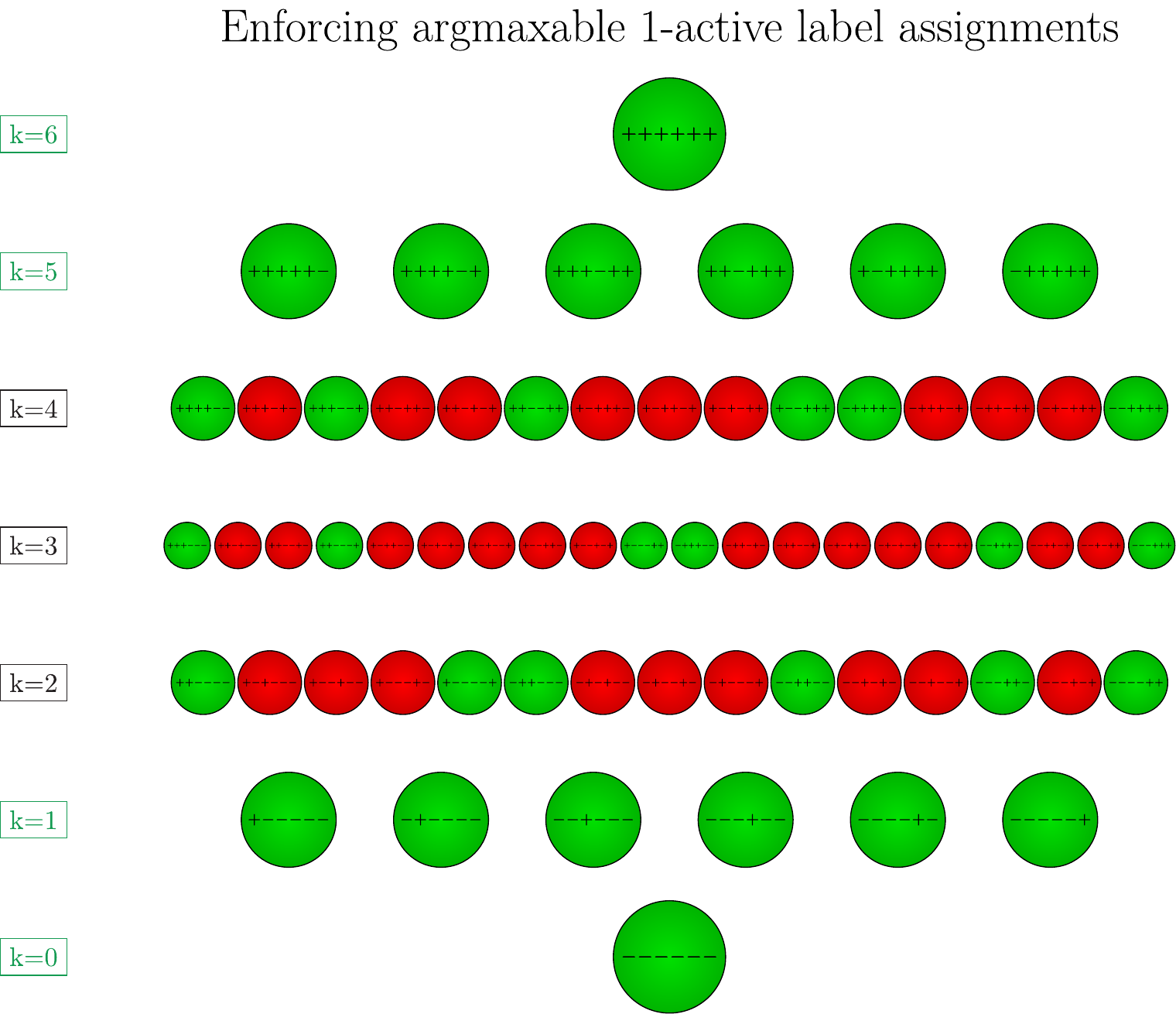}
    \subcaption{Visual check that for $\vv{W}=\dft{6}{3}$ all $2$-alternating, and hence all $1$-active labels are argmaxable (see subplot d). Each node corresponds to a label assignment, and it is green if it is argmaxable and red if not. We order the labels in levels in terms of $k$, the number of active labels. As can be seen, the levels for $k=0$ and $k=1$ have only green nodes, with unargmaxable label assignments first occurring for $k=2$.}
    \label{fig:hasse}
    \end{subfigure}
    \hfill
    \begin{subfigure}[t]{0.95\columnwidth}
    \centering
    \includegraphics[width=.9\linewidth]{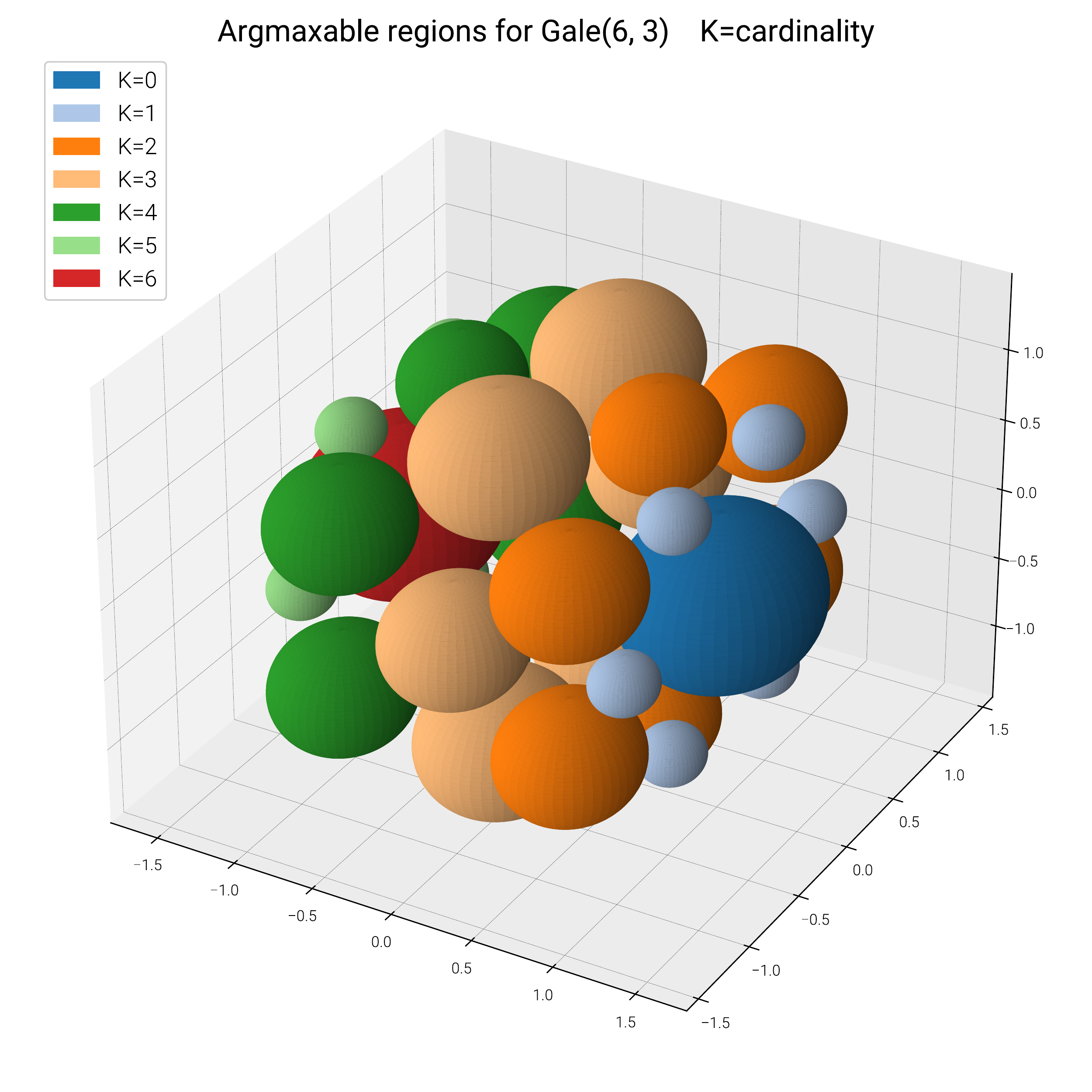}
    \subcaption{Geometric realisation of $\vv{W} = \dft{6}{3}$ in 3D space. The 6 hyperplanes defined by the rows of $\vv{W}$ tesselate 3D space into $32$ regions. For the illustration, we drop the hyperplanes and only plot the Chebyshev regions: each ball is the largest ball that fits in the corresponding region, as discovered by the Chebyshev LP. Each argmaxable label assignment from the plot on the left has a corresponding ball. The six 1-active label assignments from above are the light blue balls, arranged like petals.}
    \label{fig:balls}
    \end{subfigure}
    \\
    \vspace{1cm}
    \begin{subfigure}[t]{0.95\columnwidth}
    \centering
    \includegraphics[width=.95\linewidth]{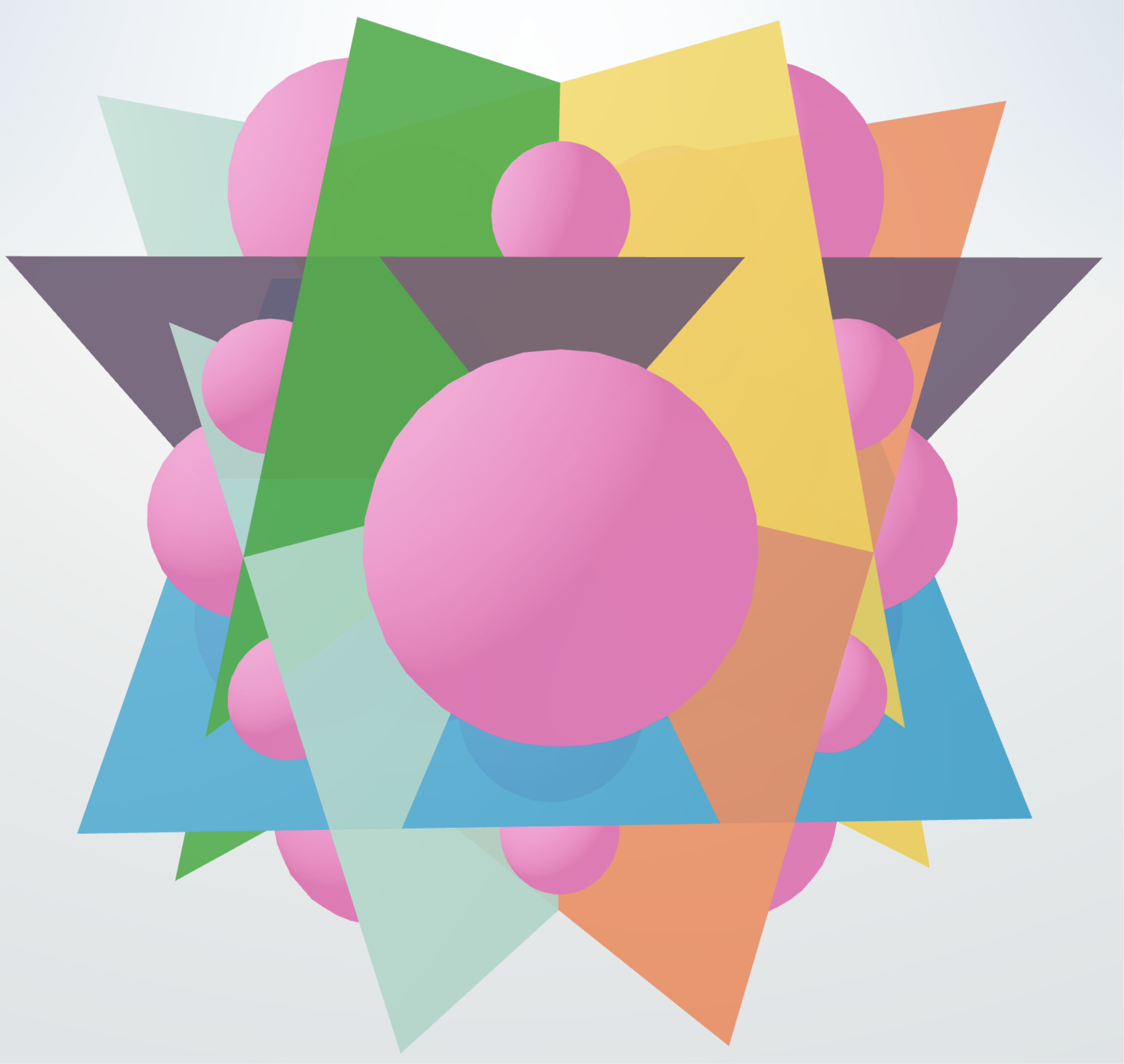}
    \subcaption{Same plot as above right but with hyperplanes drawn. The orientation is different such that the all $-$ region is in the center (blue ball for above right).}
    \label{fig:hypes}
    \end{subfigure}
    \hfill
    \begin{subfigure}[t]{0.95\columnwidth}
    \centering
    \includegraphics[width=.95\linewidth]{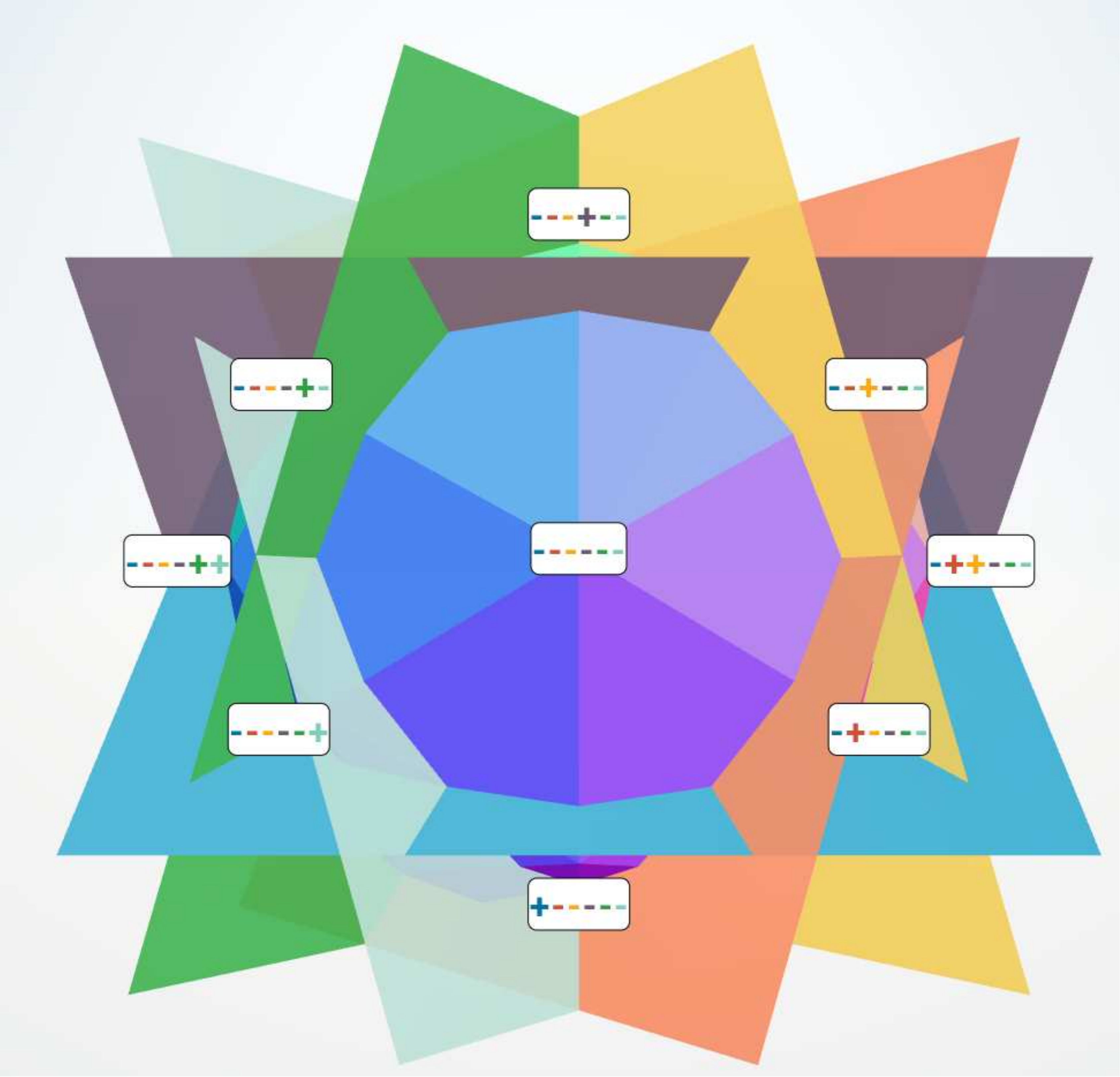}
    \subcaption{Same plot as on the left but showing sign vectors instead of balls. As can be seen, all 1-active label assignments are argmaxable, they surround the all $-$ vector.}
    \label{fig:hypeg}
    \end{subfigure}
\end{figure*}

\end{document}